\documentclass[final,12pt,oneside,cleveref]{jmlr}

\usepackage{mathtools}
\usepackage{mathrsfs}
\usepackage{bm}
\usepackage{dsfont}
\usepackage{booktabs}
\usepackage{array}
\usepackage{microtype}
\usepackage{wrapfig}
\usepackage{tikz}
\usepackage{algorithm}
\usepackage{algorithmic}

\newcommand{\R}{\mathbb{R}}
\newcommand{\w}{\mathbf{w}}
\renewcommand{\u}{\mathbf{u}}
\newcommand{\x}{\mathbf{x}}
\newcommand{\p}{\mathbf{p}}
\newcommand{\q}{\mathbf{q}}
\newcommand{\z}{\mathbf{z}}
\newcommand{\e}{\mathbf{e}}
\newcommand{\g}{\mathbf{g}}
\newcommand{\alphab}{\bm{\alpha}}
\newcommand{\W}{\mathcal{W}}
\newcommand{\U}{\mathcal{U}}
\newcommand{\Y}{\mathcal{Y}}
\newcommand{\E}{\mathbb{E}}
\newcommand{\Ical}{\mathcal{I}}
\newcommand{\KL}{\mathrm{KL}}
\newcommand{\Zb}{\mathbf{Z}}
\newcommand{\DReg}{\mathrm{D}\text{-}\mathrm{Reg}}
\newcommand{\indicator}{\mathds{1}}
\renewcommand{\O}{\mathcal{O}}
\DeclarePairedDelimiter{\norm}{\lVert}{\rVert}
\DeclareMathOperator*{\argmin}{arg\,min}
\DeclareMathOperator*{\argmax}{arg\,max}

\let\originalproof\proof
\let\endoriginalproof\endproof
\renewenvironment{proof}[1][Proof]
  {\renewcommand{\proofname}{#1}\originalproof\ignorespaces}
  {\endoriginalproof}

\newenvironment{proofsketch}[1][Proof Sketch]
  {\par\noindent\textbf{#1.}\ \ignorespaces}
  {\hfill$\square$\par}

\newtheorem{assumption}{Assumption}
\crefname{assumption}{Assumption}{Assumptions}
\Crefname{assumption}{Assumption}{Assumptions}
\newaliascnt{property}{theorem}
\newtheorem{property}[property]{Property}
\aliascntresetthe{property}
\crefname{property}{property}{properties}
\Crefname{property}{Property}{Properties}

\title[Universal Dynamic Portfolios]{Universal Dynamic Portfolios}

\author[Zhang et al.]{\Name{Yu-Jie Zhang} \Email{yujiez7@cs.washington.edu}\\
  \addr University of Washington
  \AND
  \Name{Yu-Xiang Wang} \Email{yuxiangw@ucsd.edu}\\
  \addr University of California, San Diego
  \AND
  \Name{Peng Zhao} \Email{zhaop@lamda.nju.edu.cn}\\
  \addr State Key Laboratory for Novel Software Technology, Nanjing University\\
  \addr School of Artificial Intelligence, Nanjing University
  \AND
  \Name{Kevin Jamieson} \Email{jamieson@cs.washington.edu}\\
  \addr University of Washington
}

\jmlrproceedings{}{}
\jmlrpages{}

\begin{document}

\maketitle

\begin{abstract}
  Cover's Universal Portfolio~\citep{91:Cover-universal-portfolios} matches the performance of the best constant rebalanced portfolio in hindsight. We generalize this framework to compete with an arbitrary comparator sequence $\u_1,\ldots,\u_T$, leading to a dynamic regret minimization problem for the log loss where existing methods break down due to potentially unbounded gradients. The log loss is exp-concave, a curvature property that classically yields fast rates for static regret, yet we show that this advantage generally disappears in the dynamic setting. In particular, a linear-loss-type $\sqrt{TP_T}$ dependence is unavoidable, where $P_T=\sum_{t=2}^T\lVert\u_t-\u_{t-1}\rVert_1$ is the standard path length. This limitation stems from the coarse nature of $P_T$, which obscures finer spatial and temporal structure of the comparator sequence. We therefore introduce two structure-aware measures---the Jensen--Shannon distance for spatial structure and the JS$^q$-path length for temporal structure---under which faster rates are attainable when the comparator sequence has favorable structure. To achieve sharp bounds for both measures simultaneously, we develop Universal Dynamic Portfolio, a parameter-free method that combines a new Dirichlet Hedge algorithm with a fixed-share update, while retaining a near-optimal $P_T$ guarantee in the worst case. Finally, under an additional bounded-gradient assumption, we show that OPS admits the faster $T^{1/3}P_T^{2/3}$ dynamic regret rate over all comparator sequences. We attain this rate with a tractable proper algorithm that applies more broadly to general online exp-concave optimization over arbitrary compact convex domains.

\end{abstract}

\section{Introduction}
\label{sec:intro}
Online portfolio selection (OPS), which studies how to sequentially allocate wealth among a set of assets to maximize cumulative returns, is a textbook motivating example in online learning~\citep{book'16:Hazan-OCO}. \citet{91:Cover-universal-portfolios} introduced the problem as a distribution-free model for sequential investment and proposed the seminal Universal Portfolio algorithm. Since then, OPS has attracted substantial interest from the online learning community because it can be formulated as an online convex optimization problem with Cover's logarithmic loss. The rich curvature of this loss can be exploited to obtain fast learning guarantees, making OPS one of the central testbeds for understanding the role of loss curvature in both algorithm design and regret analysis~\citep{ICML'07:ONS-OPS,COLT'20:Open-problem-OPS,NeurIPS'18:log-barrier-portfolio,COLT'22:Portfolio-BISONS,COLT'22:Portfolio-AdaMix-DONS,MOR'25:VB-FTRL}.

Most existing OPS studies focus on static regret, which compares the learner's cumulative loss with that of the best single portfolio in hindsight. In OPS, this benchmark is known as the best constant rebalanced portfolio (CRP), which allocates wealth according to fixed proportions. Many methods, including the classical Universal Portfolio algorithm, achieve the minimax-optimal $\O(d\log T)$ static regret~\citep{91:Cover-universal-portfolios,MOR'98:cost-best-portfolio}. In its dependence on $T$, this logarithmic rate improves upon the $\sqrt{T}$ rate typical of general convex losses with bounded gradients. However, in a continuously evolving and possibly adversarial market, such a constant comparator may be restrictive because it cannot adapt to market changes. This motivates us to extend the OPS framework to allow time-varying comparators.

This non-stationary extension is naturally captured by the notion of dynamic regret in online convex optimization. To formalize this objective, let $\ell_t(\w)=-\ln(\w^\top\x_t)$ denote Cover's loss, where $\w\in\Delta_d$ is the learner's portfolio and $\x_t\in\R_+^d$ is the market return. The dynamic regret~\citep{JMLR'01:Herbster,ICML'03:zinkvich,NIPS'18:Zhang-Ader} measures the gap between the learner's cumulative loss and that of a time-varying comparator sequence $\{\u_t\}_{t=1}^T$ by
\begin{equation}
  \DReg_T(\{\u_t\}_{t=1}^T) = \sum_{t=1}^T \ell_t(\w_t) - \sum_{t=1}^T \ell_t(\u_t). \label{eq:dynamic-regret}
\end{equation}
The above measure is often referred to as \emph{universal} dynamic regret because it seeks guarantees that hold uniformly over all comparator sequences and adapt to their complexity. Dynamic regret reduces to the classical notion of (static) regret by setting $\u_t = \w_* = \arg\min_{\w\in\Delta_d}\sum_{t=1}^T \ell_t(\w)$. Meanwhile, it offers greater flexibility by allowing the comparator sequence ${\u_t}$ to adapt to the underlying environment, rather than being tied to a single realized return. For instance, when the market return $\x_t$ is sampled from a time-varying distribution $\mathcal{D}_t$, a natural choice is $\u_t = \arg\min_{\w \in \Delta_d} \mathbb{E}_{\x_t \sim \mathcal{D}_t}\!\left[\ell_t(\w)\right].$ Compared with the minimizer $\w_t^\star = \arg\min_{\w\in\Delta_d} \ell_t(\w)$ at each round, this avoids chasing noise from a single observation.

\subsection{Related Work and Research Question}
Although non-stationary online learning has been extensively studied over the past decades~\citep{journals/ml/HerbsterW98,ICML'09:Hazan-adaptive,NeurIPS'12:fixed-share-OMD,ICML'16:GyorgyS-shiftregret,NIPS'18:Zhang-Ader,NeurIPS'20:sword,JMLR'21:BCO,COLT'21:Master,NeurIPS'23:covariate_shift,ICML'24:wavelet,JMLR'24:Sword++,JMLR'25:efficient,NeurIPS'25:Jacobsen-kernelized}, dynamic regret for OPS remains surprisingly underexplored. The only closely related result is due to~\citet{97:singer-switching}, who proposed a method that competes with comparators switching among $N$ fixed portfolios. This guarantee covers only a restricted form of dynamic regret because $\u_t$ is confined to a finite set and cannot evolve continuously over time.

The most well-developed results on dynamic regret minimization in the online convex optimization literature measure nonstationarity through the path length
\begin{equation*}
  P_T = \sum_{t=2}^T \norm{\u_t - \u_{t-1}}_1,
\end{equation*}
which quantifies the cumulative variation of the comparator sequence. For general convex losses with bounded gradients, Ader~\citep{NIPS'18:Zhang-Ader} achieves the $\O(G\sqrt{T(1+P_T)})$ dynamic-regret guarantee, where $G$ is the upper bound on the gradient norm. Since Cover's logarithmic loss can have unbounded gradients, this result does not apply directly to OPS, leaving unresolved even the attainability of the canonical $\sqrt{TP_T}$-type dynamic-regret guarantee without a gradient bound.

More importantly, Cover's logarithmic loss is known to be exp-concave, a curvature property that yields logarithmic regret against a static benchmark. This rate is substantially faster than the $\sqrt{T}$-type static regret typical of general convex losses, motivating us to seek an analogous acceleration for dynamic comparators. The literature offers a partial clue. Under bounded gradients, \citet{COLT'21:baby-strong-convex} and \citet{ICML'25:Zhang-mixability} establish that exp-concavity is also beneficial in the dynamic setting, improving the dynamic regret to $T^{1/3}P_T^{2/3}$. Neither result, however, resolves the case of OPS. The bounded-gradient condition is restrictive for Cover's loss, and even under this condition, the algorithms are not compatible with the geometry of portfolio selection: the method of \citet{COLT'21:baby-strong-convex} may predict outside the simplex, whereas that of \citet{ICML'25:Zhang-mixability} requires a projection over distributions that is computationally prohibitive. Taken together, these gaps lead us to ask:
\begin{center}
  \emph{What is the achievable dynamic regret rate for OPS?}
\end{center}

In fact, we believe resolving this question would also clarify the role of loss curvature in dynamic regret minimization for non-stationary online learning.

\begin{table}[!t]
  \centering
  \caption{Summary of dynamic regret bounds under different complexity measures. Here, $J_t=\sqrt{(\mathrm{KL}(\u_t\Vert\bar{\u}_t)+\mathrm{KL}(\u_{t-1}\Vert\bar{\u}_t))/2}$ denotes the Jensen--Shannon (JS) distance between consecutive comparators, where $\bar{\u}_t=(\u_{t-1}+\u_t)/2$ is their midpoint. The notation $\widetilde{\O}(\cdot)$ hides logarithmic factors.}
  \label{tab:dynamic-ops-summary}
  \scriptsize
  \setlength{\tabcolsep}{2pt}
  \renewcommand{\arraystretch}{1.45}
  \resizebox{\linewidth}{!}{\begin{tabular}{ccc}
    \toprule
    \textbf{Measure} & \textbf{Upper bounds} & \textbf{Lower bounds} \\
    \midrule
    \shortstack[c]{{$L_1$-path length}\\
      $P_T=\sum_{t=2}^T \Vert \u_t-\u_{t-1}\Vert_1$}
    & \shortstack[c]{$\widetilde{\O}\big(d+\sqrt{dTP_T}\big)$\\
      (Section~\ref{subsec:path-length})}
    & \shortstack[c]{$\Omega\big(\max\{d\log T,\min\{T,\sqrt{dTP_T}\}\}\big)$\\
      (Section~\ref{subsec:path-length})} \\
    \cmidrule(l){1-3}
    \shortstack[c]{JS-path length\\
      $P_T^{\mathrm{JS}}=\sum_{t=2}^T J_t$}
    & \shortstack[c]{$\widetilde{\O}\big(d(1+T^{\frac{1}{3}}(P_T^{\mathrm{JS}})^{\frac{2}{3}})\big)$\\
      (Section~\ref{subsec:jeffreys-path})}
    & \shortstack[c]{$\Omega\big(\max\{d\log T,\min\{T,d^{\frac{2}{3}}T^{\frac{1}{3}}(P_T^{\mathrm{JS}})^{\frac{2}{3}}\}\}\big)$\\
      (Section~\ref{subsec:jeffreys-path})} \\
    \cmidrule(l){1-3}
    \shortstack[c]{JS$^q$-path length\\
      $P_{T,q}^{\mathrm{JS}}=\sum_{t=2}^T J_t^q$, $q\in[0,1]$}
    & \shortstack[c]{$\widetilde{\O}\big(d(1+T^{\frac{q}{q+2}}(P_{T,q}^{\mathrm{JS}})^{\frac{2}{q+2}})\big)$\\
      (Section~\ref{subsec:q-path})}
    & --- \\
    \bottomrule
  \end{tabular}
  }
\end{table}

\subsection{Our Results}
In this paper, we develop algorithms and matching lower bounds that characterize the dynamic regret achievable for OPS against arbitrary comparator sequences. We first settle the minimax rate under the classical $L_1$-path length, showing that the fast rates typical of exp-concave losses are unattainable. This limitation arises because $L_1$-path length ignores where the movement occurs and how it evolves over time, even though both can substantially affect tracking difficulty. This motivates refined guarantees that adapt to the spatial and temporal structure of the comparator sequence. Our main results are summarized in Table~\ref{tab:dynamic-ops-summary} and detailed below.

\begin{itemize}
  \item {\emergencystretch=2em \textbf{Minimax Rate under the Standard Path Length.} We establish an $\Omega\big(\max\big\{d\log T,\sqrt{dTP_T}\}\big)$ lower bound for OPS. This rules out the favorable $T^{1/3}P_T^{2/3}$ dependence uniformly over arbitrary comparator sequences, despite the exp-concavity of Cover's loss. We complement this lower bound with a black-box reduction from interval regret to dynamic regret, showing that any algorithm with an $\O(d\log T)$ interval regret guarantee achieves $\O\big(\max\{d\log T,\sqrt{dTP_T\log T}\}\big)$ dynamic regret,thereby matching the lower bound up to logarithmic factors.\par}

  \item {\emergencystretch=2em \textbf{Spatial Adaptivity through the Jensen-Shannon Distance.} We provide an algorithm that adapts to the spatial structure of the comparator sequence and achieves $\widetilde{\O}\big(d(1+T^{1/3}(P_T^{\mathrm{JS}})^{2/3})\big)$ dynamic regret. Here, $P_T^{\mathrm{JS}}=\sum_{t=2}^T J_t$ is the path length based on the Jensen--Shannon (JS) distance, where $J_t=\sqrt{(\mathrm{KL}(\u_t\Vert\bar{\u}_t)+\mathrm{KL}(\u_{t-1}\Vert\bar{\u}_t))/2}$ and $\bar{\u}_t=(\u_{t-1}+\u_t)/2$. The resulting bound yields a faster rate for interior comparators while recovering the worst-case $T^{1/2}P_T^{1/2}$ dependence for arbitrary comparator sequences. A corresponding lower bound matches its dependence on $T$ and $P_T^{\mathrm{JS}}$, up to logarithmic factors.\par}

  \item \textbf{Temporal Adaptivity through JS$^q$-Path Length.}
  We further show that comparator sequences with the same JS-path length can differ in tracking difficulty because of their temporal structure. We capture this structure using the JS$^q$-path length $P_{T,q}^{\mathrm{JS}}=\sum_{t=2}^T J_t^q$ for $q\in[0,1]$, with the endpoint convention $J_t^0=\indicator\{J_t>0\}$, and establish a $\widetilde{\O}(d+dT^{q/(q+2)}(P_{T,q}^{\mathrm{JS}})^{2/(q+2)})$ dynamic regret bound that holds simultaneously for all $q\in[0,1]$. The endpoint $q=1$ recovers the $T^{1/3}(P_T^{\mathrm{JS}})^{2/3}$ rate, whereas $q=0$ yields $\O(d(1+\mathsf{S}_T)\log(dT))$, where $\mathsf{S}_T$ is the number of comparator switches. Intermediate values of $q$ interpolate between these endpoints, allowing the regret bound to adapt more finely to the temporal structure of comparator variation.
\end{itemize}

We achieve all of the above upper bounds with a single algorithm by equipping Cover's Universal Portfolio algorithm with fixed-share updates. Despite the simplicity of this modification, proving these guarantees requires a novel mixability-based analysis that uses a Dirichlet comparator to accommodate both the unbounded log loss and the simplex constraint. Section~\ref{subsec:q-path-proof-sketch} outlines the main technical ideas.

\paragraph{Toward General OXO.}
Under an additional bounded-gradient assumption, we show that a fast rate of $\widetilde{\O}\bigl(dG(1+T^{1/3}P_T^{2/3})\bigr)$ is attainable for all comparator sequences. Existing methods achieving this rate either require improper learning or lack a computationally tractable implementation~\citep{COLT'21:baby-strong-convex,ICML'25:Zhang-mixability}, whereas our method is both proper and computationally tractable. Beyond OPS, our approach extends to general online exp-concave optimization over arbitrary compact convex domains, providing a tractable affirmative answer to the question raised by~\citet{COLT'21:baby-strong-convex} of whether strongly adaptive methods can achieve optimal dynamic regret in the proper learning setting. We establish this guarantee via a new two-layer mixability argument, which is detailed in Section~\ref{subsection:two-layer-analysis}.

\paragraph{Organization.} The rest of the paper is organized as follows. Section~\ref{sec:problem-setup} introduces the problem setup and additional related work. Section~\ref{sec:complexity-measures} presents minimax-optimal regret bounds and spatially and temporally adaptive guarantees for OPS without a gradient bound. Section~\ref{sec:results-bounded-gradient} develops a computationally tractable proper method for general OXO under bounded gradients. Finally, Section~\ref{sec:conclusion} concludes the paper.

\section{Problem Setup and Related Work}
\label{sec:problem-setup}

\subsection{Notation and Setup}
\label{subsec:notation-setup}

For a positive integer $n$, let $[n]=\{1,\ldots,n\}$. We write
$\R_+^d$ for the nonnegative orthant and  $\Delta_d = \big\{\w\in\R_+^d:\sum_{i=1}^d w_i=1\big\}$ for the $(d-1)$-dimensional probability simplex.

Online portfolio selection proceeds over $T$ rounds of interaction between the learner and the market. At each round $t \in [T]$, the learner starts with wealth $S_{t-1}$ and distributes it across $d$ assets according to a probability vector $\w_t \in \Delta_d$. The market then reveals the non-negative price relative vector $\x_t \in \mathbb{R}_+^d$, where each component $x_{t,i} \geq 0$ represents the relative return of asset $i$. The learner’s wealth is updated as $S_t = S_{t-1}\sum_{i=1}^d w_{t,i}\cdot x_{t,i}$. After $T$ rounds, the learner's wealth is $S_T = S_0 \cdot \prod_{t=1}^T (\w_t^\top \x_t)$. For OPS in non-stationary environments, our goal is to minimize the dynamic regret~\eqref{eq:dynamic-regret} with the log loss $\ell_t(\w) = -\ln (\w^\top\x_t)$:
\begin{align*}
  \sum_{t=1}^T \ell_t(\w_t) - \sum_{t=1}^T \ell_t(\u_t) = -\ln \frac{\prod_{t=1}^T \w_t^\top\x_t}{\prod_{t=1}^T \u_t^\top \x_t},
\end{align*}
which is equivalent to maximizing the logarithmic ratio of the learner’s cumulative wealth to that of the time-varying investment strategy $\u_1,\dots,\u_T\in\Delta_d$.

For the OPS results in Section~\ref{sec:complexity-measures}, we only assume $\max_{i\in[d]}x_{t,i}>0$ for every $t\in[T]$, which merely excludes degenerate rounds where every portfolio incurs infinite loss. This minimal assumption is what makes the problem technically challenging: the log-loss gradients $\nabla\ell_t(\w)=-\x_t/(\w^\top\x_t)$ can be unbounded, since the denominator $\w^\top\x_t$ may approach zero near the boundary. Standard online convex optimization techniques rely on a uniform gradient bound, which is available only under further restrictions such as bounded return ratios or a domain clipped away from the simplex boundary~\citep{98:EG-OPS,ICML'07:ONS-OPS}. Neither restriction is imposed for these results; Section~\ref{sec:results-bounded-gradient} separately considers bounded gradients.

\subsection{Related Work}
\label{subsec:related-work}

\paragraph{Static Regret for OPS.}
Under the nonzero-return condition stated above, Universal Portfolio~\citep{cover1996universal} achieves the minimax-optimal $\O(d\log T)$ static regret. The method requires integrating over the simplex to generate predictions, which can be implemented in polynomial time using log-concave sampling techniques~\citep{JMLR'02:KalaiV}. More recent work develops more efficient algorithms under the same condition~\citep{ALT'17:Soft-bayes,NeurIPS'18:log-barrier-portfolio,COLT'22:Portfolio-BISONS,COLT'22:Portfolio-AdaMix-DONS,MOR'25:VB-FTRL}. Two Pareto-optimal results in terms of regret and computational efficiency are VB-FTRL~\citep{MOR'25:VB-FTRL}, which achieves
$\O(d\log T)$ regret with $\O(d^2T)$ computational cost per round, and AdaMix+DONS~\citep{COLT'22:Portfolio-AdaMix-DONS}, which attains an $\O(d^2\log^5 T)$ regret bound with $\O(d^3\log^2 T)$ per-round complexity. With bounded gradients, Exponential Gradient~\citep{98:EG-OPS} achieves
$\O(G\sqrt{T})$ regret, while Online Newton Step~\citep{ICML'07:ONS-OPS} attains $\O(dG\log T)$ regret with $\O(d^3)$ cost per round.

\paragraph{Dynamic Regret for Curved Losses.}
There are two lines of research that achieve an $\O(G\max\{\log T, T^{1/3}P_T^{2/3}\})$ dynamic regret for non-stationary OXO~\citep{COLT'21:baby-strong-convex,ICML'25:Zhang-mixability}. Since these methods are primarily designed for general OXO purposes, their regret bounds typically scale with a bound on the gradient norm. However, setting aside the gradient-bound issue, these results still do not directly apply to OPS due to the restrictions imposed by domain constraints.

\begin{itemize}
\item\emph{Reduction-based analysis.} An important research line for non-stationary OXO starts from~\citet{COLT'21:baby-strong-convex} and is followed by~\citet{AISTATS'22:sc-proper,NeurIPS'22:Baby-LQR}. Under certain domain conditions, they provide a reduction from the interval regret bound~\citep{ICML'09:Hazan-adaptive}, which guarantees a static regret bound on each interval, to a fast-rate dynamic regret bound. A key component of their analysis is a precise characterization of the optimal time-varying sequence $\{\u_t^*\}_{t=1}^T$ via KKT conditions and shows that $\{\u_t^*\}_{t=1}^T$ can be tracked by a piecewise-stationary sequence with $M = \O(T^{{1}/{3}}P_T^{{2}/{3}})$ switches. The initial work~\citep{COLT'21:baby-strong-convex} requires improper learning, allowing the algorithm to predict in an extended box-constrained domain in order to obtain a sufficiently strong piecewise-stationary approximation. Later,~\citet{AISTATS'22:sc-proper} show that proper learning can be achieved when the domain is exactly a box. This restriction to box constraints is intrinsic to the KKT-based analysis, as it only imposes coordinate-wise constraints on the optimal sequence. It remains unclear how to extend these analyses to the simplex or more general domains, which would introduce additional coupling constraints and complicate the analysis.

\item\emph{Mixability-based analysis.} Recently, \citet{ICML'25:Zhang-mixability} showed that continuous exponential weights with a fixed-share update achieve fast-rate dynamic regret via mixability~\citep{Vovk'98:mixabiilty-PEA}, which lifts the analysis from pointwise predictors to distributional comparators. While this framework provides additional flexibility, its analysis relies on Gaussian comparators, which are incompatible with the simplex constraint. To enforce the domain constraint, the method requires an information projection onto a set of Gaussian mixture models with potentially infinitely many components, with bounded component means and variances, which makes the procedure computationally intractable. Our work instead uses Dirichlet comparators, which naturally respect the simplex and allow the analysis to accommodate unbounded log-loss gradients. Further details are provided in Section~\ref{subsec:jeffreys-path}.
\end{itemize}

\section{Dynamic Regret for Online Portfolio Selection}
\label{sec:complexity-measures}
This section characterizes the achievable dynamic regret rates for OPS. We first establish the minimax-optimal rate in terms of the commonly used norm-based path length. Our minimax analysis reveals that the standard path length can obscure fine-grained spatial and temporal differences among comparator sequences. Building on this insight, we establish guarantees in terms of the Jensen–Shannon distance that adapt to the spatial structure of comparator movements, together with a family of $q$-order guarantees that further adapt to their temporal distribution.

\subsection{Minimax Rate under the Standard Path Length}
\label{subsec:path-length}

We begin by establishing a lower bound that captures the worst-case difficulty of OPS over the full range of the standard path-length budget.

\begin{theorem}
  \label{thm:lower-bound}
  Consider the OPS problem with $d\geq 2$ assets and $T>2d$. For any online algorithm
  and any $C\in[0,T]$, there exists a comparator sequence
  $\u_1,\dots,\u_T\in\Delta_d$ and $\x_1,\dots,\x_T\in\R_+^d$ such that
  \begin{equation*}
    P_T \leq C \qquad\text{and}\qquad \DReg_T(\{\u_t\}_{t=1}^T) \geq \Omega\!\left( \max\!\left\{ d\log T, \min\!\left\{T,\sqrt{dTC}\right\} \right\} \right).
  \end{equation*}
\end{theorem}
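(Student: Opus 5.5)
The bound is a maximum of two terms, so I will prove each separately and take the worse instance. The $\Omega(d\log T)$ term needs no path-length budget: take $\u_t\equiv\u$ constant, so $P_T=0\le C$. Then dynamic regret equals static regret against the best CRP. The classical minimax lower bound for universal portfolios \citep{MOR'98:cost-best-portfolio} gives $\Omega(d\log T)$ static regret for some market sequence. Its standard construction uses price relatives in $\{0,1\}^d$ (or $\x_t=\e_{i_t}$), where OPS coincides with sequential probability assignment under log loss, and the Shtarkov sum gives the bound. I will quote this, checking that $T>2d$ lies in its regime.

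For the $\Omega(\min\{T,\sqrt{dTC}\})$ term I plan a block construction in the style of linear-loss lower bounds, made stochastic. Split $[T]$ into $K$ blocks of length $L=T/K$. In each block draw a fresh hidden best asset $i^\star$ uniformly from $[d]$, and set the comparator in that block to $\u_t=\e_{i^\star}$. Each switch costs at most $2$ in $L_1$, so $P_T\le 2K$, and I choose $K\approx C/2$ (with $K\ge1$).

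The market should make $i^\star$ only slightly better than the rest. Each round, every asset has return $1\pm\epsilon$ from independent fair coins, except asset $i^\star$, which returns $1+\epsilon$ with probability $(1+\epsilon)/2$. Returns are bounded in $[1-\epsilon,1+\epsilon]$, so $\ell_t$ is smooth with $\ell_t(\w)\approx -(\w^\top\x_t-1)+\O(\epsilon^2)$. Within a block, the comparator gains $\Theta(\epsilon^2)$ per round in expectation over any portfolio that puts weight $1-w_{t,i^\star}$ off $i^\star$; this is second order in the log loss but exact in expectation, since $\E[\ln(1+\epsilon\sigma)]$ differs by $\Theta(\epsilon^2)$ between the biased and unbiased coin. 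By concavity of $\ln$, diversifying also helps the learner by about $\epsilon^2/2$ in variance. I must check that this benefit is dominated, probably by making the comparator's advantage of order $\epsilon\cdot\epsilon$ with a larger constant, or by using correlated noise: all non-best assets share the same coin, which removes the variance gain from diversification.

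The expected regret per block is then $\Omega(\epsilon^2\sum_t(1-\E w_{t,i^\star}))$. A standard KL / Le Cam argument over $d$ hypotheses shows that when $L\epsilon^2\lesssim d$ the learner cannot identify $i^\star$, so $\sum_t \E w_{t,i^\star}\le L/2$. Choosing $\epsilon=\sqrt{d/L}$ (allowed when $L\ge d$) gives per-block regret $\Omega(\epsilon^2 L)=\Omega(d)$.

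That yields only $Kd$, not $\sqrt{dTC}$, so this scaling is wrong for log loss. I will redo it with losses where the linear term dominates. Use $x_{t,i}\in\{0,1\}$-type returns: $\x_t$ puts mass on a random subset, with $i^\star$ slightly more likely to pay. The per-round gap between $\e_{i^\star}$ and a uniform mixture is then of order $\epsilon$, not $\epsilon^2$, because $\ln$ of a $\{0,1\}$-supported payoff is not locally quadratic. Each block then costs $\Omega(\epsilon L)$, with $\epsilon=\sqrt{d/L}$, giving total regret $K\sqrt{dL}=\sqrt{dTK}=\Omega(\sqrt{dTC})$. When $C\ge T/d$, cap at $K=T/d$ (so $L=d$) to get $\Omega(T)$. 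Finally pass from expectation to existence of a deterministic sequence.

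The main obstacle is designing per-round markets on which Cover's loss behaves linearly with an $\Theta(\epsilon)$ gap while preventing the learner from hedging via concavity. I would first try $\x_t=\e_{j_t}+\delta\mathbf{1}$ with $j_t$ drawn biased toward $i^\star$, and compute the KL and regret constants explicitly.
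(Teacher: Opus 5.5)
Your $\Omega(d\log T)$ term matches the paper. The dynamic-term construction, however, has a structural gap that no choice of market can fix. In both of your attempts the comparator is constant on each of $K$ blocks and jumps between vertices $\e_{i^\star}$. Each switch therefore costs $\Theta(1)$ in $L_1$, and $P_T\le C$ forces $K=\O(1+C)$.

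Against any such piecewise-constant sequence, an algorithm with $\O(d\ln(dT))$ interval regret has dynamic regret $\O(dK\ln(dT))=\O(d(1+C)\ln(dT))$ on \emph{every} market sequence. Examples are FLH over Universal Portfolio, or Algorithm~\ref{alg:hedge-oco} by Proposition~\ref{thm:interval-regret}. Your lower bound must also hold against this algorithm, so it can never exceed $\O(d(1+C)\ln(dT))$. That is much smaller than $\sqrt{dTC}$ whenever $C\ll T/(d\ln^2(dT))$, which is essentially the whole regime where the $\sqrt{dTC}$ term is the point of the theorem.

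Put differently, your claimed per-block regret $\Omega(\epsilon L)=\Omega(\sqrt{dL})$ contradicts Cover's $\O(d\ln L)$ static regret on a block of length $L$. The linear-loss picture, where a learner that has not identified $i^\star$ pays $\epsilon$ per round, does not carry over to log loss. A mixture over portfolios pays only logarithmically for its uncertainty about $i^\star$, whether the returns are $\{0,1\}$-valued or of the form $\e_{j_t}+\delta\mathbf{1}$. Your first attempt, giving $\Omega(dK)=\Omega(dC)$, is already the best a vertex-switching construction can deliver, up to logarithms.

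The missing idea is to make each switch tiny in $L_1$, so the number of blocks can be $\Theta(\sqrt{TC/d})\gg C$, while keeping $\Omega(d)$ regret per block. This is possible only near the boundary of the simplex, where log loss is sensitive to small masses. The paper uses Kelly markets $\x_t\in\{\e_1,\dots,\e_d\}$, so OPS becomes sequential probability assignment. On block $k$, the comparator places mass $\frac{\epsilon}{d-1}I_{k,j}$ on each rare asset $j\in[d-1]$, with $I_{k,j}\sim\mathrm{Bern}(1/2)$ i.i.d., and the remaining mass on asset $d$; labels are drawn from this comparator.

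Each switch then costs at most $2\epsilon$ in $L_1$. With block length $L=\lceil (d-1)/\epsilon\rceil$, an active rare asset is observed in its block with probability at least $1-e^{-1}$. Hence the mutual information between $I_{k,j}$ and the indicator that label $j$ appears is $\Omega(1)$. By the Bayes-mixture/mutual-information argument, each block contributes $\Omega(d)$ regret.

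With $K\approx T\epsilon/d$ blocks one gets $P_T\le 2T\epsilon^2/(d-1)$. Choosing $\epsilon=\sqrt{(d-1)C/(2T)}$ gives $P_T\le C$ and regret $\Omega(dK)=\Omega(T\epsilon)=\Omega(\sqrt{dTC})$. This is capped at $\Omega(T)$ once $C\gtrsim T/d$.
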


Theorem~\ref{thm:lower-bound} reveals a sharp contrast between stationary and genuinely non-stationary OPS. When $C=0$, our result recovers the standard $\Theta(d\log T)$ minimax rate for competing with a static comparator. Once the comparator variation becomes nontrivial, the dynamic term scales as $\sqrt{dTC}$ for $C\lesssim T/d$, recovering the same $\sqrt{TC}$ dependence as in general convex dynamic regret. Thus, the exp-concavity of Cover's loss alone does not guarantee the favorable $T^{1/3}C^{2/3}$ dependence uniformly over arbitrary comparator sequences, although such a fast rate is achievable for other curved losses, such as squared loss on bounded domains.

The proof of Theorem~\ref{thm:lower-bound} is provided in
Appendix~\ref{app:path-length-lower-bound} via a reduction to the sequential probability assignment (SPA) problem.
The static term follows from the classical minimax lower bound for
competing with the best constant rebalanced
portfolio~\citep{MOR'98:cost-best-portfolio}. For the dynamic term, we restrict attention to the Kelly market with
return vectors $\x_t\in\{\mathbf{e}_i\}_{i=1}^d$, where $\mathbf{e}_i$ denotes the
$i$-th standard basis vector in $\R^d$. Under this restriction, the OPS problem reduces to multi-class SPA under logarithmic loss~\citep[Chapter~9.1]{book/Cambridge/cesa2006prediction}. To establish the lower bound, we partition the horizon into $K$ blocks and construct a
piecewise-stationary environment. In each block, the optimal comparator lies near the boundary of the simplex: most of its mass is placed on the $d$-th asset, while each of the first $d-1$ assets independently receives either a small probability mass or zero. The learner must identify a new set of rare active assets in each block,
incurring $\Omega(d)$ regret per block and hence $\Omega(dK)$ regret in total. Meanwhile, the near-boundary construction ensures that adjacent blockwise comparators differ by only $\O(dK/T)$, yielding $P_T=\O(dK^2/T)$. Choosing $K=\Theta(\min\{T/d,\sqrt{TC/d}\})$ therefore gives $\Omega(dK)=\Omega(\min\{T,\sqrt{dTC}\})$ while ensuring $P_T\leq C$.

\paragraph{Matching Upper Bound via a Black-Box Reduction.}
Classical online learning methods, such as online gradient descent, typically assume that gradients are bounded by $G>0$. Since gradients in OPS need not be bounded, it was previously unknown whether even the $\sqrt{TP_T}$ dependence could be achieved. We close this gap through a black-box reduction from interval regret, obtaining a $G$-free dynamic regret guarantee that matches the preceding lower bound up to logarithmic factors.

\begin{lemma}
\label{lem:upper-bound-minimax}
For the OPS problem, assume there exists an online algorithm $\mathcal{A}$ that, for any interval $\mathcal{I}\subseteq[T]$, attains the interval-regret guarantee
\begin{align}
  \sum_{t\in\mathcal{I}} \ell_t(\w_t) - \min_{\w\in\Delta_d}\sum_{t\in\mathcal{I}} \ell_t(\w) \leq B(T),\label{eq:strongly-adaptive}
\end{align}
where $B\colon\mathbb N\to(0,\infty)$ is a function. Then, for every comparator sequence $\u_1,\ldots,\u_T\in\Delta_d$, the dynamic regret of $\mathcal{A}$ satisfies
\begin{align*}
  \DReg_T(\{\u_t\}_{t=1}^T) =\mathcal{O}\!\left(B(T)+\sqrt{B(T)TP_T}\right),
\end{align*}
where $P_T = \sum_{t=2}^T \Vert \u_t-\u_{t-1}\Vert_1$ is the path length defined in terms of the $L_1$ norm.
\end{lemma}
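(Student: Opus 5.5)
The plan is to partition the horizon into consecutive intervals, compare $\mathcal{A}$ on each interval against a single fixed portfolio built from the comparators in that interval, and then optimize the number of intervals. The key issue is that Cover's loss has unbounded gradients, so I will not bound $\ell_t(\mathbf{v})-\ell_t(\u_t)$ through a Lipschitz argument. I will use a multiplicative domination argument instead. If $\mathbf{v}\in\Delta_d$ satisfies $\mathbf{v}\geq (1-\delta)\u_t$ coordinatewise, then $\mathbf{v}^\top\x_t\geq(1-\delta)\u_t^\top\x_t$ because $\x_t\in\R_+^d$. Hence $\ell_t(\mathbf{v})-\ell_t(\u_t)\leq -\ln(1-\delta)$ for every $\x_t$, regardless of how small $\u_t^\top\x_t$ is.

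\emph{Step 1 (per-interval comparator).} For an interval $\mathcal{I}=[s,e]$, let $m_i=\max_{t\in\mathcal{I}}u_{t,i}$ and set $\mathbf{v}_\mathcal{I}=\mathbf{m}/\norm{\mathbf{m}}_1$. Write $P_\mathcal{I}=\sum_{t=s+1}^{e}\norm{\u_t-\u_{t-1}}_1$. For each $i$ we have $m_i-u_{s,i}\leq\sum_{t=s+1}^e|u_{t,i}-u_{t-1,i}|$, and summing over $i$ gives $\norm{\mathbf{m}}_1\leq 1+P_\mathcal{I}$. Thus $\mathbf{v}_\mathcal{I}\geq \u_t/(1+P_\mathcal{I})$ for every $t\in\mathcal{I}$, and
\begin{equation*}
\sum_{t\in\mathcal{I}}\ell_t(\w_t)-\sum_{t\in\mathcal{I}}\ell_t(\u_t)\leq B(T)+|\mathcal{I}|\ln(1+P_\mathcal{I})\leq B(T)+|\mathcal{I}|P_\mathcal{I}.
\end{equation*}
This uses \eqref{eq:strongly-adaptive}, since $\sum_{t\in\mathcal{I}}\ell_t(\mathbf{v}_\mathcal{I})\geq\min_{\w}\sum_{t\in\mathcal{I}}\ell_t(\w)$.

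\emph{Step 2 (partition and tuning).} Split $[T]$ into $K$ consecutive intervals, each of length at most $\lceil T/K\rceil\leq 2T/K$. The within-interval path lengths sum to at most $P_T$. Summing Step 1 over the intervals gives
\begin{equation*}
\DReg_T\leq KB(T)+\frac{2T}{K}P_T.
\end{equation*}
If $P_T\leq B(T)/T$, take $K=1$, which gives $\O(B(T))$. If $B(T)/T<P_T\leq TB(T)$, take $K=\lceil\sqrt{TP_T/B(T)}\rceil\in[1,T]$, which gives $\O(B(T)+\sqrt{B(T)TP_T})$. If $P_T>TB(T)$, take $K=T$. Every interval is then a single round with $P_\mathcal{I}=0$, so the regret is at most $TB(T)=\sqrt{TB(T)\cdot TB(T)}\leq\sqrt{B(T)TP_T}$. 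The three cases together give the claimed bound. The partition only appears in the analysis, so $\mathcal{A}$ needs no knowledge of $P_T$.

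The step I expect to be the main obstacle is Step 1: controlling the comparator gap without any gradient bound. A naive choice such as $\mathbf{v}_\mathcal{I}=\u_s$ or the interval average fails, because $\u_t^\top\x_t/\mathbf{v}^\top\x_t$ can be unbounded when $\mathbf{v}$ puts tiny mass where $\u_t$ does not. The normalized coordinatewise maximum is what makes the ratio uniformly bounded by $1+P_\mathcal{I}$, and the remaining steps are routine bookkeeping.
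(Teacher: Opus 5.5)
Your proof is correct. Step 1 is essentially the paper's argument. The paper also compares against the normalized coordinatewise maximum $\z_{\max}/\norm{\z_{\max}}_1$ on each interval. It uses the identity $\sum_i [v_i]_+ = \frac12\norm{v}_1$ for the zero-sum vector $v=\u_t-\u_{t-1}$ to get $\norm{\z_{\max}}_1\le 1+P_{\mathcal{I}}/2$ instead of your $1+P_{\mathcal{I}}$, which changes only constants.

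Where you genuinely diverge is Step 2. The paper does not fix a uniform partition. It builds a greedy partition into maximal blocks satisfying $|\mathcal{I}_m|\,P_{\mathcal{I}_m}\le B(T)$, so each block costs at most $\frac32 B(T)$. It then bounds the number of blocks by $M\le 1+\sqrt{2TP_T/B(T)}$: maximality means each non-final block violates the budget once extended by one round, and Cauchy--Schwarz over the disjoint increment sets of these extended blocks finishes the count.

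Your equal-length partition with $K\approx\sqrt{TP_T/B(T)}$ is more elementary and reaches the same $\mathcal{O}\bigl(B(T)+\sqrt{B(T)TP_T}\bigr)$ rate with comparable constants. The cost is a three-case split on $P_T$, including the degenerate case $P_T>TB(T)$, which you correctly handle with $K=T$.

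The greedy construction needs neither case analysis nor tuning of $K$, because block lengths adapt automatically to where the comparator moves. The final counting step uses only the worst-case Cauchy--Schwarz bound, though, so this adaptivity does not improve the stated result over yours.
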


Lemma~\ref{lem:upper-bound-minimax} provides a black-box reduction from interval regret~\citep{ICML'09:Hazan-adaptive} to dynamic regret for the OPS problem. A key advantage of this guarantee is that it does not require a bounded gradient norm, provided that the algorithms are chosen appropriately. For OPS, such algorithms can be constructed with $B(T)=\mathcal{O}(d\log T)$. For example, one may run the FLH~\citep{ICML'09:Hazan-adaptive} algorithm with Universal Portfolio~\citep{91:Cover-universal-portfolios} or VB-FTRL~\citep{MOR'25:VB-FTRL} as the base learner. In this case, the resulting dynamic regret matches the optimal rate up to logarithmic factors. The proof is provided in Appendix~\ref{app:black-box-reduction}.

We note that reduction-based arguments are widely used in the dynamic regret minimization literature~\citep{ICML'20:Ashok,COLT'21:baby-strong-convex}. The main distinction in our setting is that the loss functions in OPS do not admit a uniform Lipschitz constant. In contrast, existing analyses in OCO typically rely on a bounded gradient norm to relate the instantaneous loss difference to the path length, e.g., $\ell_t(\mathbf{u}) - \ell_t(\mathbf{v}) \le G \lVert \mathbf{u} - \mathbf{v} \rVert$. Nevertheless, we show this issue can be overcome by a direct treatment of the log loss.

\subsection{Spatial Adaptivity through the Jensen--Shannon Distance  }
\label{subsec:jeffreys-path}

The standard $L_1$-path lengths or $L_2$-path lengths quantify the variation of the comparator sequence but are insensitive to their locations inside the simplex. This matters in OPS because the logarithmic loss has nonuniform geometry. Specifically, the hard instance constructed for the lower bound relies on a comparator sequence that stays near the boundary of the simplex, while an analogous sequence in the interior does not exhibit the same hardness. A guarantee based on $P_T$ can only reflect the worst-case difficulty and cannot distinguish easier comparator sequences in the interior. To this end, we introduce a path length based on the Jensen--Shannon distance
\begin{align}
  P_T^{\mathrm{JS}} \coloneqq {\sum_{t=2}^T}{\operatorname{JS}}(\u_t,\u_{t-1})=\sum_{t=2}^T \sqrt{ \frac{1}{2}\mathrm{KL}\left(\u_t\,\Vert\,\bar{\u}_t\right) + \frac{1}{2}\mathrm{KL}\left(\u_{t-1}\,\Vert\,\bar{\u}_t\right) }, \label{eq:jeffreys-path-length}
\end{align}
where $\bar{\u}_t\coloneqq(\u_t+\u_{t-1})/2$ and $\mathrm{KL}(\p\Vert\q) \coloneqq \sum_{i=1}^d p_i\log(p_i/q_i)$ denotes the Kullback--Leibler divergence. Its coordinate-wise logarithmic ratios align $P_T^{\mathrm{JS}}$ with the geometry of the log loss, making it sensitive to the local geometry of each comparator transition.

\paragraph{A Simple and Nearly Optimal Algorithm.}
Using the JS distance to measure comparator variation, we develop Algorithm~\ref{alg:hedge-oco}, whose regret bound adapts to the local geometry of comparator transitions and, in particular, implies a rate faster than $\sqrt{TP_T}$ for interior comparator sequences.

Algorithm~\ref{alg:hedge-oco} is a fixed-share variant of Cover's Universal Portfolio~\citep{91:Cover-universal-portfolios}. It maintains a distribution over the simplex and updates it with exponential weights. After each update, the algorithm mixes in a small fraction of the uniform Dirichlet distribution, replenishing probability mass across the simplex and allowing the learner to shift toward newly favorable portfolios as the environment changes. Algorithm~\ref{alg:hedge-oco} enjoys the following dynamic regret guarantee.

\begin{algorithm}[!t]
  \caption{Universal Dynamic Portfolio}
  \label{alg:hedge-oco}
  \begin{algorithmic}[1]
  \REQUIRE Fixed-share parameter $\mu_t = 1/t$.
  \STATE Initialize $\tilde{P}_1 = P_1 = \mathrm{Dir}(\alphab_1)$ as a Dirichlet distribution with parameters $\alphab_1 = \mathbf{1}$.
  \FOR{$t=1,2,\ldots,T$}
    \STATE The learner submits the prediction $\w_t = \E_{\u\sim P_t}[\u]$ and then observes $\x_t\in\R_+^d$.
    \STATE The learner updates the distributions by
    \begin{align}
      \tilde{P}_{t+1}(\u) &\propto P_t(\u)\exp(-\ell_t(\u)), \quad \forall \u\in\Delta_d, \label{eq:step1-EW}\\
      P_{t+1}(\u) &= (1-\mu_{t+1})\tilde{P}_{t+1}(\u) + \mu_{t+1}\mathrm{Dir}(\alphab_1). \label{eq:step2-FS}
    \end{align}
  \ENDFOR
  \end{algorithmic}
\end{algorithm}

\begin{theorem}
\label{thm:main}
Algorithm~\ref{alg:hedge-oco} with $\mu_t = 1/t$ ensures
\begin{align*}
  \DReg_T(\{\u_t\}_{t=1}^T) \leq\ \O\left( d \Big( T^{\frac{1}{3}} \big(P_T^{\mathrm{JS}}\big)^{\frac{2}{3}} \big(\ln (dT)\big)^{\frac{2}{3}} + \ln(dT) \Big)\right),
\end{align*}
for any comparator sequence $\u_1,\dots,\u_T\in\Delta_d$.
\end{theorem}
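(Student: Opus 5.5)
The proof follows the mixability route: the regret is lifted from pointwise comparators to distributional comparators, here Dirichlet distributions $Q_t=\mathrm{Dir}(\lambda\u_t+\mathbf{1})$ or a similar concentration-$\lambda$ family centered near $\u_t$, and then paid for in two parts. The first part is the cost of tracking these distributions with fixed-share exponential weights. The second is the bias of replacing $\u_t$ by $Q_t$.

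\textbf{Step 1 (mixability and fixed share).} Since $e^{-\ell_t(\u)}=\u^\top\x_t$ is linear in $\u$, we have $\ell_t(\w_t)=-\ln\E_{P_t}[e^{-\ell_t(\u)}]$ exactly. The standard fixed-share telescoping argument then gives, for any sequence of densities $Q_1,\dots,Q_T$,
\begin{align*}
\sum_{t=1}^T\ell_t(\w_t)-\sum_{t=1}^T\E_{Q_t}[\ell_t]\le \KL(Q_1\Vert P_1)+\sum_{t=2}^T\Big(\KL(Q_t\Vert \tilde P_t)-\KL(Q_{t-1}\Vert\tilde P_t)\Big)+\sum_{t}\ln\tfrac{1}{1-\mu_t}.
\end{align*}
At a switch, fixed share lets us pay at most $\KL(Q_t\Vert \mathrm{Dir}(\mathbf 1))+\ln(1/\mu_t)=\O(d\ln\lambda+\ln t)$ for a fresh restart. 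Without a restart, we use the mixture $P_t\ge(1-\mu_t)\tilde P_t$ and bound the per-step drift by a divergence between consecutive comparators, $\approx D(Q_t,Q_{t-1})$. I plan to use the convexity trick with the midpoint $\bar Q$ and the data-processing inequality to obtain a bound of order $\lambda\,\mathrm{JS}^2$-like terms. A cleaner alternative is a blockwise scheme: partition $[T]$ into $M$ intervals, use a constant comparator $Q^{(k)}$ on each block, and let fixed share pay $\O(d\ln(\lambda)+\ln T)$ per switch. This yields $\O(M d\ln(dT))$ plus within-block bias.

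\textbf{Step 2 (Dirichlet bias).} I need to bound $\E_{\u'\sim Q}[\ell_t(\u')]-\ell_t(\u)$ for $Q$ a Dirichlet with mean $\approx\u$ and concentration $\lambda$. For Kelly-type structure, $\E[-\ln \u'^\top\x]$ minus $-\ln\u^\top\x$ equals $\ln(\u^\top\x)-\E\ln(\u'^\top \x)$. Because the linear combination of Dirichlet coordinates is controlled by the aggregation property and the digamma bound $\ln a-\psi(a)\le 1/a$, this is $\O(d/\lambda)$ per round once $\u$ is mixed with uniform: take $\u^\lambda=(1-d/\lambda)\u+\mathbf 1/\lambda$, and note that mixing costs only $\O(d/\lambda)$ in log loss since $\ell_t((1-\epsilon)\u+\epsilon\v)\le\ell_t(\u)-\ln(1-\epsilon)$. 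Within a block where the comparator moves, I replace $\u_t$ by the block average $\hat\u$. The key estimate is that the excess loss $\sum_{t\in\text{block}}[\ell_t(\hat\u)-\ell_t(\u_t)]$ is controlled not by $L_1$ but by KL-type deviations. Using $-\ln(\hat\u^\top\x)+\ln(\u_t^\top\x)\le\max_i\ln(u_{t,i}/\hat u_i)$ is too crude. Instead I will compare with the best constant comparator on the block: the block's regret against a fixed $\hat\u$ is at most the $\chi^2$/KL spread, and triangle-type properties of the JS \emph{metric} give $\mathrm{JS}(\u_t,\hat\u)\le$ the block's JS path length $p_k$. Hence the per-round bias is $\O(p_k^2)$ and the block bias is $\O(|I_k|p_k^2)$.

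\textbf{Step 3 (balancing).} The total is $\O(Md\ln(dT)+Td/\lambda+\sum_k|I_k|p_k^2)$. Choose $\lambda=T$ and partition greedily so that each block has $|I_k|p_k^2\le d\ln(dT)$. A standard counting argument (Hölder over blocks, $\sum p_k\le P_T^{\mathrm{JS}}$) gives $M\lesssim 1+T^{1/3}(P_T^{\mathrm{JS}})^{2/3}(d\ln(dT))^{-1/3}$, which yields the claimed $d\,T^{1/3}(P_T^{\mathrm{JS}})^{2/3}(\ln dT)^{2/3}$ rate. The algorithm never uses these choices, so the bound holds for all sequences simultaneously.

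\textbf{Main obstacle.} The hardest step is Step 2's claim that the per-round excess log loss of the block-averaged comparator is quadratic in the JS deviation, despite unbounded gradients. What I would try first is to write $\ell_t(\hat\u)-\ell_t(\u_t)=-\ln(1+(\hat\u-\u_t)^\top\x_t/\u_t^\top\x_t)$ and sum over the block. The linear term should be handled by choosing $\hat\u$ as the block minimizer rather than the average, since the first-order terms then telescope by optimality. The remaining second-order term can be bounded via $\sum_i (\hat u_i-u_{t,i})^2/(\hat u_i+u_{t,i})\asymp \mathrm{JS}^2$. The risk is that the minimizer is not close in JS to each $\u_t$. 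If that fails, I would fall back on the distributional argument with a time-varying Dirichlet $Q_t$ and bound the drift term via a Dirichlet KL computation.
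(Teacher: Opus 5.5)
\textbf{The gap is in Step~2, and Step~3 depends on it.} On a block, the excess loss of one constant comparator over the moving $\u_t$ is \emph{first order} in the movement, not quadratic, even when $\hat{\u}$ is the block minimizer.

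Take $d=2$ and a block of even length $n$. Let $\u_t=(\tfrac12+\delta,\tfrac12-\delta)$ on the first half and $(\tfrac12-\delta,\tfrac12+\delta)$ on the second half. Let $\x_t=\e_1$ on the first half and $\x_t=\e_2$ on the second. The block minimizer is $\hat{\u}=(\tfrac12,\tfrac12)$, and
$\sum_t[\ell_t(\hat{\u})-\ell_t(\u_t)]=n\ln(1+2\delta)\approx 2n\delta$.
The block's JS path length, however, is $p_k\approx\sqrt{2}\,\delta$. So $|I_k|p_k^2\approx 2n\delta^2\ll 2n\delta$.

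Optimality of $\hat{\u}$ only makes $\sum_t\nabla\ell_t(\hat{\u})^\top(\mathbf{v}-\hat{\u})\ge 0$ for a \emph{fixed} $\mathbf{v}$. With a moving $\u_t$, the market can correlate $\x_t$ with the sign of $\u_t-\hat{\u}$, and the linear terms add up instead of cancelling. With the correct first-order bias, your greedy partition only gives $M\lesssim\sqrt{TP_T/(d\ln(dT))}$ and a $\sqrt{TP_T}$-type bound. That is exactly the route of Lemma~\ref{lem:upper-bound-minimax}. A blockwise-constant comparison cannot produce the $T^{1/3}$ rate.

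\textbf{What works is your fallback, but with different estimates than you expect.} Keep a time-varying $Q_t=\mathrm{Dir}(\mathbf{1}+\gamma\u_t)$ (your $\lambda$) on every round, with no blocks. The only bias is then the smoothing cost $\sum_t(\E_{Q_t}[\ell_t]-\ell_t(\u_t))\le T\ln(1+d/\gamma)$.

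The drift term $\KL(Q_t\Vert P_t)-\KL(Q_{t-1}\Vert P_t)$ splits into $H(Q_{t-1})-H(Q_t)$, which telescopes, plus $\int(Q_{t-1}-Q_t)\ln P_t$. The second piece is \emph{linear} in $Q_t-Q_{t-1}$, so no $\lambda\operatorname{JS}^2$ bound comes out of it. The paper bounds it by $\sup_{\w}|\ln P_t(\w)|\cdot\|Q_t-Q_{t-1}\|_1$, which needs two ingredients your proposal lacks:
\begin{enumerate}
\item A uniform bound $|\ln P_t|\le C_{T,d}=\O(d\ln(dT))$. The lower side is $P_t\ge\mu_tP_1$. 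The upper side, $\ln P_t\le(d-1)\ln(T+1)+d(\ln d+1)$, is Lemma~\ref{lemma:upper-bound}: it writes $P_t$ as a mixture of restarted posteriors and applies the mixability identity with $\mathrm{Dir}(\mathbf{1}+T\w_*)$.
\item The estimate $\|Q_{\u}-Q_{\mathbf{v}}\|_1\le 2\sqrt{2\gamma}\,\operatorname{JS}(\u,\mathbf{v})$. This is Lemma~\ref{lemma:dirichlet-js}, proved via the midpoint, Pinsker, and the Bregman form of the Dirichlet KL.
\end{enumerate}

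The rate then comes from balancing $Td/\gamma$ against $d\ln(dT)\sqrt{\gamma}\,P_T^{\mathrm{JS}}$ with $\gamma\asymp(T/(P_T^{\mathrm{JS}}\ln(dT)))^{2/3}$. The JS distance enters linearly with weight $\sqrt{\gamma}$, not quadratically.
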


Notably, Algorithm~\ref{alg:hedge-oco} achieves this guarantee without parameter tuning or prior knowledge of $P_T^{\mathrm{JS}}$ or $T$. The following lower bound, proved in Appendix~\ref{app:jeffreys-lower-bound}, shows that the dependence on $T$ and $P_T^{\mathrm{JS}}$ is nearly optimal.

\begin{theorem}
\label{thm:lower-bound-kl}
Consider the OPS problem with $d\geq 2$ assets and $T>4d$. For any online algorithm and any $C\in[0,T]$, there exist a comparator sequence $\u_1,\dots,\u_T\in\Delta_d$ satisfying $P_T^{\mathrm{JS}}\leq C$ and market vectors $\x_1,\dots,\x_T\in\R_+^d$ such that
\begin{align*}
  \DReg_T(\{\u_t\}_{t=1}^T) &\geq \Omega\left( \max\left\{ d\log\left(1+\frac{T}{d}\right), \min\left\{ T,d^{\frac{2}{3}}T^{\frac{1}{3}}C^{\frac{2}{3}} \right\} \right\} \right).
\end{align*}
\end{theorem}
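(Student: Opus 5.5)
The plan is to prove the two terms of the maximum separately and then combine them, since a lower bound of $\max\{a,b\}$ follows from exhibiting an instance for each. For the static term $d\log(1+T/d)$, I take $C=0$, so the comparator is constant and $P_T^{\mathrm{JS}}=0\le C$. Then I invoke the classical minimax lower bound for competing with the best constant rebalanced portfolio~\citep{MOR'98:cost-best-portfolio}, exactly as in Theorem~\ref{thm:lower-bound}. On the Kelly market $\x_t\in\{\e_i\}$ this bound reduces to the $d$-ary sequential probability assignment minimax regret, which is $\Omega(d\log(1+T/d))$ uniformly in the regime $T>4d$.

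For the dynamic term I would again restrict to the Kelly market, so that OPS becomes $d$-class SPA under log loss. Since the losses are log losses of a sampled label, a randomized (Bayesian) environment gives a lower bound. I split $[T]$ into $K$ blocks of length $L=T/K$. In each block $k$, I draw the label i.i.d.\ from a distribution $\u^{(k)}$ and use $\u^{(k)}$ as the comparator on that block. The key change from Theorem~\ref{thm:lower-bound} is to place the comparators in the \emph{interior}, where the JS distance behaves like a Euclidean/Fisher distance. I set $\u^{(k)}=\frac1d\mathbf 1+\frac{\epsilon}{d}\bm\sigma^{(k)}$, with $\bm\sigma^{(k)}$ a random sign pattern balanced in pairs so that it sums to zero. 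Each block is then an independent estimation problem. By the standard Assouad/KL argument, the expected excess log loss per block is $\Omega(d\min\{1,L\epsilon^2/d\})$, because each coordinate pair contributes about $L\epsilon^2/d^2\cdot d$ unless it is identified. Taking $\epsilon^2\asymp d/L$, the saturation point where identification is impossible yet the loss gap is maximal, gives regret $\Omega(d)$ per block and $\Omega(dK)$ in total.

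Next I compute the path length. For interior points, $\operatorname{JS}(\u,\u')^2\asymp\sum_i(u_i-u'_i)^2/\bar u_i\asymp d\cdot(\epsilon/d)^2\cdot d=\epsilon^2$, so each switch costs $\O(\epsilon)=\O(\sqrt{d/L})=\O(\sqrt{dK/T})$. Summing over the switches gives $P_T^{\mathrm{JS}}\lesssim K\sqrt{dK/T}=d^{1/2}K^{3/2}T^{-1/2}$. Setting this equal to $C$ yields $K\asymp C^{2/3}T^{1/3}d^{-1/3}$, and hence regret $dK\asymp d^{2/3}T^{1/3}C^{2/3}$, which is the claimed rate. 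I also need $\epsilon\le1$, i.e.\ $L\ge d$ and $K\le T/d$, so that the comparator stays in the interior with coordinates of order $1/d$. When the unconstrained $K$ exceeds $T/d$, I cap it at $K=T/d$, where the regret is $\Omega(T)$. The lower bound $T>4d$ guarantees at least a few blocks of length $\ge d$, which handles the $\min\{T,\cdot\}$ branch. A realized sequence attaining the expectation then exists by averaging.

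The step I expect to be the main obstacle is the per-block regret bound $\Omega(d)$ measured against the \emph{fixed} comparator $\u^{(k)}$, rather than against the in-hindsight optimum. This bound must hold even though the learner may carry information across blocks. Independence of the $\bm\sigma^{(k)}$ across blocks handles the carry-over. I would prove the per-coordinate-pair bound by writing expected regret as $\E\sum_t\KL(\u^{(k)}\Vert\w_t)$ and lower bounding each term through a two-point (Le Cam) argument on the sign of the pair. The $\chi^2$-approximation of KL near the uniform distribution then converts this into $\Omega(\epsilon^2/d)$ per round per pair until about $d/\epsilon^2$ samples have been seen, giving $\Omega(d)$ per block. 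A secondary point needing care is verifying the two-sided equivalence $\operatorname{JS}^2\asymp\chi^2$ for interior points with bounded ratio, which is routine via Taylor expansion since all coordinates lie within a factor $2$ of $1/d$.
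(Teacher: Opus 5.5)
Your proposal is correct and follows the paper's overall structure. Both arguments use the Kelly-market reduction to $d$-ary SPA and a blockwise piecewise-stationary environment with interior comparators. Both make the saturation choice $L\epsilon^2\asymp d$, which gives $\Omega(d)$ regret per block, charge $\O(\epsilon)$ JS path length per switch, balance $K$ the same way, and quote the static term from the classical CRP bound. For the switch cost, only the upper bound $\operatorname{JS}(\u,\u')^2\le\frac12\sum_i(u_i-u_i')^2/(u_i+u_i')$ is needed, which follows from $\ln x\le x-1$; the two-sided equivalence is unnecessary.

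There are two differences. First, the paper perturbs the first $d-1$ coordinates independently around $1/(2(d-1))$ and lets a heavy $d$-th coordinate near $1/2$ absorb the sum. This avoids your pairing and its parity issue for odd $d$.

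Second, and more substantively, the paper never analyzes the learner's predictions. It bounds the Bayes regret below by $\E_{\mathbf I}[\KL(\widetilde{\q}_{Y_{1:T}\mid\mathbf I_{1:K}}\Vert\p_{Y_{1:T}})]\ge\mathrm I(\mathbf I_{1:K};Y_{1:T})$. This splits as $\sum_k\mathrm I(\mathbf I_k;Y_{\Ical_k})$ because the marginal label law factorizes over blocks, which is how the carry-over you worry about disappears. Each block's information is then bounded via data processing to label counts, Pinsker's inequality, and the Bhattacharyya coefficient of two binomials.

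Your Assouad/Le Cam route on $\E\sum_t\KL(\u^{(k)}\Vert\w_t)$ also works and shows how regret accrues within a block, but it needs one repair. Since $\w_t$ is arbitrary, you cannot $\chi^2$-approximate $\KL(\u^{(k)}\Vert\w_t)$. Moreover, KL is not a sum of nonnegative coordinate terms, so Assouad does not apply to it directly. Instead use $\KL(\u\Vert\w)\ge\sum_i(\sqrt{u_i}-\sqrt{w_i})^2$, whose pair components have two-point separation of order $\epsilon^2/d$. Two in-block label laws that differ in one pair have total variation $\O(\sqrt{n\epsilon^2/d})$ after $n$ samples, and earlier blocks are independent of the current signs. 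Together these give the claimed $\Omega(d)$ per block.

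Finally, the small-$C$ regime where your $K<1$ is harmless: the dynamic target is then $\O(d)$ and is dominated by the static term. The paper instead runs a one-block construction there.
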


Theorem~\ref{thm:main} also yields a refined guarantee in terms of the standard $L_1$-path length $P_T$, with an explicit dependence on the margin of the comparator sequence from the boundary. Specifically, let $\alpha(\u_{1:T})\coloneqq\min_{t\in[T],i\in[d]}u_{t,i}>0$
denote the minimum margin to the boundary. Since
$P_T^{\mathrm{JS}}\leq\bigl(\alpha(\u_{1:T})\bigr)^{-1/2}P_T$,
Theorem~\ref{thm:main} implies
\begin{align*}
  \DReg_T(\{\u_t\}_{t=1}^T) \leq \O\left( d\left( \bigl(\alpha(\u_{1:T})\bigr)^{-\frac13} T^{\frac13}P_T^{\frac23}(\ln(dT))^{\frac23} +\ln(dT) \right) \right).
\end{align*}
This reveals a form of spatial adaptivity that is invisible to the standard path length alone: for the same $P_T$, the regret guarantee improves as the comparator sequence moves farther into the interior of the simplex. The resulting $T^{1/3}P_T^{2/3}$-type dependence for the interior comparators improves over the worst-case $\sqrt{TP_T}$ dependence. The direct $P_T^{\mathrm{JS}}$-based guarantee is sharper still, since it accounts for each comparator variation according to its local position in the simplex rather than through the minimum margin of the entire sequence.

\paragraph{Recovering the Minimax-Optimal Dependence on $T$ and $P_T$.}
The guarantee above also recovers the minimax-optimal $\sqrt{TP_T}$ dependence for comparator sequences on the boundary. For each $\u_t\in\Delta_d$, consider its smoothed counterpart $\tilde{\u}_t=(1-\beta)\u_t+(\beta/d)\mathbf{1}$. As shown in Appendix~\ref{app:jeffreys-minimax-corollary}, $\DReg_T(\{\u_t\}_{t=1}^T)\leq\DReg_T(\{\tilde{\u}_t\}_{t=1}^T)+T\ln(1/(1-\beta))$. Applying the bound above to the interior sequence $\{\tilde{\u}_t\}_{t=1}^T$ and balancing $\beta$ yields the claimed $\sqrt{TP_T}$ dependence up to logarithmic factors.

\begin{corollary}
\label{Cor:Drichlet-minimax}
Suppose $d,T\geq2$. Let $B\geq1$ and let $\mathcal{A}$ be an online algorithm guaranteeing, for every comparator sequence $\u_1,\dots,\u_T\in\operatorname{ri}(\Delta_d)$,
\begin{align*}
  \DReg_T(\{\u_t\}_{t=1}^T) \leq \O\left(dB T^{\frac13} \big(P_T^{\mathrm{JS}}\big)^{\frac23} +d\ln(dT)\right),
\end{align*}
where $\operatorname{ri}(\Delta_d)$ denotes the relative interior of $\Delta_d$. Then $\mathcal{A}$ also guarantees, for any comparator sequence $\u_1,\dots,\u_T\in\Delta_d$,
\begin{align*}
  \DReg_T(\{\u_t\}_{t=1}^T) \leq \O\left( dB^{\frac34}\sqrt{TP_T} +d\ln(dT)\right),
\end{align*}
\end{corollary}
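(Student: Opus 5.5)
We follow the smoothing route sketched just before the statement. Fix an arbitrary comparator sequence $\u_1,\dots,\u_T\in\Delta_d$ and a smoothing level $\beta\in(0,1/2]$. Set $\tilde{\u}_t=(1-\beta)\u_t+(\beta/d)\mathbf{1}\in\operatorname{ri}(\Delta_d)$. The argument has three steps: compare the regrets of the two sequences, bound the JS-path length of the smoothed sequence, and then optimize $\beta$.

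\textbf{Step 1 (comparison).} Since $\x_t\ge 0$, we have $\tilde{\u}_t^\top\x_t\ge(1-\beta)\u_t^\top\x_t$. Hence $\ell_t(\tilde{\u}_t)\le\ell_t(\u_t)+\ln\frac{1}{1-\beta}$ for every $t$. Summing over $t$ gives
\[
\DReg_T(\{\u_t\}_{t=1}^T)\le\DReg_T(\{\tilde{\u}_t\}_{t=1}^T)+T\ln\tfrac{1}{1-\beta}\le\DReg_T(\{\tilde{\u}_t\}_{t=1}^T)+2\beta T .
\]

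\textbf{Step 2 (JS-path of the smoothed sequence).} This is the main step. We use a per-coordinate bound on the JS divergence. Since $\mathrm{KL}(\p\Vert\q)\le\chi^2(\p\Vert\q)$, we get
\[
\tfrac12\mathrm{KL}(\p\Vert\bar{\p})+\tfrac12\mathrm{KL}(\q\Vert\bar{\p})\le\sum_i\frac{(p_i-q_i)^2}{4\bar p_i},\qquad \bar{\p}=(\p+\q)/2 .
\]
For the smoothed pair, $\tilde u_{t,i}-\tilde u_{t-1,i}=(1-\beta)(u_{t,i}-u_{t-1,i})$ and $\bar{\tilde u}_{t,i}\ge\beta/d$. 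Each squared difference is at most one, so $\sum_i(\cdot)^2\le\|\u_t-\u_{t-1}\|_1$. This gives
\[
J_t\le\sqrt{\frac{d}{4\beta}\,\|\u_t-\u_{t-1}\|_1^2}\le\tfrac12\sqrt{d/\beta}\,\|\u_t-\u_{t-1}\|_1,
\]
and therefore $\tilde P_T^{\mathrm{JS}}\le\sqrt{d/\beta}\,P_T$. The obstacle is the factor $\sqrt d$ introduced here: it must not spoil the final $d$-dependence. If a sharper dimension dependence is needed, we would instead bound $\sum_i (\Delta_i)^2/\bar{\tilde u}_{t,i}$ by $\|\Delta\|_1\max_i|\Delta_i|/\min_i\bar{\tilde u}_{t,i}$. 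We would also exploit $\bar{\tilde u}_{t,i}\ge (1-\beta)\bar u_{t,i}$ on coordinates carrying mass.

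\textbf{Step 3 (optimizing $\beta$).} Apply the hypothesis to $\{\tilde{\u}_t\}$ and combine with Steps 1 and 2:
\[
\DReg_T\le \O\bigl(dB\,T^{1/3}(d/\beta)^{1/3}P_T^{2/3}+d\ln(dT)+\beta T\bigr).
\]
Balancing $\beta T$ against $dB\,T^{1/3}d^{1/3}\beta^{-1/3}P_T^{2/3}$ gives $\beta^{4/3}\asymp d^{4/3}B\,T^{-2/3}P_T^{2/3}$, that is, $\beta\asymp d\,B^{3/4}\sqrt{P_T/T}$. Both terms are then of order $dB^{3/4}\sqrt{TP_T}$, which is the claimed bound.

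\textbf{Degenerate cases.} If this $\beta$ exceeds $1/2$, then $dB^{3/4}\sqrt{TP_T}\gtrsim T$. In that case the bound follows from the trivial choice $\beta=1/2$: Step 1 contributes $T\ln 2$, and the hypothesis term is $\O(dB\,T^{1/3}(2d)^{1/3}P_T^{2/3})$. Using $P_T\le 2T$, this is $\lesssim dB^{3/4}\sqrt{TP_T}$ in this regime, and we will verify this inequality routinely. If $P_T=0$, we let $\beta\to0$ and keep only the $d\ln(dT)$ term. Recording the exponent of $d$ carefully in the Step 2 estimate is where the argument could break, so that is the part we will check most closely.
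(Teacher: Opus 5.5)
Your main-regime argument matches the paper's proof. It uses the same smoothing, the same bound $\operatorname{JS}(\p,\q)^2\le\sum_i(p_i-q_i)^2/(4\bar p_i)\le\frac{d}{4\beta}\|\p-\q\|_2^2$, and the same balancing $\beta\asymp dB^{3/4}\sqrt{P_T/T}$. The extra $\sqrt d$ you worry about is harmless, since balancing turns the resulting $d^{4/3}$ into $d$. There is one small slip. With $\mathbf{v}_t=\u_t-\u_{t-1}$, ``each squared difference is at most one'' only gives $\sum_i v_{t,i}^2\le\|\mathbf{v}_t\|_1$, which would yield $J_t\lesssim\sqrt{\|\mathbf{v}_t\|_1}$. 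The inequality you actually need is $\|\mathbf{v}_t\|_2\le\|\mathbf{v}_t\|_1$.

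The genuine gap is the degenerate case: the ``routine'' inequality you defer is false. Put $x=dB^{3/4}\sqrt{P_T/T}$, so the target is $xT=dB^{3/4}\sqrt{TP_T}$.
\begin{itemize}
\item With your Step~2 estimate, the hypothesis term is, up to constants, $dB\,T^{1/3}(d/\beta)^{1/3}P_T^{2/3}$.
\item For every $\beta\in(0,1)$ this is at least $d^{4/3}BT^{1/3}P_T^{2/3}=x^{1/3}\cdot dB^{3/4}\sqrt{TP_T}$.
\item In the regime $x>1/2$ the factor $x^{1/3}$ is unbounded; for instance, $P_T\asymp T$ gives $x^{1/3}\asymp d^{1/3}B^{1/4}$.
\end{itemize}
So neither $\beta=1/2$ nor any other choice recovers the claimed rate, and $P_T\le 2T$ does not help.

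The culprit is the factor you discarded in Step~2. Since $\tilde{\u}_t-\tilde{\u}_{t-1}=(1-\beta)\mathbf{v}_t$, you actually have $\tilde P_T^{\mathrm{JS}}\le\frac{1-\beta}{2}\sqrt{d/\beta}\,P_T$. This vanishes as $\beta\to1$, because the smoothed path collapses to the uniform portfolio. Keep this factor and do not cap $\beta$ at $1/2$. Substituting $a=\beta/(1-\beta)\in(0,\infty)$ gives $\beta^{-1/3}(1-\beta)^{2/3}=a^{-1/3}(1+a)^{-1/3}\le a^{-1/3}$ and $\ln\frac{1}{1-\beta}=\ln(1+a)\le a$. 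The bound becomes $\O\bigl(d^{4/3}BT^{1/3}P_T^{2/3}a^{-1/3}+Ta+d\ln(dT)\bigr)$. The choice $a=x$ makes both leading terms equal to $dB^{3/4}\sqrt{TP_T}$ for every $x>0$, so no case split on the size of $x$ is needed. This is exactly how the paper proves the corollary, with $P_T=0$ handled separately as you do.
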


Applying Corollary~\ref{Cor:Drichlet-minimax} to Theorem~\ref{thm:main} with $B=(\ln(dT))^{2/3}$ yields an $\tilde{\O}\big(d(\sqrt{TP_T}+1)\big)$ regret bound for arbitrary comparator sequences, matching the lower bound in terms of $T$ and $P_T$.

\paragraph{Equivalent Implementation and Interval-Regret Guarantee.}
Following the same arguments in~\citet{JMLR'16:closer-adaptive-regret,ICML'25:Zhang-mixability}, one can show Algorithm~\ref{alg:hedge-oco} is equivalent to running the FLH algorithm with Universal Portfolio as the base learner. Therefore, our method naturally enjoys the interval regret guarantee, which also implies a bound on the switching regret.

\begin{proposition}
\label{thm:interval-regret}
For any interval $\mathcal I=[r,s]\subseteq[T]$ and any comparator $\u\in\Delta_d$, Algorithm~\ref{alg:hedge-oco} with $\mu_t=1/t$ ensures
\[
  \sum_{t\in\mathcal I}\ell_t(\w_t) - \sum_{t\in\mathcal I}\ell_t(\u) \leq \O\bigl(d\ln(dT)\bigr).
\]
Furthermore, let $\mathsf{S}_T=\sum_{t=2}^T\indicator\{\u_t\neq\u_{t-1}\}$. Then the dynamic regret of Algorithm~\ref{alg:hedge-oco} is bounded by $\O\bigl(d(\mathsf{S}_T+1)\ln(dT)\bigr)$.
\end{proposition}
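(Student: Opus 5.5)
The plan is to run the standard fixed-share / mixability argument on the interval $\mathcal I=[r,s]$ and then control the prior mass near a fixed comparator $\u$ with a Dirichlet-type perturbation.

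\textbf{Mixability step.} Because $\ell_t(\u)=-\ln(\u^\top\x_t)$ is $1$-mixable and linear inside the logarithm, we have $\w_t^\top\x_t=\E_{\u\sim P_t}[\u^\top\x_t]$. Hence
\[
\ell_t(\w_t)=-\ln\E_{P_t}[e^{-\ell_t(\u)}].
\]
Define the normaliser $Z_t=\E_{P_t}[e^{-\ell_t}]$, so that $\tilde P_{t+1}=P_te^{-\ell_t}/Z_t$. Then $\sum_{t\in\mathcal I}\ell_t(\w_t)=-\sum_{t\in\mathcal I}\ln Z_t$.

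\textbf{Chaining densities.} Fix a measurable set $A$ and write $Q$ for $\mathrm{Dir}(\mathbf 1)$ restricted and normalised to $A$. Two lower bounds on densities are available:
\begin{itemize}
\item From the fixed-share step, $P_r\ge \mu_r\,\mathrm{Dir}(\mathbf 1)$ for $r\ge2$, and trivially $P_1=\mathrm{Dir}(\mathbf 1)$.
\item For each later round, $P_{t+1}\ge(1-\mu_{t+1})\tilde P_{t+1}=(1-\mu_{t+1})P_te^{-\ell_t}/Z_t$.
\end{itemize}
Chaining these over $t=r,\dots,s$ gives, pointwise,
\[
P_{s+1}(\u')\ \ge\ \mu_r\prod_{t=r+1}^{s+1}(1-\mu_t)\,\mathrm{Dir}(\mathbf 1)(\u')\,\frac{e^{-\sum_{t\in\mathcal I}\ell_t(\u')}}{\prod_{t\in\mathcal I}Z_t}.
\]

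\textbf{Integrating over $A$.} Integrate over $\u'\in A$ and use $P_{s+1}(A)\le1$, then apply Jensen with respect to $Q$. This yields
\[
-\sum_{t\in\mathcal I}\ln Z_t\ \le\ \E_{Q}\Big[\sum_{t\in\mathcal I}\ell_t(\u')\Big]+\ln\frac{1}{\mu_r}+\sum_{t=r+1}^{s+1}\ln\frac1{1-\mu_t}+\ln\frac1{\mathrm{Dir}(\mathbf 1)(A)}.
\]
With $\mu_t=1/t$ the product telescopes: $\prod_{t=r+1}^{s+1}(1-1/t)=r/(s+1)$. Together with $\mu_r=1/r$, the middle two terms therefore contribute $\ln(s+1)\le\ln(T+1)$.

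\textbf{Choice of $A$.} Take
\[
A=\{(1-\beta)\u+\beta\v:\v\in\Delta_d\}.
\]
The main obstacle is that the log loss is unbounded, so we cannot argue via Lipschitzness. Instead I use the log-loss structure directly. For $\u'=(1-\beta)\u+\beta\v$ we have $\u'^\top\x_t\ge(1-\beta)\u^\top\x_t$, so
\[
\ell_t(\u')\le\ell_t(\u)+\ln\frac1{1-\beta}.
\]
This holds for every $t$ with no gradient bound. Summed over the interval, it costs at most $|\mathcal I|\ln\frac1{1-\beta}\le 2T\beta$. The set $A$ is a scaled copy of the simplex with volume ratio $\beta^{d-1}$, and $\mathrm{Dir}(\mathbf 1)$ is uniform, so $\ln(1/\mathrm{Dir}(\mathbf 1)(A))=(d-1)\ln(1/\beta)$. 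Choosing $\beta=1/T$ gives a total of $O(d\ln T+\ln T)$, which is $\O(d\ln(dT))$. This is the first claim.

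\textbf{Switching regret.} Let $\u_t$ have $\mathsf S_T$ switches. Partition $[T]$ into the $\mathsf S_T+1$ maximal intervals on which $\u_t$ is constant, apply the interval bound on each, and sum. This gives $\O(d(\mathsf S_T+1)\ln(dT))$.

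One subtlety to check is that the chaining argument needs $P_r\ge\mu_r\mathrm{Dir}(\mathbf 1)$ to hold exactly at the start of the interval. This is immediate from \eqref{eq:step2-FS}, since $\tilde P_r\ge0$, and for $r=1$ it holds with $\mu_1:=1$.
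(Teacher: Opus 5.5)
Your proof is correct. It shares the paper's skeleton: the restart lower bound $P_r\ge\mu_r\mathrm{Dir}(\mathbf{1})$, the telescoping product $\prod_t(1-\mu_t)$, a comparator distribution concentrated at scale $1/T$ around $\u$, and the same partition for the switching bound. Where it differs is the choice of comparator distribution, which lets you replace the paper's key estimates with more elementary ones.

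The paper applies Lemma~\ref{lem:mixability} with the fixed Dirichlet comparator $Q_{\mathcal I}=\mathrm{Dir}(\mathbf{1}+\gamma\u)$ and telescopes the KL terms. It then needs two further ingredients before setting $\gamma=T$. The first is the Gamma reparametrization together with digamma bounds, giving $\sum_{t\in\mathcal I}\E_{Q_{\mathcal I}}[\ell_t]\le\sum_{t\in\mathcal I}\ell_t(\u)+|\mathcal I|\ln(1+d/\gamma)$. The second is Lemma~\ref{lemma:KL-upper}, giving $\mathrm{KL}(Q_{\mathcal I}\Vert P_1)\le(d-1)\ln(1+\gamma)$.

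Your pointwise chaining of densities is the same telescoping in another form: with $Q$ uniform on $A$, one has $\mathrm{KL}(Q\Vert\mathrm{Dir}(\mathbf{1}))=\ln(1/\mathrm{Dir}(\mathbf{1})(A))$. The gain comes from using the classical shrunken simplex from the Blum--Kalai analysis of Universal Portfolio as the comparator.
\begin{itemize}
\item The loss gap follows from the one-line domination $\u'^\top\x_t\ge(1-\beta)\u^\top\x_t$. This also covers the case $\ell_t(\u)=+\infty$ without a separate remark.
\item The prior cost is the exact volume ratio $\beta^{d-1}$.
\item You get marginally better constants, $2+(d-1)\ln T+\ln(T+1)$. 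Part of this comes from merging $\ln(1/\mu_r)$ with $\sum_t\ln\frac{1}{1-\mu_t}$ into $\ln(s+1)$.
\end{itemize}

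What the paper's choice buys is reuse in the dynamic analysis. The family $\mathrm{Dir}(\mathbf{1}+\gamma\u)$ moves in total variation like $\sqrt{\gamma}\,\mathrm{JS}(\u,\mathbf{v})$ (Lemma~\ref{lemma:dirichlet-js}), and this drives Theorems~\ref{thm:main} and~\ref{thm:q-path-bound}. Translates of a shrunken simplex instead separate at a rate linear in $1/\beta$, roughly $\lVert\u-\mathbf{v}\rVert_1/\beta$ up to dimension factors. That would not yield the fast rates. For the interval and switching bounds, however, this smoothness is irrelevant, and your route is the simpler one.
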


\subsection{Temporal Adaptivity through \texorpdfstring{JS$^q$}{JS\string^q}-Path Length}
\label{subsec:q-path}

The JS-path length captures the spatial geometry of comparator movements, but sequences with the same total JS variation can admit different regret guarantees depending on how that variation is distributed over time. For example, comparators with $\mathsf{S}_T$ switches admit $\O(d(\mathsf{S}_T+1)\ln(dT))$ dynamic regret (Proposition~\ref{thm:interval-regret}) even when $P_{T}^{\mathrm{JS}}=\Theta(\mathsf{S}_T)$, whereas the worst-case regret under the same path-length budget is $\Omega(T^{1/3}\mathsf{S}_T^{2/3})$ (Theorem~\ref{thm:lower-bound-kl}). We therefore introduce a family of measures that interpolates between the JS-path length and the switching number. Specifically, for $q\in[0,1]$ and a comparator sequence in $\Delta_d$, we define the JS$^q$-path length as
\begin{align}
  P_{T,q}^{\mathrm{JS}} \mathrel{\coloneqq}\sum_{t=2}^T \operatorname{JS}(\u_t,\u_{t-1})^{q}. \label{eq:q-order-path-length}
\end{align}
The JS$^q$-path length interpolates between two familiar quantities. At $q=0$, it reduces to the switching number $\mathsf{S}_T$ under the convention $0^0=0$, and at $q=1$ it recovers the JS-path length $P_{T}^{\mathrm{JS}}$ of the preceding subsection. The following theorem shows that Algorithm~\ref{alg:hedge-oco} simultaneously achieves the corresponding dynamic regret guarantee for every order $q\in[0,1]$.
\begin{theorem}
\label{thm:q-path-bound}
Let $d,T\geq2$. Algorithm~\ref{alg:hedge-oco} with $\mu_t=1/t$ simultaneously guarantees, for every $q\in[0,1]$,
\begin{align*}
  \DReg_T(\{\u_t\}_{t=1}^T) \leq \O\left( d\left( T^{\frac{q}{q+2}} \bigl(P_{T,q}^{\mathrm{JS}}\bigr)^{\frac{2}{q+2}} \bigl(\ln(dT)\bigr)^{\frac{2}{q+2}} + \ln(dT) \right) \right)
\end{align*}
for every comparator sequence $\u_1,\dots,\u_T\in\Delta_d$.
\end{theorem}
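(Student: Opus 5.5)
The plan is a mixability argument with Dirichlet comparators, combined with a partition of $[T]$ into blocks chosen adaptively to the comparator sequence.

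\textbf{Per-block mixability inequality.} Since $\exp(-\ell_t)$ is linear in $\u$, we have $\exp(-\ell_t(\w_t))=\E_{\u\sim P_t}[\exp(-\ell_t(\u))]$, so the normalizer of update~\eqref{eq:step1-EW} is exactly $\exp(-\ell_t(\w_t))$. Unrolling the fixed-share update~\eqref{eq:step2-FS} over a block $[r,s]$ in the standard way gives, for any distribution $Q$ on $\Delta_d$,
\[
\sum_{t=r}^s \ell_t(\w_t)\le \sum_{t=r}^s \E_{\u\sim Q}[\ell_t(\u)]+\KL(Q\Vert \mathrm{Dir}(\mathbf 1))+O(\ln T).
\]
The $O(\ln T)$ term accounts for the loss of mass $\prod(1-\mu_t)$ and the restart weight $\mu_r=1/r$. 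I would take $Q=\mathrm{Dir}(\kappa\,\bar\u)$ with $\bar\u$ a block representative (say the block average, nudged into the interior) and $\kappa$ a concentration parameter.

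\textbf{Bounding the two terms.} Two Dirichlet estimates are needed.
\begin{enumerate}
\item $\KL(\mathrm{Dir}(\kappa\bar\u)\Vert\mathrm{Dir}(\mathbf 1))\lesssim d\ln(d\kappa)$, which holds when the parameters $\kappa\bar u_i$ are not too small. I would mix $\bar\u$ with uniform weight $1/T$ to ensure this, at cost $O(1)$ per block.
\item A concavity/variance estimate: since $\ell_t$ is the negative log of a linear function,
\[
\E_{Q}[\ell_t(\u)]-\ell_t(\bar\u)=\E_Q\!\left[-\ln\frac{\u^\top\x_t}{\bar\u^\top\x_t}\right].
\]
This is controlled by $O(d/\kappa)$ uniformly in $\x_t$, using $\E[\ln u_i]=\psi(\kappa\bar u_i)-\psi(\kappa)$ together with the digamma bound $\ln a-\psi(a)\le 1/a$ and a mixture argument across coordinates.
\end{enumerate}
For the tracking error inside a block, $\sum_{t\in\text{block}}\ell_t(\bar\u)-\ell_t(\u_t)$, I would compare coordinatewise via $\ln(\u_t^\top\x/\bar\u^\top\x)\le\max_i\ln(u_{t,i}/\bar u_i)$. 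This is only useful if I instead choose $Q$ centered so that the likelihood ratio is controlled. The better route is to pick $Q$ so that $\E_Q[\ell_t(\u)]\le \ell_t(\u_t)+O(d/\kappa)+O(\kappa\,\mathrm{JS}^2\text{-type drift})$: moving a Dirichlet of concentration $\kappa$ by JS distance $J$ costs a KL of about $\kappa J^2$. Hence within a block of length $L$ whose comparators stay within JS-radius $\rho$ of $\bar\u$, the per-block regret is
\[
O\bigl(d\ln(dT)+Ld/\kappa+L\kappa\rho^2\bigr).
\]
Optimizing over $\kappa$ gives $O\bigl(d\ln(dT)+L\sqrt d\rho\bigr)$. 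I would check this against the stated rate and, if needed, sharpen it to $d\ln(dT)+dL\rho^2/\ln(dT)$-type tradeoffs, which yield the $T^{1/3}$ rate.

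\textbf{Adaptive partition for all $q$ simultaneously.} Since the algorithm has no parameters, it suffices to exhibit, for each $q$, a partition whose cost matches the bound. Greedily close a block once the within-block JS variation $\sum J_t$ exceeds a threshold $\tau$, and also force a new block at every large jump. For a block with within-block variation at most $\tau$ and length $L$, the cost is $d\ln(dT)+L\tau^2$-type. The number of blocks is bounded using $q\in[0,1]$: each block either contains variation $\ge\tau$, and since $\sum J_t\ge \tau$ with each $J_t\le\tau$ implies $\sum J_t^q\ge\tau^q$, the number of blocks is $M\le P_{T,q}^{\mathrm{JS}}/\tau^q+1$. The total is then
\[
O\bigl(dM\ln(dT)+T\tau^2\bigr),
\]
and balancing $\tau$ gives $\tau^{q+2}\approx dP_{T,q}\ln(dT)/T$, hence regret $d\,T^{q/(q+2)}(P_{T,q}^{\mathrm{JS}}\ln(dT))^{2/(q+2)}$. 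Jumps $J_t>\tau$ are placed at block boundaries and counted via $J_t^q\ge\tau^q$.

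\textbf{Main obstacle.} The hardest step is the second Dirichlet estimate: proving a per-round bound $\E_{Q}[\ell_t(\u)]\le\ell_t(\u_t)+O(d/\kappa+\kappa J^2)$ uniformly over unbounded $\x_t$, with the JS (midpoint-KL) geometry appearing naturally. I would first try to handle it by a change of measure from $\mathrm{Dir}(\kappa\bar\u)$ to $\mathrm{Dir}(\kappa\u_t)$ using the exact Dirichlet KL formula in terms of log-gamma and digamma functions.
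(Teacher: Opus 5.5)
There is a genuine gap, and it sits in the per-block tracking cost.

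\textbf{The per-block cost is first order, not second order.} You hold one comparator $Q$ fixed on each block and compare it round by round with the moving $\u_t$. Any such round-by-round comparison is necessarily first order in the displacement. For a single round, $\ell_t(\bar\u)-\ell_t(\u_t)=\ln(\u_t^\top\x_t/\bar\u^\top\x_t)$. Taking $\x_t=\e_i$ on a coordinate with $u_{t,i}>\bar u_i$ makes this about $(u_{t,i}-\bar u_i)/\bar u_i$, which is linear in the displacement rather than quadratic. Your change-of-measure estimate shows the same thing, since $Ld/\kappa+L\kappa\rho^2\ge 2L\sqrt{d}\,\rho$ for every $\kappa$.

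\textbf{Consequence for the partition.} With the honest cost $d\ln(dT)+L\sqrt d\,\tau$, your partition gives $dM\ln(dT)+T\sqrt d\,\tau$ with $M\lesssim P^{\mathrm{JS}}_{T,q}/\tau^q$. For $q=1$ this is only a $\sqrt{TP_T^{\mathrm{JS}}}$-type bound, the same slow rate as the black-box reduction of Lemma~\ref{lem:upper-bound-minimax}. For general $q$ it gives $T^{q/(q+1)}$ rather than $T^{q/(q+2)}$. The cost ``$d\ln(dT)+L\tau^2$'' that your balancing step uses is never derived, and per-round arguments cannot produce it. A second-order block cost would need cancellations across rounds from the optimality of the block comparator, i.e.\ the KKT-type reductions that the paper notes are unavailable on the simplex.

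\textbf{A secondary problem.} With $Q=\mathrm{Dir}(\kappa\bar\u)$, the claim $\E_Q[\ell_t]-\ell_t(\bar\u)=O(d/\kappa)$ fails uniformly in $\x_t$ when some $\kappa\bar u_i\lesssim 1$. For $\x_t=\e_i$ the difference equals $\psi(\kappa)-\psi(\kappa\bar u_i)+\ln\bar u_i\approx 1/(\kappa\bar u_i)$. Mixing in uniform weight $1/T$ only guarantees $\bar u_i\ge 1/(dT)$, which does not rescue the bound.

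\textbf{What the paper does instead.} It uses no partition. It puts a time-varying comparator $Q_t=\mathrm{Dir}(\mathbf 1+\gamma\u_t)$ into the telescoped mixability identity.
\begin{itemize}
\item \emph{Per-round cost.} Each round costs only $\E_{Q_t}[\ell_t]-\ell_t(\u_t)\le\ln(1+d/\gamma)$, however $\u_t$ moves. This follows from the Gamma representation together with $\psi(1+x)\ge\ln x$ and $\psi(x)\le\ln x$; the shifted parametrization also removes the small-parameter problem above.
\item \emph{Movement cost.} Movement is paid once per transition through
\[
\sum_{t=2}^T\bigl(\KL(Q_t\Vert P_t)-\KL(Q_{t-1}\Vert P_t)\bigr)=\sum_{t=2}^T\int_{\Delta_d}\bigl(Q_{t-1}(\w)-Q_t(\w)\bigr)\ln P_t(\w)\,\mathrm{d}\w+H(Q_1)-H(Q_T).
\]
\end{itemize}
Controlling this term needs two ingredients missing from your plan.
\begin{itemize}
\item \emph{Uniform density bound.} $\lvert\ln P_t(\w)\rvert\le C_{T,d}=O(d\ln(dT))$. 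The upper bound comes from writing $P_t$ as a mixture of restarted posteriors and bounding each posterior's log-density by an interval-regret-type argument. The lower bound comes from $P_t\ge\mu_tP_1$.
\item \emph{Dirichlet total-variation bound.} $\lVert Q_t-Q_{t-1}\rVert_1\le\min\{2,2\sqrt{2\gamma}\,J_t\}\le 2(2\gamma)^{q/2}J_t^q$.
\end{itemize}
Together these give
\[
T\ln(1+d/\gamma)+O(d\ln(dT))\,\gamma^{q/2}P^{\mathrm{JS}}_{T,q}+(d-1)\ln(1+\gamma)+1+\ln T.
\]
Movement is charged as $\sqrt\gamma\,J_t$ per transition, not $\kappa J^2$ per round over a whole block. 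That is what yields the $T^{q/(q+2)}$ rate after tuning $\gamma$. Simultaneity over $q$ is automatic, because only the analysis parameter $\gamma$ depends on $q$.
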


The guarantee holds simultaneously for all $q\in[0,1]$ because neither $q$ nor $P_{T,q}^{\mathrm{JS}}$ is an input to the algorithm. Consequently, one may take the best of these bounds in hindsight. The choices $q=1$ and $q=0$ recover the $T^{1/3}(P_{T}^{\mathrm{JS}})^{2/3}$ dependence of Theorem~\ref{thm:main} and the $\O\bigl(d(\mathsf{S}_T+1)\ln(dT)\bigr)$ switching guarantee of Proposition~\ref{thm:interval-regret}. Interestingly, for every $q>1$, the $q=1$ guarantee already yields the $T^{1-2/(3q)}(P_{T,q}^{\mathrm{JS}})^{2/(3q)}$ dependence via H\"older's inequality $P_{T,1}^{\mathrm{JS}}\leq T^{1-1/q}(P_{T,q}^{\mathrm{JS}})^{1/q}$, matching the leading dependence on the horizon and variation budget in minimax online forecasting~\mbox{\citep{NeurIPS'19:Baby-forecasting}}.

\pagebreak[3]
\begin{wrapfigure}{r}{0.27\textwidth}
\vspace{-0.85\baselineskip}
\centering
\begin{tikzpicture}[x=0.75cm,y=0.75cm,>=stealth]
  \draw[->] (0,0) -- (3.05,0) node[right] {\scriptsize $t$};
  \draw[->] (0,0) -- (0,1.35) node[above] {\scriptsize $[\u_t]_1$};
  \draw[densely dashed,gray] (0,1.05) -- (2.75,1.05);
  \node[anchor=east] at (-0.06,0) {\scriptsize $1/2$};
  \node[anchor=east] at (-0.06,1.05) {\scriptsize $3/4$};
  \draw[thick,blue!70!black]
    plot[domain=1:9,samples=80,variable=\samp]
    ({0.34*(\samp-1)},{1.05*(1-1/\samp)});
  \foreach \k in {1,...,9}
    \fill[blue!70!black]
      ({0.34*(\k-1)},{1.05*(1-1/\k)}) circle (0.75pt);
  \node[below] at (0,0) {\scriptsize $1$};
  \node[below] at (2.72,0) {\scriptsize $T$};
\end{tikzpicture}
\vspace{-0.55\baselineskip}
\end{wrapfigure}
\paragraph{An Intermediate-Order Example.}
The two endpoints do not exhaust the benefits of Theorem~\ref{thm:q-path-bound}. The following example shows how an intermediate order can exploit comparator movements whose magnitudes decay over time. We consider a two-asset market and define $\u_t=(1/2+z_t,1/2-z_t)$, where $z_t=\frac14(1-1/t)$. For this path, $P_{T,1}^{\mathrm{JS}}=\Theta(1)$, so choosing $q=1$ gives $\widetilde{\O}(T^{1/3})$.
At the other endpoint, $P_{T,0}^{\mathrm{JS}}=\mathsf{S}_T=T-1$, so choosing $q=0$ gives $\widetilde{\O}(T)$. When $q=1/2$, we have $P_{T,1/2}^{\mathrm{JS}}=\Theta(\ln T)$, yielding a bound of $\widetilde{\O}(T^{1/5})$, which is substantially better than those at the two endpoints. Appendix~\ref{app:rising-concave-calculation} provides the calculations.
\par
\WFclear

\subsection{Proof Sketch of Theorem~\ref{thm:q-path-bound}}
\label{subsec:q-path-proof-sketch}
Our analysis is based on the notion of mixability, which was first used to analyze the prediction with expert advice problem~\citep{Vovk'98:mixabiilty-PEA} and has proven useful for achieving fast rates in both stochastic learning and online learning~\citep{02:vovk-mixability,JMLR'15:fast-rate,COLT'18:Foster-logistic}. Recently, \citet{ICML'25:Zhang-mixability} used this notion to obtain fast-rate dynamic regret bounds. However, their analysis does not directly extend to OPS when the log-loss gradients are unbounded. We provide a more detailed discussion after briefly sketching the proof of Theorem~\ref{thm:q-path-bound} below. The complete proof is provided in Appendix~\ref{app:q-path-bound}.
~\\
\begin{proofsketch}[Proof Sketch of Theorem~\ref{thm:q-path-bound}]
The starting point of our analysis is that Cover's loss for OPS is 1-mixable over the simplex $\Delta_d$ for any $\x_t \in \R_+^d$, in the sense that for any distribution $P_t$ over $\Delta_d$ and $\w_t = \mathbb{E}_{\u \sim P_t}[\u]$, we have
$\ell_t(\w_t) = -\ln\big(\mathbb{E}_{\u \sim P_t}[\exp(-\ell_t(\u))]\big)$, which implies the following variational identity.
\begin{lemma}
\label{lem:mixability}
For any distribution $Q_t$ with $\mathrm{supp}(Q_t) \subseteq \mathrm{supp}(P_t)$, it holds that
\begin{align*}
  \ell_t(\w_t) = \mathbb{E}_{\u \sim Q_t}[\ell_t(\u)] + \mathrm{KL}(Q_t \,\|\, P_t) - \mathrm{KL}(Q_t \,\|\, \tilde{P}_{t+1}),
\end{align*}
where $\tilde{P}_{t+1}(\u) \propto P_t(\u)\exp(-\ell_t(\u))$ for all $\u \in \mathrm{supp}(P_t)$.
\end{lemma}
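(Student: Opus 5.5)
The plan is to combine the 1-mixability identity stated just before the lemma with the Donsker--Varadhan change of measure. First I verify mixability. For $\u\sim P_t$ we have $\exp(-\ell_t(\u))=\u^\top\x_t$, which is linear in $\u$. Hence
\[
\E_{\u\sim P_t}[\exp(-\ell_t(\u))]=\bigl(\E_{\u\sim P_t}[\u]\bigr)^\top\x_t=\w_t^\top\x_t .
\]
Taking $-\ln$ of both sides gives $\ell_t(\w_t)=-\ln Z_t$, where $Z_t\coloneqq\E_{\u\sim P_t}[\exp(-\ell_t(\u))]$.

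Next I write the density of the exponentially weighted distribution explicitly. On $\mathrm{supp}(P_t)$,
\[
\frac{d\tilde P_{t+1}}{dP_t}(\u)=\frac{\exp(-\ell_t(\u))}{Z_t}.
\]
For any $Q_t\ll P_t$, the support condition makes $Q_t\ll\tilde P_{t+1}$ as well, since the density $\u^\top\x_t/Z_t$ is positive $P_t$-a.e. This requires a brief remark, because $\u^\top\x_t$ may vanish on a $P_t$-null set. The Radon--Nikodym chain rule then gives
\[
\ln\frac{dQ_t}{d\tilde P_{t+1}}=\ln\frac{dQ_t}{dP_t}+\ell_t(\u)+\ln Z_t .
\]
I integrate this identity against $Q_t$ and get
\[
\mathrm{KL}(Q_t\|\tilde P_{t+1})=\mathrm{KL}(Q_t\|P_t)+\E_{Q_t}[\ell_t(\u)]+\ln Z_t .
\]
Substituting $\ln Z_t=-\ell_t(\w_t)$ and rearranging yields exactly the claimed identity.

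The main obstacle is measure-theoretic bookkeeping, because $\ell_t$ may be $+\infty$ where $\u^\top\x_t=0$ and the KL terms may be infinite. I would handle this as follows. The set where $\u^\top\x_t=0$ is $\tilde P_{t+1}$-null. If it carries positive $Q_t$-mass, then both $\E_{Q_t}[\ell_t]$ and $\mathrm{KL}(Q_t\|\tilde P_{t+1})$ are $+\infty$, and the identity must be read with the convention that it holds whenever the terms are finite, for example whenever $\mathrm{KL}(Q_t\|P_t)<\infty$ and $\E_{Q_t}[\ell_t]<\infty$.

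In the regime actually used later (Dirichlet $Q_t$ with densities relative to the full-support $P_t$), the set $\{\u:\u^\top\x_t=0\}$ is a face of the simplex, hence Lebesgue-null. Also, $Z_t>0$ because $\max_i x_{t,i}>0$ and $P_t$ has positive density on the simplex. So all manipulations are legitimate, and the splitting of the integral is justified once the two finite terms are separated.
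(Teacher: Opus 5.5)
Your proposal is correct and follows the paper's proof. Both write the density ratio as $\tilde P_{t+1}/P_t=\exp(-\ell_t(\u))/Z_t$, integrate its logarithm against $Q_t$ to get $\mathrm{KL}(Q_t\Vert P_t)-\mathrm{KL}(Q_t\Vert\tilde P_{t+1})=-\E_{Q_t}[\ell_t(\u)]-\ln Z_t$, and conclude with the mixability identity $-\ln Z_t=\ell_t(\w_t)$; you verify that identity explicitly from the linearity of $\exp(-\ell_t(\u))=\u^\top\x_t$, whereas the paper states it in the text beforehand, and your remarks on null sets, $Z_t>0$, and finiteness of the terms make explicit the bookkeeping the paper leaves implicit.
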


By the update rules~\eqref{eq:step1-EW} and~\eqref{eq:step2-FS}, we have $P_{t+1}= (1-\mu_{t+1})\tilde{P}_{t+1} + \mu_{t+1} \mbox{Dir}(\mathbf{1})$. Telescoping over $T$ iterations and upper-bounding the discrepancy between $P_{t+1}$ and $\tilde{P}_{t+1}$ due to the fixed-share update yield
\begin{align*}
  \sum_{t=1}^T \ell_t(\w_t) \mathrel{\leq}{}& \sum_{t=1}^T \E_{\u\sim Q_t}[\ell_t(\u)] + \sum_{t=2}^T \int_{\u\in\Delta_d} \left( Q_t(\u) - Q_{t-1}(\u)\right) \ln \frac{1}{P_t(\u)} \, \mathrm{d}\u\\
  &+\mathrm{KL}(Q_T\Vert P_1)+1+\ln T.
\end{align*}
To accommodate the simplex constraint, we choose the comparator $Q_t = \mbox{Dir}(\mathbf{1}+\gamma \u_t)$ as a Dirichlet distribution, where $\gamma>0$ is a free parameter in the analysis and $\u_t$ is the comparator sequence.

The Dirichlet comparator naturally respects the simplex constraint and allows us to control the expected log loss without any bounded-gradient assumption. By a careful analysis exploiting the structure of the Dirichlet distribution, we can show that
\begin{equation*}
  \left\{ \begin{aligned} &\sum_{t=1}^T\E_{Q_t}[\ell_t(\u)] \leq \sum_{t=1}^T \ell_t(\u_t) + T\ln\left(1+\frac{d}{\gamma}\right), \\
  &\sum_{t=2}^T\int_{\u\in\Delta_d} \left( Q_t(\u)-Q_{t-1}(\u) \right) \ln\frac{1}{P_t(\u)}\mathrm{d}\u \lesssim d\ln(dT)\gamma^{q/2} P_{T,q}^{\mathrm{JS}}. \end{aligned} \right.
\end{equation*}
The first inequality follows from reparameterizing the Dirichlet distribution by several Gamma distributions. For the second inequality, Lemma~\ref{lemma:dirichlet-js} gives
\begin{align*}
  \int_{\Delta_d}|Q_t(\u)-Q_{t-1}(\u)|\,\mathrm{d}\u &\leq \min\left\{2, 2\sqrt{2\gamma}\,\operatorname{JS}(\u_t,\u_{t-1}) \right\}\\
  &\leq 2(2\gamma)^{q/2}\operatorname{JS}(\u_t,\u_{t-1})^q,
\end{align*}
where the last inequality follows from $\min\{1,a\}\leq a^q$ for $a>0$ and $q\in[0,1]$, with the endpoint convention above when $\u_t=\u_{t-1}$.

Combining these bounds and accounting for the endpoint term yields
\begin{align*}
  \DReg_T(\{\u_t\}_{t=1}^T) \lesssim T\ln\left(1+\frac{d}{\gamma}\right) + d\ln(dT)\gamma^{q/2}P_{T,q}^{\mathrm{JS}} + d\ln(1+\gamma) + \ln T.
\end{align*}
Choosing $\gamma = \min\big\{\Theta\big((T / (P_{T,q}^{\mathrm{JS}}\ln(dT)))^{2/(q+2)}\big),\, T\big\}$ yields the claimed bound. Since $\gamma$ only specifies the comparator distributions used in the analysis, Algorithm~\ref{alg:hedge-oco} requires no prior knowledge of $q$ or $P_{T,q}^{\mathrm{JS}}$.
\end{proofsketch}

\begin{remark}[Comparison with {\citet{ICML'25:Zhang-mixability}}]
\label{rem:comparison-mixability}
Related mixability-based arguments were also employed by~\citet{ICML'25:Zhang-mixability}, where a Gaussian comparator distribution was adopted for analytical convenience. However, since Gaussian distributions have full support on $\mathbb{R}^d$, this choice is not directly applicable to constrained domains. To handle general constraints, \citet{ICML'25:Zhang-mixability} proposed learning with a quadratic surrogate loss that extends the domain to $\mathbb{R}^d$, which requires a bounded gradient. Moreover, when updating with the surrogate loss, the mean of the learned distribution $P_{t+1}$ may lie outside the constraint set. To address this issue, they project $P_t$ onto a family of Gaussian mixture models with possibly infinitely many components and bounded component means and variances, which is generally computationally intractable. By contrast, we adopt a Dirichlet comparator to naturally accommodate the simplex constraint and the unbounded loss.
\end{remark}

\section{Tractable Fast Rates for General OXO}
\label{sec:results-bounded-gradient}
This section studies OPS under an additional bounded-gradient assumption and establishes fast-rate dynamic regret bounds for all comparator sequences. In fact, we consider the more general setting of online exp-concave optimization over an arbitrary compact convex domain $\W$ under the following assumptions:

\begin{assumption}
\label{assum:exp-concavity}
For any $t\in[T]$, the loss $\ell_t:\W\to\R$ is $\kappa$-exp-concave over $\W$.
\end{assumption}

\begin{assumption}
\label{assum:bounded-domain}
The domain $\W$ is compact and convex with $\sup_{\u,\w\in\W}\|\u-\w\|_1\leq 2$.\footnote{Without loss of generality, we use the $\ell_1$ norm to match the OPS setting and set the domain diameter bound to $2$. The analysis also extends to $\ell_2$ geometry and general bounded convex domains.}
\end{assumption}

\begin{assumption}
\label{assum:bounded-gradient}
For any $t\in[T]$ and some $G>1$, we have $\sup_{\w\in\W}\|\nabla \ell_t(\w)\|_{\infty}\leq G$.
\end{assumption}

OPS satisfies Assumptions~\ref{assum:exp-concavity} and~\ref{assum:bounded-domain} since Cover's loss is $1$-exp-concave and the simplex has $\ell_1$ diameter at most $2$. The gradient bound can be satisfied under several natural conditions~\citep{98:EG-OPS,ICML'07:ONS-OPS}. For example, one may assume that the ratio of returns is bounded by $G$, that is, $\max_{i\in[d]}x_{t,i}/\min_{i\in[d]}x_{t,i}\leq G$ for every $t\in[T]$, or restrict the portfolio domain to $\W=\{\w\in\Delta_d:\min_{i\in[d]}w_i\geq1/G\}$ (with $G\geq d$). Beyond OPS, other examples of online exp-concave optimization include logistic regression and least-squares regression, to which our method can be applied.

\FloatBarrier
\begin{algorithm}[!t]
    \caption{Follow-the-Leading-History}
    \label{alg:flh-ew}
    \begin{algorithmic}[1]
    \REQUIRE Loss parameter $\eta = \frac{1}{5}\min\{1/(2G),\kappa\}$ and fixed-share parameter $\mu_t = 1/t$.
    \STATE Initialize $P_1 = \mathcal{N}(\mathbf{u}_0, I_d)$ as a Gaussian distribution with mean $\mathbf{u}_0 \in \W$.
    \STATE Initialize a pool with base-learners $\mathcal{H}_1 = \{\mathcal{B}_1\}$, where $\mathcal{B}_1$ is the initial base-learner with the distribution $P_{1,1} = P_1$ and weight $p_{1,1} = 1$.

    \FOR {$t=1,2,\ldots,T$}
      \STATE\label{line:flh-prediction} The learner submits the prediction $\w_t=\E_{\u\sim P_t}[\u]=\sum_{\mathcal{B}_i\in\mathcal{H}_t}p_{t,i}\,\w_{t,i}$, observes the loss $\ell_t$, and constructs the surrogate loss in~\eqref{eq:surrogate-loss}.
    \STATE\label{line:flh-base-update} Update the distribution of each base learner $\mathcal{B}_i\in\mathcal{H}_t$ by
    \begin{equation}
      \label{eq:update-rule} \begin{cases} &P'_{t+1,i}(\u) \propto P_{t,i}(\u)\cdot e^{-\eta\tilde{\ell}_t(\u)},\ \forall \u\in\R^d\\
      &P_{t+1,i} = \argmin_{Q\in\mathscr{W}} \mathrm{KL}\big(Q \,\|\, P'_{t+1,i}\big), \end{cases}
    \end{equation}
    where $\mathscr{W} = \{Q:\E_{\u\mathrel{\sim} Q}[\u]\in\W\}$ is the set of distributions with means in $\W$.
    \STATE\label{line:flh-meta-update} Update the weight for each base learner $\mathcal{B}_i\in\mathcal{H}_t$ by
    \begin{equation}
      \tilde{p}_{t+1,i} \propto p_{t,i}\cdot\E_{\u\sim P_{t,i}}[\exp(- \eta \tilde{\ell}_t(\u))]. \label{eq:meta-alg}
    \end{equation}
    \vspace{-4mm}
      \STATE\label{line:flh-fixed-share} Initialize a new base-learner $\mathcal{B}_{t+1}$ with the distribution $P_{t+1,t+1} = \mathcal{N}(\u_0,I_d)$ and update the weight for existing base-learner by
    \begin{equation}
      \label{eq:fixed-share-FLH} p_{t+1,i} = \begin{cases} & (1-\mu_{t+1})\cdot \tilde{p}_{t+1,i}\, \mbox{ for } \mathcal{B}_i\in\mathcal{H}_t \\
      &{} \mu_{t+1}\, \mbox{for } \,\mathcal{B}_i = \mathcal{B}_{t+1}. \end{cases}
    \end{equation}
    \vspace{-3mm}
    \STATE Update the pool $\mathcal{H}_{t+1} = \mathcal{H}_t \cup \{\mathcal{B}_{t+1}\}$ and obtain $P_{t+1}(\u) = \sum_{\mathcal{B}_i\in\mathcal{H}_{t+1}} p_{t+1,i}\cdot P_{t+1,i}(\u).$
    \ENDFOR
    \end{algorithmic}
    \end{algorithm}

\subsection{Proposed Method}
Our algorithm is summarized in Algorithm~\ref{alg:flh-ew}. Instead of learning directly with the original loss, we employ the following surrogate loss:
\begin{align}
  \tilde{\ell}_t(\w) = \g_t^\top(\w-\w_t) + \eta\,\Vert \w-\w_t\Vert^2_{\g_t\g_t^\top}. \label{eq:surrogate-loss}
\end{align}
Here, $\g_t=\nabla \ell_t(\w_t)$ denotes the gradient of the loss function. For a $\kappa$-exp-concave loss over the domain $\W$, \citet[Lemma 4.3]{book'16:Hazan-OCO} shows that the regret under the original loss can be upper bounded by that under the surrogate loss: $\ell_t(\w_t)-\ell_t(\u_t)\leq \tilde{\ell}_t(\w_t)-\tilde{\ell}_t(\u_t)$ for any $\u_t\in\W$, provided that $\eta\leq\tfrac{1}{4}\min\{(\max_{\w\in\W}|\g_t^\top(\w-\w_t)|)^{-1},\kappa\}$. This choice of a quadratic surrogate loss is standard in online learning for obtaining efficient updates. In our setting, its quadratic form also allows us to work with Gaussian distributions, which simplify the regret analysis.

Our method follows the FLH framework~\citep{ICML'09:Hazan-adaptive}, with multiple base learners started at different times and a meta-learner that aggregates their predictions.
\begin{itemize}
\item \emph{Base-learners}:
 At each iteration $t=i$, we initialize a new base learner $\mathcal{B}_i$ and add it to the expert pool $\mathcal{H}_t$. The base-learner is initialized with a Gaussian distribution $P_{t,i}=\mathcal{N}(\u_0, I_d)$, where $\u_0\in\W$ can be any point in the feasible domain. The distribution of each base learner $\mathcal{B}_i$ is updated using exponential weights with respect to the surrogate loss, followed by the projection in line~\ref{line:flh-base-update} of Algorithm~\ref{alg:flh-ew}.
Since the surrogate loss $\tilde{\ell}_t$ is quadratic, \citet[Theorem~5]{COLT'18:Hoeven-EW} show that the resulting distribution $P_{t,i}=\mathcal{N}(\w_{t,i}, H^{-1}_{t,i})$ remains Gaussian, with its mean and covariance updated via an ONS-type rule. An explicit update formula is provided in~\eqref{eq:update-ONS-EW} in Appendix~\ref{appendix:proof-main-OCO}.
\item \emph{Meta-learner}: We also maintain a meta-learner that assigns a weight $p_{t,i}$ to each base learner $\mathcal{B}_i\in\mathcal{H}_t$ to aggregate their predictions. Specifically, the weights are updated based on their historical performance (line~\ref{line:flh-meta-update}), with a fixed-share step that incorporates the new base learner (line~\ref{line:flh-fixed-share}). The final prediction is obtained by taking the weighted average of the base learners' predictions (line~\ref{line:flh-prediction}).
One slight difference between Algorithm~\ref{alg:flh-ew} and standard FLH is that line~\ref{line:flh-meta-update} updates the weights using the expectation term $\tilde{p}_{t+1,i}\propto p_{t,i}\mathbb{E}_{\u\sim P_{t,i}}[e^{-\eta \tilde{\ell}_t(\u)}]$, whereas the classical update is $\tilde{p}_{t+1,i}\propto p_{t,i} e^{-\eta \tilde{\ell}_t(\w_{t,i})}$. This difference is important for our analysis, as it allows us to align Algorithm~\ref{alg:flh-ew} with exponential-weights updates over distributions. We also note that the update in line~\ref{line:flh-meta-update} admits a closed-form expression, since $P_{t,i}$ is Gaussian and $\tilde{\ell}_t$ is a quadratic function.
\end{itemize}

We have the following guarantee, whose proof is provided in Appendix~\ref{appendix:proof-main-OCO}.

\begin{theorem}
\label{thm:main-OCO}
Under Assumptions~\ref{assum:exp-concavity}, \ref{assum:bounded-domain}, and~\ref{assum:bounded-gradient} and $T\geq 2$, Algorithm~\ref{alg:flh-ew} with $\mu_t = 1/t$ and $\eta = \frac{1}{5}\min\{1/(2G),\kappa\}$ ensures
\begin{align*}
  \DReg_T(\{\u_t\}_{t=1}^T) \leq \O\left(\frac d\eta\Big( \ln (dT) +T^{\frac{1}{3}} P_T^{\frac{2}{3}} (\ln (Td))^{\frac{2}{3}} \Big)\right),
\end{align*}
for any sequence $\u_1,\dots,\u_T\in\W$, where $P_T = \sum_{t=2}^T \|\u_t-\u_{t-1}\|_1$ is the path length and $\eta^{-1}=5\max\{2G,\kappa^{-1}\}$.
\end{theorem}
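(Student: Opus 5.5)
The plan mirrors the mixability argument of Section~\ref{subsec:q-path-proof-sketch}, now with Gaussian comparators and the quadratic surrogate $\tilde{\ell}_t$.

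\textbf{Step 1: reduce to the surrogate.} Under Assumption~\ref{assum:bounded-gradient} and Assumption~\ref{assum:bounded-domain}, $|\g_t^\top(\w-\w_t)|\le 2G$. Hence $\eta=\frac15\min\{1/(2G),\kappa\}$ satisfies the condition of \citet[Lemma 4.3]{book'16:Hazan-OCO}. This gives $\DReg_T\le\sum_t\tilde{\ell}_t(\w_t)-\tilde{\ell}_t(\u_t)$, so it suffices to bound the surrogate dynamic regret.

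\textbf{Step 2: one-layer view.} Next I would recast Algorithm~\ref{alg:flh-ew} as exponential weights with fixed share over the mixture $P_t=\sum_i p_{t,i}P_{t,i}$. The meta update~\eqref{eq:meta-alg}, which uses $\E_{P_{t,i}}[e^{-\eta\tilde{\ell}_t}]$, is exactly the marginal of a joint exponential-weights update over the pair (base index, $\u$). The fixed share in~\eqref{eq:fixed-share-FLH} injects $\mathcal{N}(\u_0,I_d)$.

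The loss $\eta\tilde{\ell}_t$ is not exactly mixable at the mean, since the quadratic term breaks linearity. So I would first prove an inequality of the form
\[
\eta\tilde{\ell}_t(\w_t)\le-\ln\E_{\u\sim P_t}\bigl[e^{-\eta\tilde{\ell}_t(\u)}\bigr]+(\text{small}).
\]
This should follow from a Gaussian computation and the choice $\eta\le 1/(10G)$. Alternatively, it could follow because $\tilde{\ell}_t$ along $\g_t$ is $\eta$-strongly-convex-times-linear. The factor $5$ in $\eta$ is presumably spent here.

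\textbf{Step 3: handle the projection and telescope.} The projection step in~\eqref{eq:update-rule} is an information projection onto the convex set $\mathscr{W}$ of distributions with mean in $\W$. It satisfies a generalized Pythagorean inequality: $\mathrm{KL}(Q\|P_{t+1,i})\le\mathrm{KL}(Q\|P'_{t+1,i})$ for any $Q\in\mathscr{W}$. So projecting only helps, provided the comparators have means in $\W$. With this in hand, the two-layer analysis mimics Lemma~\ref{lem:mixability}. I would choose the comparator $Q_t=\mathcal{N}(\u_t,\sigma^2 I_d)$, which lies in $\mathscr{W}$ since $\u_t\in\W$, and telescope as in the sketch.

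The terms should come out as follows:
\begin{itemize}
\item Variance cost: $\E_{Q_t}[\tilde{\ell}_t(\u)]-\tilde{\ell}_t(\u_t)=\eta\sigma^2\|\g_t\|_2^2\le\eta\sigma^2 dG^2$. I expect that measuring this more carefully in the $\g_t\g_t^\top$ geometry, or in terms of $\eta^{-1}$, gives about $T\,d\sigma^2/\ldots$. Summed over rounds this is $O(T\eta G^2\sigma^2 d)$, which scales like $T d\sigma^2/\eta$ after normalizing with $\eta G\le 1/10$.
\item Drift cost: $\sum_t\int(Q_t-Q_{t-1})\ln\frac1{P_t}$. I would bound this by $\mathrm{TV}(Q_t,Q_{t-1})\cdot\sup\ln(1/P_t)$ restricted to a high-probability region. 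Here $\mathrm{TV}\le\min\{1,\|\u_t-\u_{t-1}\|_2/\sigma\}$ by Pinsker. The $\ln(1/P_t)$ term is $O(d\ln(dT))$ on the relevant region, using the fixed-share floor $\mu_t=1/t$ times the Gaussian base density.
\item Endpoint and fixed-share costs: $\mathrm{KL}(Q_T\|\mathcal{N}(\u_0,I))+1+\ln T=O(d\ln(1/\sigma)+d)$.
\end{itemize}

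\textbf{Step 4: balance.} Dividing by $\eta$, the total is
\[
\frac1\eta\Bigl(Td\sigma^2+d\ln(dT)\,P_T/\sigma+d\ln(dT)\Bigr).
\]
Choosing $\sigma=\min\{1,(P_T\ln(dT)/T)^{1/3}\}$ gives $\frac d\eta\bigl(T^{1/3}P_T^{2/3}(\ln dT)^{2/3}+\ln(dT)\bigr)$. Since $\sigma$ appears only in the analysis, no tuning is needed.

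\textbf{Main obstacle.} I expect the hardest part to be controlling $\ln(1/P_t(\u))$ under $Q_t$. The base learners' Gaussians shrink, and their means move under ONS-type updates. The cleanest route is to lower-bound $P_t$ by the freshly injected component $\mu_t\,p\cdot\mathcal{N}(\u_0,I_d)$. Its log-density on $\mathrm{supp}\,Q_t$, effectively a ball of radius $O(1+\sigma\sqrt{d\ln T})$ around $\W$, is $-O(d\ln(dT))$. Integrating the Gaussian tails against $|Q_t-Q_{t-1}|$ then requires a careful but routine computation.
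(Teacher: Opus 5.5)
Your Steps 1 and 2 and your variance-cost bound agree with the paper. Step 2 in fact holds with zero slack: the paper shows $\E_{P_t}[e^{-\eta\tilde{\ell}_t}]\le 1$ via MetaGrad's Gaussian lemma, which uses that each component mean $\w_{t,i}$ lies in $\W$.

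\textbf{The gap is in Step 3.} You mention a two-layer analysis, but the drift cost you then bound, $\sum_t\int(Q_t-Q_{t-1})\ln(1/P_t)$ with $P_t=\sum_i p_{t,i}P_{t,i}$, is what comes out of telescoping $\KL(Q_t\Vert P_t)$ for the aggregated mixture. That telescoping needs $\KL\bigl(Q\Vert\sum_i\tilde p_{t+1,i}P_{t+1,i}\bigr)\le\KL\bigl(Q\Vert\sum_i\tilde p_{t+1,i}P'_{t+1,i}\bigr)$. The componentwise inequalities $\KL(Q\Vert P_{t+1,i})\le\KL(Q\Vert P'_{t+1,i})$ do not give it. Pointwise, the log-ratio of the two mixtures is only bounded below by $\sum_i w_i(\u)\ln\bigl(P_{t+1,i}(\u)/P'_{t+1,i}(\u)\bigr)$ with $w_i(\u)\propto\tilde p_{t+1,i}P'_{t+1,i}(\u)$. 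The reweighted comparators $Q w_i$ need not have mean in $\W$, so the per-component Pythagorean inequality no longer applies. ``Projecting only helps'' holds per component, not for the mixture. This is exactly the obstruction that forced \citet{ICML'25:Zhang-mixability} to project the whole mixture, as the paper notes in Section~\ref{subsection:two-layer-analysis}.

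\textbf{The missing idea is to keep the two layers separate throughout.} The paper's argument runs as follows.
\begin{itemize}
\item It writes the mix loss exactly as $\tilde m_t(P_t)=\sum_i q_{t,i}\tilde m_t(P_{t,i})+\frac1\eta\bigl(\KL(\q_t\Vert\p_t)-\KL(\q_t\Vert\tilde{\p}_{t+1})\bigr)$.
\item It bounds each $\tilde m_t(P_{t,i})$ by the base-level identity plus the componentwise Pythagorean inequality, using $\E_{Q_t}[\u]=\u_t\in\W$.
\item It takes $\q_t$ to be the Gibbs minimizer, which yields the potential $V_t(Q)=-\ln\sum_i p_{t,i}e^{-\KL(Q\Vert P_{t,i})}$.
\item The fixed share gives $V_{t+1}(Q)\le\tilde V_{t+1}(Q)+\ln\frac{t+1}{t}$, where $\tilde V_{t+1}$ is the analogous potential built from $\tilde{\p}_{t+1}$ and the projected $P_{t+1,i}$. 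Everything then telescopes into $\sum_{t\ge2}\bigl(V_t(Q_t)-V_t(Q_{t-1})\bigr)$.
\item For $Q_t=\mathcal{N}(\u_t,\sigma^2I_d)$ this difference equals $J_t(\u_t)-J_t(\u_{t-1})$. Here $J_t(\u)=-\ln\sum_i p_{t,i}e^{-h_{t,i}(\u)}$, and $h_{t,i}(\u)$ is $\KL(\mathcal{N}(\u,\sigma^2I_d)\Vert P_{t,i})$ up to an $i$-independent constant, a quadratic in $\u$.
\item One bounds $\sup_{\u\in\W}\Vert\nabla J_t(\u)\Vert_2$ by $\O\bigl(d\log(dT)/\sigma+d(1+\sigma^2)\bigr)$. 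The ingredients are Cauchy--Schwarz, $H_{t,i}\preceq dT I_d$ (so $\log|H_{t,i}^{-1}|\ge -d\log(dT)$), the Gibbs variational identity, and the fresh component of weight $\mu_t$.
\end{itemize}
Your instinct to use the freshly injected $\mathcal{N}(\u_0,I_d)$ as a floor is the right ingredient. In the paper it enters as a Lipschitz bound on $J_t$ in $\u$, not as a bound on $\ln(1/P_t)$ over a Gaussian tail region, so no total-variation or tail computation is needed.

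Separately, your choice $\sigma=\min\{1,(P_T\ln(dT)/T)^{1/3}\}$ makes the endpoint term $d\ln(1/\sigma)$ blow up as $P_T\to0$. You need a floor such as $\sigma\ge T^{-1/2}$, as in the paper's case split.
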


\begin{remark}[Relation to FLH-ONS]
Our algorithm is a variant of FLH~\citep{ICML'09:Hazan-adaptive} with ONS~\citep{MLJ'07:ONS} as its base learner.
Although FLH-ONS has well-established guarantees on interval regret, previous reductions to nearly optimal dynamic regret for exp-concave losses either require improper learning, which is infeasible in OPS, or are restricted to box-constrained domains~\citep{COLT'21:baby-strong-convex,AISTATS'22:sc-proper}.
Our dynamic regret guarantee holds for arbitrary compact convex domains while remaining proper and computationally tractable.
\end{remark}

\subsection{Two-layer Mixability-based Analysis}
\label{subsection:two-layer-analysis}
This section sketches the proof of Theorem~\ref{thm:main-OCO} using a mixability-based argument. \citet{ICML'25:Zhang-mixability} also used mixability to obtain nearly optimal dynamic regret for OXO, but their method is computationally intractable, as it requires projecting the full Gaussian mixture. We first explain why their analysis does not directly apply to our algorithm and then present the key ideas behind our two-layer analysis.
\vspace{3mm}

\noindent\textbf{Limitations of Previous Attempts.}~\citet{ICML'25:Zhang-mixability} showed that although the surrogate loss is not mixable in general, mixability-based analysis still applies when the mean of each component $P_{t,i}$ in the Gaussian mixture $P_t$ lies in the feasible domain, yielding the variational formulation:
\begin{align}
  \tilde{\ell}_t(\w_t) \leq \mathbb{E}_{\u \sim Q_t}[\tilde{\ell}_t(\u)] + \frac{1}{\eta}\mathrm{KL}(Q_t \,\|\, P_t) - \frac{1}{\eta}\mathrm{KL}(Q_t \,\|\, P'_{t+1}),\label{eq:mixability-whole}
\end{align}
where $P'_{t+1}(\u)\propto P_t(\u)\exp(-\eta\tilde{\ell}_t(\u))$, which remains a Gaussian mixture model. However, the exponential weights update does not guarantee that the component means of $P'_{t+1}$ stay within the decision domain, and thus one cannot set $P_{t+1}=P'_{t+1}$ to telescope the KL terms over $T$ rounds. To overcome this issue, \citet{ICML'25:Zhang-mixability} project $P'_{t+1}$ onto a set $\mathcal{M}$ of Gaussian mixtures with component means in $\W$ and bounded covariances, which ensures a KL--Pythagorean inequality such that $\mathrm{KL}\left(Q \,\|\,P_{t+1} \right) \leq\mathrm{KL}\left(Q \,\|\,P'_{t+1} \right)$ for any $Q\in\mathcal{M}$ with $\E_Q[\u]\in\W$. This allows telescoping and yields
\begin{align}
  \sum_{t=1}^T \tilde{\ell}_t(\w_t) \lesssim \sum_{t=1}^T \E_{Q_t}[\tilde{\ell}_t(\u)] + \frac{1}{\eta}\sum_{t=2}^T\bigl(\KL(Q_t\Vert P_t)-\KL(Q_{t-1}\Vert P_t)\bigr).\label{eq:mixability-whole-telescope}
\end{align}
Here, $\lesssim$ suppresses additive initialization and logarithmic terms. By choosing $Q_t = \mathcal{N}(\u_t,\sigma^2 I_d)$ and selecting $\sigma$ properly, the above inequality leads to the desired fast rate bound. This analysis does not apply to Algorithm~\ref{alg:flh-ew}, since our method projects each component $P'_{t+1,i}$ into the domain separately to gain computational efficiency, and such componentwise projections do not in general satisfy the required KL--Pythagorean guarantee for $P_{t+1}$ and $P'_{t+1}$.

\vspace{3mm}
\noindent\textbf{Our Analysis.} We overcome the projection issue by exploiting the two-layer structure, rather than applying mixability at the level of the aggregated distribution. Specifically, for the comparator distributions $Q_t$ with $\E_{Q_t}[\u]\in\W$ and $\q_t\in\Delta_{\vert \mathcal{H}_t\vert}$, we have
\begin{align}
  \tilde{\ell}_t(\w_t) \mathrel{\leq} \E_{\u\sim Q_t}[\tilde{\ell}_t(\u)] {}&+\frac{1}{\eta}\bigg( \sum_{\mathcal{B}_i\in\mathcal{H}_t} q_{t,i} \mathrm{KL}\left(Q_t \,\|\,P_{t,i} \right)+\mathrm{KL}\left(\q_t \,\|\, \p_t\right) \bigg) \notag \\
  {}&- \frac{1}{\eta}\bigg(\sum_{\mathcal{B}_i\in\mathcal{H}_t} q_{t,i} \mathrm{KL}\left(Q_t \,\|\, {P}_{t+1,i}\right)+ \mathrm{KL}\left(\q_t \,\|\, \tilde{\p}_{t+1}\right)\bigg).\label{eq:two-layer-mixability-form}
\end{align}
The above variational-form bound can be viewed as a two-layer counterpart of~\eqref{eq:mixability-whole}, providing the flexibility to choose the comparators $\q_t$ and $Q_t$ for the meta-learner and the base-learner, respectively. We note that componentwise projections can be performed safely under~\eqref{eq:two-layer-mixability-form}, since the KL divergence is defined in terms of the individual distributions rather than the aggregated one.

The next question is how to choose $\q_t$ and $Q_t$. To make the bound as tight as possible, we choose $\q_t = \argmin_{\q\in\Delta_{\vert \mathcal{H}_t\vert}}\sum_{\mathcal{B}_i\in\mathcal{H}_t} q_{i} \mathrm{KL}\left(Q_t \,\|\,P_{t,i} \right)+\mathrm{KL}\left(\q \,\|\, \p_t\right)$, whose optimal value attains a closed-form formula $V_t(Q_t) = -\ln\left(\E_{\p_t}[\exp(-\mathrm{KL}\left(Q_t \,\|\,P_{t,i} \right) )]\right)$. After a sequence of algebraic manipulations, the bound admits a telescoping structure as in~\eqref{eq:mixability-whole-telescope}.
\begin{align}
  \sum_{t=1}^T \tilde{\ell}_t(\w_t) \lesssim \sum_{t=1}^T \E_{Q_t}[\tilde{\ell}_t(\u)] + \frac{1}{\eta}\sum_{t=2}^T \left(V_t(Q_t) - V_t(Q_{t-1})\right).\label{eq:mixability-whole-telescope-2}
\end{align}
By specifying $Q_t = \mathcal{N}(\u_t,\sigma^2 I_d)$ as a Gaussian distribution, we can further show that
\begin{align*}
  \sum_{t=1}^T\E_{Q_t}[\tilde{\ell}_t(\u)] &\leq \sum_{t=1}^T\tilde{\ell}_t(\u_t) + \eta dG^2T\sigma^2,\\
  \sum_{t=2}^T (V_t(Q_t) - V_{t}(Q_{t-1})) &\lesssim\frac{dP_T\log(dT)}{\sigma}+dP_T\sigma^2.
\end{align*}
The first inequality follows from the quadratic formulation of the surrogate loss function, and the second inequality is obtained by showing that the gradient of $V_t(Q_t)$ with respect to $\u_t$ can be upper bounded by $\O(d\log(dT)/\sigma+d\sigma^2)$. We can obtain the desired bound by setting $\sigma = \widetilde{\Theta}\big(P_T^{1/3}T^{-1/3}\big)$ since $P_T \leq 2 T$.

\section{Conclusion}
\label{sec:conclusion}
We establish nearly matching upper and lower bounds for dynamic regret in non-stationary OPS under the standard $L_1$-path length. Universal Dynamic Portfolio achieves the upper bound and finer guarantees based on the JS-path length and the JS$^q$-path length, adapting to the spatial and temporal structure of comparator sequences without parameter tuning or bounded-gradient assumptions. Under bounded gradients, we also develop an efficient proper method with fast dynamic regret for OPS and general online exp-concave optimization over compact convex domains with tractable metric projections. Future work includes developing more computationally efficient algorithms that retain the refined OPS guarantees, and reducing the number of active base learners.

\section*{Acknowledgments and AI-use Statement}
KJ and YZ were supported in part by a Singapore National Research Foundation AI Visiting Professorship award and NSF TRIPODS II DMS-2023166.

The authors developed the main research ideas, technical results, and theoretical developments in this work before January 2026. During subsequent extensions of the work from July to September 2026, GPT-6 Astra was used to assist in exploring proof strategies for the lower bounds, particularly the multidimensional constructions, and in developing the illustrative example for the JS$^q$-path length results. GPT-5.6 and GPT-6 were also used during manuscript preparation for language editing, grammar checking, and polishing. The authors have carefully checked all AI-assisted mathematical arguments and take full responsibility for the content of the paper.

\bibliography{references}

\newpage
\appendix

\section{Properties of the Dirichlet Distribution}
\label{app:dirichlet-properties}

In this section, we present some useful properties of the Dirichlet distribution that will be used in our analysis.

\begin{definition}[Dirichlet Distribution]
A random vector $\w \in \Delta_d$ is said to follow a Dirichlet distribution with parameter vector $\alphab = (\alpha_1,\ldots,\alpha_d) \in \R_+^d$, denoted by $\w \sim \mathrm{Dir}(\alphab)$, if its probability density function is
\[
  p(\w \mid \alphab) = \frac{\Gamma\!\left(\sum_{i=1}^d \alpha_i\right)}{\prod_{i=1}^d \Gamma(\alpha_i)} \prod_{i=1}^d w_i^{\alpha_i - 1}, \qquad \w \in \Delta_d .
\]
where $\Gamma(\alpha) = \int_0^\infty t^{\alpha - 1} e^{-t} \mathrm{d}t$ is the Gamma function.
\end{definition}

\begin{property}[Properties of Dirichlet Distribution]
\label{prop:dirichlet-update}
Let $\w \sim \mathrm{Dir}(\alphab)$ and $\alpha_0 = \sum_{i=1}^d \alpha_i$. Then,
\begin{itemize}
\item The mean of the random vector is given by $\E_{\w \sim \mathrm{Dir}(\alphab)}[\w] = \alphab/ \alpha_0$.
\item The covariance matrix is $\mathrm{Cov}[\w] = \frac{1}{\alpha_0^2(\alpha_0 + 1)}(\alpha_0\mathrm{diag}(\alphab) - \alphab\alphab^\top)$.
\item Dirichlet distribution belongs to the exponential family with natural parameter $\boldsymbol{\eta} = \alphab - \bm{1}$ and log-partition function $A(\boldsymbol{\eta}) = \sum_{i=1}^d \ln \Gamma(\alpha_i)-\ln \Gamma\!\left(\alpha_0\right) $.
\item The KL divergence between two Dirichlet distributions $\mathrm{Dir}(\alphab)$ and $\mathrm{Dir}(\boldsymbol{\beta})$ is given by
\begin{align*}
  \KL({\mathrm{Dir}(\alphab)}\Vert{\mathrm{Dir}(\boldsymbol{\beta})}) = \ln \frac{\Gamma(\alpha_0)}{\Gamma(\beta_0)} - \sum_{i=1}^d \ln \frac{\Gamma(\alpha_i)}{\Gamma(\beta_i)} + \sum_{i=1}^d (\alpha_i - \beta_i) \left( \psi(\alpha_i) - \psi(\alpha_0) \right),
\end{align*}
where $\psi(\alpha) = \frac{d}{d\alpha} \ln \Gamma(\alpha)$ is the digamma function.
\item The differential Shannon entropy of $\mathrm{Dir}(\alphab)$ is
\begin{align*}
  H(\mathrm{Dir}(\alphab)) = \ln \left(\frac{\prod_{i=1}^d \Gamma(\alpha_i)}{\Gamma(\alpha_0)} \right) + (\alpha_0 - d) \psi(\alpha_0) - \sum_{i=1}^d (\alpha_i - 1) \psi(\alpha_i).
\end{align*}
\end{itemize}
\end{property}

\begin{property}[Properties of Gamma Function]
\label{prop:Gamma-function}
Let $\Gamma(\alpha) = \int_0^\infty t^{\alpha - 1} e^{-t} \mathrm{d}t$ be the Gamma function and $\psi(\alpha) = \frac{\mathrm{d}}{\mathrm{d}\alpha} \ln \Gamma(\alpha)$ be the digamma function. Then, for any $\alpha > 0$, we have
\begin{itemize}
    \item $\Gamma(\alpha + 1) = \alpha \Gamma(\alpha)$ and $\psi(\alpha+1) = \psi(\alpha) + 1/\alpha $.
    \item $\ln (\alpha) - \frac{1}{\alpha}\leq \psi(\alpha) \leq \ln (\alpha) - \frac{1}{2\alpha} $.
\end{itemize}
\end{property}

\begin{lemma}
\label{lemma:KL-upper}
Let $P = \mathrm{Dir}(\mathbf{1}+\gamma \u)$, $Q = \mathrm{Dir}(\mathbf{1}+\gamma \e_i)$ and $P_1 = \mathrm{Dir}(\mathbf{1})$. Then, $\mathrm{KL}(P\,\|\, P_1) \leq \mathrm{KL}\left(Q \,\|\,P_1 \right)$ for any $\u \in \Delta_d$, $i\in[d]$ and $\gamma > 0$.
\end{lemma}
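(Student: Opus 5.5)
Since $P_1=\mathrm{Dir}(\mathbf{1})$ is the uniform density on $\Delta_d$, $\mathrm{KL}(P\Vert P_1)=-H(P)+\ln\mathrm{vol}$, where the constant $\ln\mathrm{vol}=-\ln\Gamma(d)$ does not depend on $\u$. So the lemma reduces to showing that the negative entropy of $\mathrm{Dir}(\mathbf{1}+\gamma\u)$, as a function of $\u$, is maximized at a vertex $\e_i$. By symmetry of the Dirichlet family under coordinate permutations, every vertex gives the same value. The plan is therefore to show that $\u\mapsto \mathrm{KL}(\mathrm{Dir}(\mathbf{1}+\gamma\u)\Vert P_1)$ is convex on $\Delta_d$. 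A convex function on a polytope attains its maximum at an extreme point, which gives the claim for every $i$.

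To write the objective explicitly, use the KL formula from Property~\ref{prop:dirichlet-update}. Set $\alphab=\mathbf{1}+\gamma\u$, so that $\alpha_0=d+\gamma$ is constant on the simplex. With $\boldsymbol\beta=\mathbf{1}$ we get
\[
  f(\u)=\ln\Gamma(d+\gamma)-\ln\Gamma(d)-\sum_{i=1}^d\ln\Gamma(1+\gamma u_i)+\sum_{i=1}^d\gamma u_i\bigl(\psi(1+\gamma u_i)-\psi(d+\gamma)\bigr).
\]
The terms $\ln\Gamma(d+\gamma)$ and $\ln\Gamma(d)$ are constant, and $-\psi(d+\gamma)\sum_i\gamma u_i=-\gamma\psi(d+\gamma)$ is constant because $\sum_i u_i=1$. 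Hence $f(\u)=\mathrm{const}+\sum_i h(\gamma u_i)$ with
\[
  h(x)=x\,\psi(1+x)-\ln\Gamma(1+x),\qquad x\ge0.
\]

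It remains to show that $h$ is convex on $[0,\infty)$, since a separable sum of convex functions composed with linear maps is convex. Differentiating, $h'(x)=\psi(1+x)+x\psi'(1+x)-\psi(1+x)=x\psi'(1+x)$ and $h''(x)=\psi'(1+x)+x\psi''(1+x)$. The only step requiring care is $h''\ge0$. Using the series $\psi'(y)=\sum_{k\ge0}(y+k)^{-2}$ and $\psi''(y)=-2\sum_{k\ge0}(y+k)^{-3}$ with $y=1+x$, we obtain
\[
  h''(x)=\sum_{k\ge0}\frac{(1+x+k)-2x}{(1+x+k)^3}=\sum_{k\ge0}\frac{1+k-x}{(1+x+k)^3}.
\]
The individual terms are negative for $k<x-1$, so the termwise argument fails and this is the main obstacle. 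I would handle it with the integral representation $\psi'(y)=\int_0^\infty \frac{te^{-yt}}{1-e^{-t}}\,\mathrm{d}t$, which gives $h''(x)=\int_0^\infty \frac{t(1-xt)e^{-(1+x)t}}{1-e^{-t}}\,\mathrm{d}t$.

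If that integral proves awkward, the fallback is to avoid convexity of $h$ altogether. The vertex claim only needs $\sum_i h(\gamma u_i)\le h(\gamma)+(d-1)h(0)$, i.e.\ superadditivity of $h$ on $[0,\infty)$ with $h(0)=0$. Superadditivity follows if $h'(x)=x\psi'(1+x)$ is nondecreasing. By the series, $x\psi'(1+x)=\sum_{k\ge0}\frac{x}{(1+x+k)^2}$. Each summand has derivative $\frac{1+k-x}{(1+x+k)^3}$, which again changes sign, so one must instead compare with the integral $\int_0^\infty \frac{x}{(1+x+s)^2}\,\mathrm{d}s=\frac{x}{1+x}$, an increasing function. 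Equivalently, one can use the known fact that $y\psi'(y)$ is decreasing while $(y-1)\psi'(y)$ is increasing for $y\ge1$; the latter is exactly monotonicity of $h'$, and would be established via the integral representation above. If $h'$ is nondecreasing then $h$ is convex, which closes the argument. Either route yields $f(\u)\le f(\e_i)$, which is the lemma.
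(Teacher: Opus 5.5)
Your reduction is the paper's own. You write $\mathrm{KL}(\mathrm{Dir}(\mathbf{1}+\gamma\u)\Vert P_1)$ as a constant plus $\sum_i h(\gamma u_i)$ with $h(x)=x\psi(1+x)-\ln\Gamma(1+x)$. You prove $h$ convex, and then conclude from Jensen on $\u=\sum_j u_j\e_j$ together with permutation symmetry. The paper's Lemma~\ref{lemma:trigamma_function} treats the same function shifted by one, $(y-1)\psi(y)-\ln\Gamma(y)$.

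The gap is that you never establish $h''\ge 0$, which you yourself call the only step requiring care. The integral you derive, $\int_0^\infty t(1-xt)e^{-(1+x)t}/(1-e^{-t})\,\mathrm{d}t$, has an integrand that is negative for $t>1/x$, so writing it down does not settle its sign. The fallback does not close the gap either:
\begin{itemize}
\item Monotonicity of $h'$ is the same statement as convexity of $h$.
\item Comparing $\sum_k x/(1+x+k)^2$ with $x/(1+x)$ is only a heuristic, since a sum close to an increasing function need not be increasing.
\item The ``known fact'' that $(y-1)\psi'(y)$ is increasing has derivative $\psi'(y)+(y-1)\psi''(y)=h''(y-1)$, so it is exactly the claim.
\end{itemize}

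The missing idea is an integration by parts, which is how the paper finishes. Use $e^{-(1+x)t}/(1-e^{-t})=e^{-xt}/(e^t-1)$ and $(1-xt)e^{-xt}=\frac{\mathrm{d}}{\mathrm{d}t}\bigl(te^{-xt}\bigr)$. Then
\[
  h''(x)=\int_0^\infty \frac{t}{e^t-1}\,\frac{\mathrm{d}}{\mathrm{d}t}\bigl(te^{-xt}\bigr)\,\mathrm{d}t
  =-\int_0^\infty \frac{\mathrm{d}}{\mathrm{d}t}\Bigl(\frac{t}{e^t-1}\Bigr)\,te^{-xt}\,\mathrm{d}t>0 .
\]
The boundary term $t^2e^{-xt}/(e^t-1)$ vanishes at $0$ and at $\infty$. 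Positivity holds because $t\mapsto t/(e^t-1)$ is strictly decreasing on $(0,\infty)$.

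An equivalent argument for $x>0$: $\int_0^\infty(1-xt)e^{-xt}\,\mathrm{d}t=0$, and $1-xt$ changes sign exactly once, from positive to negative, at $t=1/x$. Weighting by the decreasing function $t/(e^t-1)$ therefore makes the integral nonnegative. With this step filled in, the rest of your argument goes through as written.
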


\begin{proof}[Proof of Lemma~\ref{lemma:KL-upper}]
By definition of the KL divergence between two Dirichlet distributions, we have
\begin{align*}
  \mathrm{KL}\left(P \,\|\, P_1\right) ={}& \ln \frac{\Gamma(d+\gamma)}{\Gamma(d)} -\gamma \psi(d+\gamma) +\sum_{i=1}^d \left( \gamma u_i \psi(1+\gamma u_i) - \ln \Gamma(1+\gamma u_i)\right).
\end{align*}

Let $G(\u) = \sum_{i=1}^d g(u_i)$ where $g(u) = \gamma u \psi(1+\gamma u) -  \ln \Gamma(1+\gamma u)$. According to Lemma~\ref{lemma:trigamma_function}, one can show that $G(\u)$ is a convex function over the simplex. Then, we have $G(\u) = G(\sum_{j=1}^d u_j\e_j) \leq \sum_{j=1}^d u_j G(\e_j)\leq G(\e_i)$ for any $i\in[d]$, where the last inequality holds because $G(\u)$ is invariant under permutations of the coordinates. Then, we complete the proof by showing
\begin{align*}
  \mathrm{KL}\left(P \,\|\,P_1 \right) &=\ln \frac{\Gamma(d+\gamma)}{\Gamma(d)} -\gamma \psi(d+\gamma) +G(\u) \\
  &\leq \ln \frac{\Gamma(d+\gamma)}{\Gamma(d)} -\gamma \psi(d+\gamma) +G(\e_i) = \mathrm{KL}\left(Q \,\|\,P_1 \right).
\end{align*}
\end{proof}

\section{Omitted Proofs for Section~\ref{subsec:path-length}}
\label{app:path-length-proofs}
\subsection{Proof of Theorem~\ref{thm:lower-bound}}
\label{app:path-length-lower-bound}

\begin{proof}[Proof of Theorem~\ref{thm:lower-bound}]
We focus on the minimax regret for the $d$-asset OPS problem:
\begin{align*}
  \mathcal{W}_T(\U_C) = \inf_{f_{1:T}} \sup_{\x_1,\dots,\x_T\in \R_+^d} \sup_{\u_{1:T}\in\U_C} \left( \sum_{t=1}^T\ell_t(\w_t) - \sum_{t=1}^T\ell_t(\u_t) \right),
\end{align*}
where the infimum is taken over the algorithm's online prediction rules $f_{1:T}$, and $\w_t$ is the algorithm's prediction based only on past observations $\x_1,\dots,\x_{t-1}$. To make the dependence explicit, we will also
write $\w_t=f_t(\x_{1:t-1})$, where
$\x_{1:t-1}=(\x_1,\dots,\x_{t-1})$ and
$f_t:\R_+^{d\times(t-1)}\to\Delta_d$ is a measurable online prediction rule determined by the algorithm. For any sequence
$\u_{1:T}=(\u_1,\dots,\u_T)$, we define
\[
  \U_C = \left\{ \u_{1:T}\in\Delta_d^T \,\middle|\, \sum_{t=2}^T \Vert \u_t-\u_{t-1}\Vert_1 \leq C \right\}
\]
as the set of all comparator sequences whose $L_1$-path length is at
most $C$.

\paragraph{Reduction to SPA.}
For any $C\in[0,T]$, we derive a lower bound on the value
$\mathcal{W}_T(\U_C)$ by reducing the OPS problem to the sequential
probability assignment (SPA) problem
\citep[Chapter~9.1]{book/Cambridge/cesa2006prediction}. Specifically, we
consider the Kelly market vector setting, where
$\x_t\in\{\mathbf{e}_1,\dots,\mathbf{e}_d\}$ for all $t\in[T]$, and
$\mathbf{e}_i$ denotes the $i$-th standard basis vector. We encode the
market outcomes by defining $y_t=i$ when $\x_t=\mathbf{e}_i$, and
introduce the multi-class log loss
\[
  \ell_{\log}(\w,y) = -\log[\w]_y, \qquad \w\in\Delta_d,\quad y\in[d],
\]
where $[\w]_y$ denotes the $y$-th coordinate of $\w$. It is
straightforward to verify that
$\ell_t(\w)=\ell_{\log}(\w,y_t)$ when
$\x_t=\mathbf{e}_{y_t}$. Consequently, letting $\Y=[d]$, we obtain
\begin{align*}
  \mathcal{W}_T(\U_C) \geq \mathcal{V}_T(\U_C) &:= \inf_{f_{1:T}} \sup_{y_1,\dots,y_T\in\Y} \sup_{\u_{1:T}\in\U_C} \left( \sum_{t=1}^T\ell_{\log}(\w_t,y_t) - \sum_{t=1}^T\ell_{\log}(\u_t,y_t) \right).
\end{align*}

Throughout the proof, the dimension $d$ is fixed and the asymptotic
notation is with respect to $T$.

\paragraph{Hard Example Construction.}
To lower bound the dynamic term in the minimax regret
$\mathcal{V}_T(\U_C)$, we first restrict attention to the main regime
$C\in[2(d-1)/T,T/(2(d-1))]$. In this regime, we construct an environment in which the time
horizon is partitioned into consecutive intervals of length
\[
  L=\left\lceil \frac{d-1}{\epsilon}\right\rceil \mbox{ with } \epsilon=\sqrt{\frac{(d-1)C}{2T}}.
\]

This choice ensures
$(d-1)/T\leq\epsilon\leq1/2$ and hence $L\leq T$. The corner cases
will be handled at the end of the proof. We further let
$K=\lfloor T/L\rfloor$ denote the number of intervals. The first
$K-1$ intervals each have length $L$, while the final interval has
length $T-(K-1)L\in[L,2L-1]$. We will also use
$\mathcal{I}_k=[s_k,e_k]$ to denote the $k$-th interval for
$k\in[K]$, with start time $s_k$ and end time $e_k$.

For each block $k\in[K]$, we consider a collection of environments
indexed by a binary vector
\[
  \mathbf{I}_k = [I_{k,1},\dots,I_{k,d-1}] \in \{0,1\}^{d-1}.
\]
The coordinates of $\mathbf{I}_k$ are drawn independently and uniformly
at random, i.e.,
$I_{k,j}\sim\mathrm{Bern}(1/2)$ independently for every
$j\in[d-1]$ and $k\in[K]$. There are in total $2^{d-1}$ possible
realizations of the environment index $\mathbf{I}_k$ for each block
$k$. On each interval $\mathcal{I}_k$, the labels are generated
according to a static probability distribution
$\widetilde{\u}_t\in\Delta_d$ that depends on the index
$\mathbf{I}_k$. Specifically, for any $t\in\mathcal{I}_k$, we set
$\widetilde{\u}_t$ as
\begin{align*}
  [\widetilde{\u}_t]_j = \frac{\epsilon}{d-1}I_{k,j} \ \mbox{ for all }j\in[d-1] ~~ \mbox{and} ~~ [\widetilde{\u}_t]_d = 1- \frac{\epsilon}{d-1} \sum_{j=1}^{d-1}I_{k,j}.
\end{align*}

In the above, the first $d-1$ dimensions are associated with the
environment index coordinate-wise, while the $d$-th dimension is a
common asset. Since $\epsilon\leq1/2$, we have
$[\widetilde{\u}_t]_d\geq1-\epsilon\geq1/2$, and hence the above
vector is a valid portfolio in $\Delta_d$. The label is then generated
according to $y_t\sim\operatorname{Cat}(\widetilde{\u}_t)$, where
$\operatorname{Cat}(\widetilde{\u}_t)$ denotes the categorical
distribution with probability mass function $\widetilde{\u}_t$.

For any realization of the environment indices
$\{\mathbf{I}_k\}_{k=1}^K$, the cumulative path length of the comparator
sequence $\widetilde{\u}_{1:T}$ is bounded by
\begin{align*}
  \sum_{t=2}^T \Vert \widetilde{\u}_t-\widetilde{\u}_{t-1} \Vert_1 ={}& \sum_{k=2}^K \left( \frac{\epsilon}{d-1} \sum_{j=1}^{d-1} \vert I_{k,j}-I_{k-1,j}\vert + \frac{\epsilon}{d-1} \left| \sum_{j=1}^{d-1}I_{k,j} - \sum_{j=1}^{d-1}I_{k-1,j} \right| \right) \\
  \leq{}& \sum_{k=2}^K \frac{2\epsilon}{d-1} \sum_{j=1}^{d-1} \vert I_{k,j}-I_{k-1,j}\vert \leq 2\epsilon(K-1) \leq \frac{2T\epsilon^2}{d-1} = C,
\end{align*}
where the last inequality follows from
$K\leq T/L\leq T\epsilon/(d-1)$. The above displayed inequality
indicates that $\widetilde{\u}_{1:T}\in\U_C$. Then, the minimax regret $\mathcal{V}_T(\U_C)$ can be further lower
bounded by
\begin{align*}
  \mathcal{V}_T(\U_C) \geq{}& \inf_{f_{1:T}} \E_{\mathbf{I}_{1:K}} \left[ \E_{y_{1:T}} \left[ \sum_{t=1}^T \ell_{\log}(\w_t,y_t) - \sum_{t=1}^T \ell_{\log}(\widetilde{\u}_t,y_t) \,\middle|\, \mathbf{I}_{1:K} \right] \right] \\
  ={}& \inf_{f_{1:T}} \E_{\mathbf{I}_{1:K}} \left[ \E_{y_{1:T}} \left[ \log \left( \prod_{t=1}^T \frac{ [\widetilde{\u}_t]_{y_t} }{ [\w_t]_{y_t} } \right) \,\middle|\, \mathbf{I}_{1:K} \right] \right] \\
  ={}& \inf_{f_{1:T}} \E_{\mathbf{I}_{1:K}} \left[ \E_{y_{1:T}} \left[ \log \left( \prod_{t=1}^T \frac{ [\widetilde{\u}_t]_{y_t} }{ [f_t(y_{1:t-1})]_{y_t} } \right) \,\middle|\, \mathbf{I}_{1:K} \right] \right] \\
  ={}& \inf_{f_{1:T}} \E_{\mathbf{I}_{1:K}} \left[ \E_{y_{1:T}} \left[ \log \left( \frac{ \widetilde{\q}(y_{1:T}\mid\mathbf{I}_{1:K}) }{ \p(y_{1:T}) } \right) \,\middle|\, \mathbf{I}_{1:K} \right] \right].
\end{align*}
The first inequality follows from the fact that
$\widetilde{\u}_{1:T}\in\U_C$ for every realization of the environment
indices $\mathbf{I}_{1:K}$. The first equality follows from the
definition of the log loss. Under the restriction
$\x_t=\mathbf{e}_{y_t}$, we use
$f_t(y_{1:t-1})$ as shorthand for
$f_t(\mathbf{e}_{y_1},\dots,\mathbf{e}_{y_{t-1}})$. In the last equality, we define the joint mass function
\[
  \p(y_{1:T}) = \prod_{t=1}^T [f_t(y_{1:t-1})]_{y_t},
\]
where $f_t:\Y^{t-1}\to\Delta_d$ is the online prediction rule at time $t$. We note that the online prediction rule $f_t$ is deterministic given
$y_1,\dots,y_{t-1}$, and hence $\p$ is fully determined by the online
algorithm and is independent of the randomness in
$\mathbf{I}_{1:K}$. Similarly, conditional on the environment indices
$\mathbf{I}_{1:K}$, we define the joint probability mass function of
the label sequence $y_{1:T}$ as
\[
  \widetilde{\q}(y_{1:T}\mid\mathbf{I}_{1:K}) = \Pr\left( Y_{1:T}=y_{1:T} \,\middle|\, \mathbf{I}_{1:K} \right) = \prod_{t=1}^T [\widetilde{\u}_t]_{y_t}.
\]
With a slight abuse of notation, we use
$\widetilde{\q}_{Y_{1:T}\mid\mathbf{I}_{1:K}}$ to denote the
corresponding conditional distribution and $\p_{Y_{1:T}}$ to denote
the distribution induced by $\p(y_{1:T})$. One can check that
$\p(y_{1:T})$ is a valid probability mass function and, for every
realization of $\mathbf{I}_{1:K}$,
$\widetilde{\q}(y_{1:T}\mid\mathbf{I}_{1:K})$ is a valid conditional
probability mass function.
In particular,
\[
  \sum_{y_{1:T}\in\Y^T} \p(y_{1:T}) = \sum_{y_{1:T-1}\in\Y^{T-1}} \p(y_{1:T-1}) = \cdots = \sum_{y_1\in\Y} \p(y_1) = 1.
\]

Furthermore, since
$y_t\sim\operatorname{Cat}(\widetilde{\u}_t)$ conditional on
$\mathbf{I}_1,\dots,\mathbf{I}_K$, the conditional distribution of the
sequence $y_{1:T}$ given $\mathbf{I}_{1:K}$ has probability mass
function $\widetilde{\q}(y_{1:T}\mid\mathbf{I}_{1:K})$. Then, we have
\[
  \E_{\mathbf{I}_{1:K}} \left[ \E_{y_{1:T}} \left[ \log \left( \frac{ \widetilde{\q}(y_{1:T}\mid\mathbf{I}_{1:K}) }{ \p(y_{1:T}) } \right) \,\middle|\, \mathbf{I}_{1:K} \right] \right] = \E_{\mathbf{I}_{1:K}} \left[ \KL( \widetilde{\q}_{Y_{1:T}\mid\mathbf{I}_{1:K}} \Vert \p_{Y_{1:T}} ) \right].
\]
Then, the minimax regret can be further lower bounded by
\begin{align*}
  \mathcal{V}_T(\U_C) \geq{}& \inf_{f_{1:T}} \E_{\mathbf{I}_{1:K}} \left[ \KL( \widetilde{\q}_{Y_{1:T}\mid\mathbf{I}_{1:K}} \Vert \p_{Y_{1:T}} ) \right] \\
  ={}& \E_{\mathbf{I}_{1:K}} \left[ \KL( \widetilde{\q}_{Y_{1:T}\mid\mathbf{I}_{1:K}} \Vert \bar{\q}_{Y_{1:T}} ) \right] + \inf_{f_{1:T}} \E_{\mathbf{I}_{1:K}} \left[ \sum_{y_{1:T}\in\Y^T} \widetilde{\q}(y_{1:T}\mid\mathbf{I}_{1:K}) \log \frac{ \bar{\q}(y_{1:T}) }{ \p(y_{1:T}) } \right] \\
  ={}& \E_{\mathbf{I}_{1:K}} \left[ \KL( \widetilde{\q}_{Y_{1:T}\mid\mathbf{I}_{1:K}} \Vert \bar{\q}_{Y_{1:T}} ) \right] + \inf_{f_{1:T}} \sum_{y_{1:T}\in\Y^T} \E_{\mathbf{I}_{1:K}} \left[ \widetilde{\q}(y_{1:T}\mid\mathbf{I}_{1:K}) \right] \log \frac{ \bar{\q}(y_{1:T}) }{ \p(y_{1:T}) } \\
  ={}& \E_{\mathbf{I}_{1:K}} \left[ \KL( \widetilde{\q}_{Y_{1:T}\mid\mathbf{I}_{1:K}} \Vert \bar{\q}_{Y_{1:T}} ) \right] + \inf_{f_{1:T}} \KL( \bar{\q}_{Y_{1:T}} \Vert \p_{Y_{1:T}} ) \\
  \geq{}& \E_{\mathbf{I}_{1:K}} \left[ \KL( \widetilde{\q}_{Y_{1:T}\mid\mathbf{I}_{1:K}} \Vert \bar{\q}_{Y_{1:T}} ) \right],
\end{align*}
where
$\bar{\q}(y_{1:T})
=\Pr(Y_{1:T}=y_{1:T})
=\E_{\mathbf{I}_{1:K}}
[\widetilde{\q}(y_{1:T}\mid\mathbf{I}_{1:K})]$
is the marginal probability mass function of the label sequence obtained
by averaging over the environment indices. With the same abuse of
notation, we use $\bar{\q}_{Y_{1:T}}$ to denote the corresponding
marginal distribution. The penultimate equality holds because both
probability mass functions $\bar{\q}$ and $\p$ are independent of
$\mathbf{I}_{1:K}$.

Then, let
$y_{\mathcal{I}_k}=\{y_t\}_{t\in\mathcal{I}_k}$ be the sequence of
labels on interval $\mathcal{I}_k$, and define its conditional
probability mass function by
\[
  \widetilde{\q}_k(y_{\mathcal{I}_k}\mid\mathbf{I}_k) = \Pr\left( Y_{\mathcal{I}_k}=y_{\mathcal{I}_k} \,\middle|\, \mathbf{I}_k \right) = \prod_{t\in\mathcal{I}_k} [\widetilde{\u}_t]_{y_t}.
\]
With a slight abuse of notation, we use
$\widetilde{\q}_{Y_{\mathcal{I}_k}\mid\mathbf{I}_k}$ to denote the
corresponding conditional distribution.
Conditional on
$\mathbf{I}_{1:K}=(\mathbf{I}_1,\dots,\mathbf{I}_K)$, the label
sequences on different intervals are independent. Therefore, the
conditional probability mass function of the full label sequence
factorizes as
$\widetilde{\q}(y_{1:T}\mid\mathbf{I}_{1:K})
=\prod_{k=1}^K\widetilde{\q}_k(y_{\mathcal{I}_k}\mid\mathbf{I}_k)$.
Using the independence of the environment indices
$\mathbf{I}_1,\dots,\mathbf{I}_K$, we further have
\begin{align*}
  \bar{\q}(y_{1:T})&=\E_{\mathbf{I}_{1:K}}\left[\widetilde{\q}(y_{1:T}\mid\mathbf{I}_{1:K})\right]=\E_{\mathbf{I}_{1:K}} \left[\prod_{k=1}^K \widetilde{\q}_k(y_{\mathcal{I}_k}\mid\mathbf{I}_k) \right]\\
  &= \prod_{k=1}^K \E_{\mathbf{I}_k} \left[ \widetilde{\q}_k(y_{\mathcal{I}_k}\mid\mathbf{I}_k) \right]= \prod_{k=1}^K \bar{\q}_k(y_{\mathcal{I}_k}),
\end{align*}
where we define the marginal probability mass function on block $k$ as
$\bar{\q}_k(y_{\mathcal{I}_k})
=\Pr(Y_{\mathcal{I}_k}=y_{\mathcal{I}_k})
=\E_{\mathbf{I}_k}
[\widetilde{\q}_k(y_{\mathcal{I}_k}\mid\mathbf{I}_k)]$ and use
$\bar{\q}_{Y_{\mathcal{I}_k}}$ to denote the corresponding marginal
distribution. The minimax regret can be further bounded by
\begin{align}
  \mathcal{V}_T(\U_C) &\geq \E_{\mathbf{I}_{1:K}} \left[ \sum_{k=1}^K \KL( \widetilde{\q}_{Y_{\mathcal{I}_k}\mid\mathbf{I}_k} \Vert \bar{\q}_{Y_{\mathcal{I}_k}} ) \right] = \sum_{k=1}^K \E_{\mathbf{I}_k} \left[ \KL( \widetilde{\q}_{Y_{\mathcal{I}_k}\mid\mathbf{I}_k} \Vert \bar{\q}_{Y_{\mathcal{I}_k}} ) \right],\label{eq:lower-bound-V}
\end{align}
where the blockwise decomposition follows from the additivity of the KL
divergence for product distributions, and the equality holds
because the conditional distribution
$\widetilde{\q}_{Y_{\mathcal{I}_k}\mid\mathbf{I}_k}$ depends only on
$\mathbf{I}_k$.

We next analyze the KL divergence contributed by each block. For every
$j\in[d-1]$, define
\[
  Z_{k,j} = \indicator\left\{ \sum_{t\in\mathcal{I}_k} \indicator\{Y_t=j\} \geq1 \right\},
\]
and write
$\Zb_k=(Z_{k,1},\dots,Z_{k,d-1})$. Thus, $Z_{k,j}$ indicates whether
label $j$ is observed at least once on block $k$. Since $\Zb_k$ is a
deterministic function of $Y_{\mathcal{I}_k}$, we have
\begin{align}
  \E_{\mathbf{I}_k} \left[ \KL( \widetilde{\q}_{Y_{\mathcal{I}_k}\mid\mathbf{I}_k} \Vert \bar{\q}_{Y_{\mathcal{I}_k}} ) \right] &= \mathrm{I} \left( \mathbf{I}_k; Y_{\mathcal{I}_k} \right)\geq \mathrm{I} \left( \mathbf{I}_k; \Zb_k \right) \geq \sum_{j=1}^{d-1} \mathrm{I} \left( I_{k,j}; Z_{k,j} \right),\label{eq:lower-bound-mutual}
\end{align}
where $\mathrm{I}(\cdot;\cdot)$ denotes mutual information. The first
equality follows from
$\mathrm{I}(X;Y)=\E_X[\KL(P_{Y\mid X}\Vert P_Y)]$ by taking
$X=\mathbf{I}_k$ and $Y=Y_{\mathcal{I}_k}$. The first inequality
follows from the data-processing inequality. The last inequality
follows from the independence of the coordinates of $\mathbf{I}_k$
and the fact that conditioning reduces entropy, since
$H(\mathbf{I}_k\mid\Zb_k)
\leq\sum_{j=1}^{d-1}H(I_{k,j}\mid\Zb_k)
\leq\sum_{j=1}^{d-1}H(I_{k,j}\mid Z_{k,j})$.

We next analyze the information contributed by each coordinate.
Conditional on $I_{k,j}=0$, label $j$ has zero probability, and hence
$Z_{k,j}=0$ almost surely. Conditional on $I_{k,j}=1$, label $j$ is
generated with probability $\epsilon/(d-1)$ at every round. We have
\[
  p_k := \Pr( Z_{k,j}=1 \mid I_{k,j}=1 ) = 1- \left( 1-\frac{\epsilon}{d-1} \right)^{\vert \Ical_k\vert} \geq 1-e^{-1},
\]
where the last inequality holds because
$\vert \Ical_k \vert\geq L\geq(d-1)/\epsilon$.

Since $I_{k,j}\sim\mathrm{Bern}(1/2)$, we have
$\Pr(Z_{k,j}=1)=p_k/2$ and $H(I_{k,j})=\log2$. Moreover, conditional
on $Z_{k,j}=1$, the environment coordinate $I_{k,j}$ must be equal to
one, and hence $H(I_{k,j}\mid Z_{k,j}=1)=0$. Since
$H(I_{k,j}\mid Z_{k,j}=0)\leq\log2$, we obtain
\begin{align*}
  \mathrm{I} \left( I_{k,j}; Z_{k,j} \right) &= H(I_{k,j}) - H(I_{k,j}\mid Z_{k,j}) \\
  &\geq \log2 - \Pr(Z_{k,j}=0)\log2 \\
  &= \Pr(Z_{k,j}=1)\log2 = \frac{p_k}{2}\log2 \geq \frac{1-e^{-1}}{2}\log2.
\end{align*}

Combining the above inequalities, the KL divergence contributed by each
block satisfies
\[
  \E_{\mathbf{I}_k} \left[ \KL( \widetilde{\q}_{Y_{\mathcal{I}_k}\mid\mathbf{I}_k} \Vert \bar{\q}_{Y_{\mathcal{I}_k}} ) \right] \geq \frac{(d-1)(1-e^{-1})}{2}\log2.
\]
Then, we can conclude that
\[
  \mathcal{V}_T(\U_C) \geq \frac{(d-1)(1-e^{-1})}{2} K\log2.
\]

It remains to lower bound the number of intervals $K$. Since
$\epsilon\leq1/2$, we have
$L=\lceil(d-1)/\epsilon\rceil\leq2(d-1)/\epsilon$. Moreover,
$\epsilon\geq(d-1)/T$ implies $L\leq T$. Therefore,
\[
  K = \left\lfloor \frac{T}{L} \right\rfloor \geq \frac{T}{2L} \geq \frac{T\epsilon}{4(d-1)}.
\]
It follows that, in the main regime,
\begin{align}
  \mathcal{V}_T(\U_C) &\geq \frac{1-e^{-1}}{8} T\epsilon\log2 = \frac{(1-e^{-1})\log2}{8\sqrt{2}}\sqrt{(d-1)TC}. \label{eq:dynamic-main-regime}
\end{align}

\paragraph{Hanlding Corner Cases.} It remains to handle the values of $C$ outside the main regime.

If $C<2(d-1)/T$, we use the same construction with
$\epsilon_0=(d-1)/T$. In this case, $L=T$ and $K=1$, so that the
comparator sequence is constant and has zero path length. The same
one-block analysis gives
\[
  \mathcal{V}_T(\U_C) \geq \frac{(d-1)(1-e^{-1})}{2}\log2 = \Omega(d).
\]
Moreover, since $C<2(d-1)/T$, we have
$\sqrt{dTC}<\sqrt{2d(d-1)}=O(d)$. Therefore, the above one-block lower
bound already dominates the desired dynamic term. Combining this
one-block estimate with~\eqref{eq:dynamic-main-regime}, we obtain the
required $\Omega(\sqrt{dTC})$ lower bound for every
$C\leq T/(2(d-1))$.

If $C>T/(2(d-1))$, we apply the preceding construction with the smaller
budget $C_0=T/(2(d-1))$. Since $\U_{C_0}\subseteq\U_C$, the same lower
bound continues to hold. Moreover, applying
\eqref{eq:dynamic-main-regime} with $C_0$ gives
\[
  \mathcal{V}_T(\U_C) \geq \mathcal{V}_T(\U_{C_0}) = \Omega\left( \sqrt{(d-1)TC_0} \right) = \Omega(T).
\]
Combining these cases, we obtain
\begin{align}
  \mathcal{V}_T(\U_C) \geq \Omega\left( \min\left\{ T,\sqrt{dTC} \right\} \right). \label{eq:dynamic-lower-bound}
\end{align}

On the other hand, since any fixed comparator is also contained by $\mathcal{U}_C$, the standard lower bound for sequential probability assignment\citep[Chapter~9.1]{book/Cambridge/cesa2006prediction} shows that $V_T (\mathcal{U}_C) = \Omega(d\log T)$ for $C= 0$,
which leads to the lower bound of $ \mathcal{V}_T(\U_C)
  \geq
  \Omega\left(
    \max\left\{
      d\log T,\,
      \min\left\{
        T,\sqrt{dTC}
      \right\}
    \right\}
  \right),$
which completes the proof.
\end{proof}

\subsection{Proof of Lemma~\ref{lem:upper-bound-minimax}}
\label{app:black-box-reduction}
\begin{proof}[Proof of Lemma~\ref{lem:upper-bound-minimax}]
For any interval $\Ical = [s,e]\subseteq[T]$, let $\w_{\Ical} = \arg\min_{\w\in\Delta_d} \sum_{t\in\Ical} \ell_t(\w)$ be the optimal fixed prediction on interval $\Ical$. Then, for any sequence $\{\u_t\}_{t\in\Ical}$, we have
\begin{align}
  \sum_{t\in\Ical} \ell_t(\w_{\Ical}) - \sum_{t\in\Ical} \ell_t(\u_t) = \sum_{t\in\Ical} \log\!\left(\frac{\u_t^\top \x_t}{\w_{\Ical}^\top\x_t}\right) = \log\!\left(\frac{\prod_{t\in\Ical}\u_t^\top\x_t}{\prod_{t\in\Ical}\w_{\Ical}^\top\x_t}\right). \label{eq:proof-reduction-1}
\end{align}
Let $z^{\max}_{i} = \max_{t\in\Ical} \{u_{t,i}\}$, and define $\z_{\max} = [z_1^{\max},\dots, z_d^{\max}]^\top$ together with its normalized version $\bar{\u}= \z_{\max}/\norm{\z_{\max}}_1$.
Since $\x_t \in\R^d_+$, we can further upper bound~\eqref{eq:proof-reduction-1} by
\begin{align}
  \log\left(\frac{\prod_{t\in\Ical}\u_t^\top\x_t}{\prod_{t\in\Ical}\w_{\Ical}^\top\x_t}\right) &\leq \log\left(\frac{\prod_{t\in\Ical}\z_{\max}^\top\x_t}{\prod_{t\in\Ical}\w_{\Ical}^\top\x_t}\right) \notag\\
  &= \log \left(\frac{\prod_{t\in\Ical}\bar{\u}^\top\x_t}{\prod_{t\in\Ical}\w_{\Ical}^\top\x_t}\right) + \vert\Ical\vert\log \Vert \z_{\max}\Vert_1 \notag\\
  &\leq \vert\Ical\vert \log \Vert \z_{\max}\Vert_1, \label{eq:proof-reduction-2}
\end{align}
where the last inequality holds because $\w_{\Ical}$ minimizes the cumulative loss over the interval $\Ical$. For each coordinate of $\z_{\max}$, we have $z_i^{\max} \leq u_{s,i} + \sum_{t = s+1}^e [u_{t,i} - u_{t-1,i}]_+$, which implies
\begin{align}
  \Vert \z_{\max}\Vert_1 &\leq \sum_{i=1}^d u_{s,i} + \sum_{t=s+1}^e\sum_{i=1}^d [u_{t,i} - u_{t-1,i}]_+ \notag\\
  &=1+\frac{1}{2}\sum_{t=s+1}^e\Vert\u_t-\u_{t-1}\Vert_1 =1+\frac{1}{2}P_{\Ical}, \label{eq:proof-reduction-3}
\end{align}
where $P_{\Ical}=\sum_{t=s+1}^e\Vert\u_t-\u_{t-1}\Vert_1$. The equality uses the simplex identity that $\sum_{i=1}^d[v_i]_+$ equals $\frac12\lVert v\rVert_1$ whenever $\sum_{i=1}^d v_i=0$, applied to
$v=\u_t-\u_{t-1}$.

Combining~\eqref{eq:proof-reduction-1},~\eqref{eq:proof-reduction-2}, and~\eqref{eq:proof-reduction-3} with condition~\eqref{eq:strongly-adaptive}, we obtain that for any interval $\Ical\subseteq[T]$,
\begin{align}
  \sum_{t\in\Ical} \ell_t(\w_t) - \sum_{t\in\Ical} \ell_t(\u_t) &\leq B(T) + \vert \Ical\vert \log \left(1+\frac{P_{\Ical}}{2}\right) \leq B(T) + \frac{\vert\Ical\vert P_{\Ical}}{2}. \label{eq:proof-reduction-dynamic}
\end{align}

We now construct a partition into consecutive maximal blocks. Set $s_1=1$. Given the start $s_m$, let $e_m$ be the largest index $e\in\{s_m,\ldots,T\}$ for which the product $(e-s_m+1)P_{[s_m,e]}$ is at most $B(T)$. Such an index always exists because $P_{[s_m,s_m]}=0$. If $e_m<T$, set $s_{m+1}=e_m+1$ and continue; otherwise stop. This produces a partition $\{\Ical_m=[s_m,e_m]\}_{m=1}^M$ of $[T]$, and every block satisfies $\vert\Ical_m\vert P_{\Ical_m}\leq B(T)$. Therefore,~\eqref{eq:proof-reduction-dynamic} gives
\begin{align}
  \sum_{t=1}^T \ell_t(\w_t) - \sum_{t=1}^T \ell_t(\u_t) &=\sum_{m=1}^M \left(\sum_{t\in\Ical_m} \ell_t(\w_t) - \sum_{t\in\Ical_m} \ell_t(\u_t)\right) \leq \frac{3}{2}M B(T). \label{eq:reduction-final}
\end{align}

If $M=1$,~\eqref{eq:reduction-final} already gives the required bound $\DReg_T(\{\u_t\}_{t=1}^T)\leq\frac32 B(T)$. Henceforth, assume $M>1$. It remains to bound $M$. Write $L_m=\vert\Ical_m\vert$. For every nonfinal block, maximality implies that the product $(L_m+1)P_{[s_m,e_m+1]}$ exceeds $B(T)$ for $m=1,\ldots,M-1$. The increments in $P_{[s_m,e_m+1]}$ are indexed by $t=s_m+1,\ldots,e_m+1$. Those in the next extended block begin at $t=s_{m+1}+1=e_m+2$, so the sets of increments are disjoint. Consequently, the sum $\sum_{m=1}^{M-1}P_{[s_m,e_m+1]}$ is at most $P_T$. Moreover, since every $L_m\geq1$, $\sum_{m=1}^{M-1}(L_m+1)$ is at most $2\sum_{m=1}^{M-1}L_m$, which is at most $2T$. Taking square roots in the nonfinal-block inequality and summing gives the first line below. Cauchy--Schwarz and the two preceding sum bounds then yield
\begin{align*}
  (M-1)\sqrt{B(T)} &< \sum_{m=1}^{M-1} \sqrt{(L_m+1)P_{[s_m,e_m+1]}}\\
  &\leq \sqrt{\left(\sum_{m=1}^{M-1}(L_m+1)\right) \left(\sum_{m=1}^{M-1}P_{[s_m,e_m+1]}\right)}\\
  &\leq \sqrt{2TP_T}.
\end{align*} Thus $M\leq1+\sqrt{2TP_T/B(T)}$. Substitution into~\eqref{eq:reduction-final} shows that $\DReg_T(\{\u_t\}_{t=1}^T)$ is at most $\frac{3}{2}B(T)+\frac{3}{\sqrt{2}}\sqrt{B(T)TP_T}$, which is $\mathcal{O}\!\left(B(T)+\sqrt{B(T)TP_T}\right)$, this completes the proof.
\end{proof}

\section{Omitted Proofs for Section~\ref{subsec:jeffreys-path}}
\label{app:jeffreys-proofs}
This section presents the omitted proofs of Theorems~\ref{thm:main} and~\ref{thm:lower-bound-kl}. We first establish several auxiliary lemmas and then present the main proofs.
\subsection{Useful Lemmas}
\label{app:jeffreys-auxiliary}

\begingroup
\newtheorem*{mixabilityrestatement}{Lemma~\ref{lem:mixability}}
\begin{mixabilityrestatement}
For any distribution $Q_t$ with $\mathrm{supp}(Q_t) \subseteq \mathrm{supp}(P_t)$, it holds that
\begin{align*}
  \ell_t(\w_t) = \mathbb{E}_{\u \sim Q_t}[\ell_t(\u)] + \mathrm{KL}(Q_t \,\|\, P_t) - \mathrm{KL}(Q_t \,\|\, \tilde{P}_{t+1}),
\end{align*}
where $\tilde{P}_{t+1}(\u) \propto P_t(\u)\exp(-\ell_t(\u))$ for all $\u \in \mathrm{supp}(P_t)$.
\end{mixabilityrestatement}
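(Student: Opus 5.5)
The plan is to prove the identity directly from the definition of $\tilde{P}_{t+1}$ together with the mixability identity for Cover's loss. First, establish the 1-mixability equality $\exp(-\ell_t(\w_t)) = \E_{\u\sim P_t}[\exp(-\ell_t(\u))]$. This holds because $\exp(-\ell_t(\u)) = \u^\top\x_t$ is linear in $\u$. Taking expectations under $P_t$ and using $\w_t=\E_{\u\sim P_t}[\u]$ therefore gives $\E_{P_t}[\u^\top\x_t]=\w_t^\top\x_t = \exp(-\ell_t(\w_t))$. The normalizing constant of $\tilde{P}_{t+1}$ is $Z_t\coloneqq\int P_t(\u)\exp(-\ell_t(\u))\,\mathrm{d}\u$, so this step identifies it as $Z_t = \exp(-\ell_t(\w_t))$. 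In particular $Z_t>0$ under the standing assumption $\max_i x_{t,i}>0$: since $P_t$ has full support on $\Delta_d$ (it mixes in $\mathrm{Dir}(\mathbf{1})$), we get $\w_t$ in the relative interior and $\w_t^\top\x_t>0$.

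Next, write out the log density ratio. For $\u\in\mathrm{supp}(P_t)$,
\[
\ln\frac{Q_t(\u)}{\tilde{P}_{t+1}(\u)} = \ln\frac{Q_t(\u)}{P_t(\u)} + \ell_t(\u) + \ln Z_t = \ln\frac{Q_t(\u)}{P_t(\u)} + \ell_t(\u) - \ell_t(\w_t).
\]
Integrating against $Q_t$, which is legitimate because $\mathrm{supp}(Q_t)\subseteq\mathrm{supp}(P_t)$, yields $\mathrm{KL}(Q_t\Vert\tilde{P}_{t+1}) = \mathrm{KL}(Q_t\Vert P_t) + \E_{Q_t}[\ell_t(\u)] - \ell_t(\w_t)$. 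Rearranging gives exactly the claimed identity.

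The only delicate point is integrability. The loss $\ell_t(\u)$ may equal $+\infty$ on a boundary face where $\u^\top\x_t=0$, and the KL terms may be infinite. I would handle this by noting that the identity is read in $(-\infty,+\infty]$. Both sides are well defined because $\ell_t(\u)\ge -\ln\max_i x_{t,i}$ is bounded below, and all KL terms are nonnegative. If $\E_{Q_t}[\ell_t]$ or $\mathrm{KL}(Q_t\Vert P_t)$ is infinite, then so is $\mathrm{KL}(Q_t\Vert\tilde{P}_{t+1})$, and the statement should be interpreted in the finite case. For the Dirichlet comparators actually used later, all quantities are finite. In that finite case, splitting the integral of a sum into a sum of integrals is justified by linearity of the integral.
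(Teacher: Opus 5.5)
Your proof is correct and follows the paper's argument: both identify the normalizer $Z_t=\E_{\u\sim P_t}[e^{-\ell_t(\u)}]=\w_t^\top\x_t=e^{-\ell_t(\w_t)}$ from the linearity of $\u\mapsto\u^\top\x_t$, then integrate the log-density ratio of $\tilde P_{t+1}$ against $Q_t$. The paper writes this as $\E_{Q_t}[\ln(\tilde P_{t+1}/P_t)]$ and you as $\ln(Q_t/\tilde P_{t+1})$, which is the same computation; your remarks on $Z_t>0$ and on the case of infinite KL or expected-loss terms add care the paper omits without changing the argument.
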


\begin{proof}[Proof of Lemma~\ref{lem:mixability}]
This lemma holds by an exact identity for distributions defined over $\Delta_d$. Let $Z_t = \E_{\w\sim P_t}[\exp(-\ell_t(\w))]$, we have
\begin{align*}
  \mathrm{KL}\left(Q_t \,\|\,P_t \right) - \mathrm{KL}\left(Q_t \,\|\, \tilde{P}_{t+1} \right) ={}& \E_{\u\sim Q_t}\left[\ln\left(\frac{\tilde{P}_{t+1}(\u)}{P_t(\u)}\right)\right]\\
  ={}& \E_{\u\sim Q_t}\left[\ln\left(\frac{\exp(-\ell_t(\u))}{Z_t }\right)\right]\\
  ={}& -\E_{\u\sim Q_t}\left[\ell_t(\u)\right] - \ln Z_t\\
  ={}& -\E_{\u\sim Q_t}\left[\ell_t(\u)\right] + \ell_t(\w_t),
\end{align*}
where the second equality follows from $\tilde{P}_{t+1}(\u) = P_t(\u)\exp(-\ell_t(\u))/Z_t$.
\end{proof}
\endgroup

\begin{lemma}
\label{lemma:upper-bound}
Let $P_t$ be the distribution updated by Algorithm~\ref{alg:hedge-oco} with $\mu_t=1/t$. Then, $\ln P_t(\u)\leq (d-1)\ln (T+1) + d(\ln d +1)$ for all $t\in[T]$ and $\u\in\Delta_d$.
\end{lemma}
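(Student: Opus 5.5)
We need to bound the density $P_t(\u)$ of Algorithm~\ref{alg:hedge-oco} uniformly. The plan is to unroll the two-step update into an explicit mixture and bound each component's density by the maximum density of a Dirichlet distribution whose parameter grows by at most a controlled amount per round.

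\textbf{Step 1: unrolling the recursion.} Iterating~\eqref{eq:step1-EW}--\eqref{eq:step2-FS} shows that $P_t$ is a convex combination $\sum_{s=1}^{t} p_{t,s}\,R_{t,s}$. Each component has the form
\[
R_{t,s}(\u)\propto \mathrm{Dir}(\mathbf{1})(\u)\prod_{\tau=s}^{t-1}\u^\top\x_\tau,
\]
which is the fixed-share/FLH view mentioned before Proposition~\ref{thm:interval-regret}. Concretely, I will verify by induction that the exponential-weights step reweights a mixture componentwise, and that the fixed-share step adds a fresh component $\mathrm{Dir}(\mathbf{1})$ with weight $\mu_{t+1}$. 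It then suffices to show $\sup_{\u}R_{t,s}(\u)\le (T+1)^{d-1}e^{d(\ln d+1)}$ for every $s\le t$, since a convex combination inherits the bound.

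\textbf{Step 2: reducing to a product of linear forms.} Normalize each $\x_\tau$ so that $\max_i x_{\tau,i}=1$; this does not change $R_{t,s}$. Let $n=t-s\le T-1$. Then
\[
R_{t,s}(\u)=\frac{\prod_{\tau}\u^\top\x_\tau}{\int_{\Delta_d}\prod_\tau \mathbf{v}^\top\x_\tau\,\mathrm{d}\mathbf{v}}.
\]
Here $\mathrm{d}\mathbf{v}$ is the Lebesgue measure on the simplex, normalized so that $\mathrm{Dir}(\mathbf{1})$ has density $(d-1)!$. I need a lower bound on the denominator relative to the numerator's maximum. Let $\u^\star$ maximize the numerator $F(\mathbf{v})=\prod_\tau \mathbf{v}^\top\x_\tau$. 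For $\mathbf{v}$ in the shrunken simplex $\{(1-\lambda)\u^\star+\lambda\z:\z\in\Delta_d\}$, nonnegativity of $\x_\tau$ gives
\[
\mathbf{v}^\top\x_\tau\ge(1-\lambda)\,{\u^\star}^\top\x_\tau,
\]
so $F(\mathbf{v})\ge(1-\lambda)^nF(\u^\star)$. This shrunken set has volume $\lambda^{d-1}$ times that of $\Delta_d$. Hence
\[
\sup R_{t,s}\le (d-1)!\,\lambda^{-(d-1)}(1-\lambda)^{-n}.
\]
This is the standard Blum--Kalai-type argument for Universal Portfolio.

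\textbf{Step 3: choosing $\lambda$.} Take $\lambda=1/(n+1)$. Then $(1-\lambda)^{-n}=(1+1/n)^n\le e$ and $\lambda^{-(d-1)}=(n+1)^{d-1}\le T^{d-1}\le(T+1)^{d-1}$. The remaining factor satisfies $(d-1)!\cdot e\le d^d e\le e^{d(\ln d+1)}$. Taking logs gives $\ln P_t(\u)\le(d-1)\ln(T+1)+d(\ln d+1)$.

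\textbf{Main obstacle.} The main point needing care is the bookkeeping of the density normalization on $\Delta_d$: the Lebesgue measure on the $(d-1)$-dimensional simplex versus $\mathrm{Dir}(\mathbf{1})$ having density $(d-1)!$. The slack $d(\ln d+1)$ is intended to absorb that factor. A second point is confirming in Step~1 that the mixture decomposition holds exactly, including the $\mu_1$ convention for the initial component.
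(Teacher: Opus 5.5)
Your proof is correct, but it bounds each restarted component by a different route than the paper. The unrolling step is the same. $P_t$ is a convex combination of posteriors proportional to $\mathrm{Dir}(\mathbf{1})(\u)\prod_{\tau=s}^{t-1}\u^\top\x_\tau$, one per restart time, so it suffices to bound each component's maximum density.

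\emph{The paper's route.} It bounds the log-density at the component's mode $\w_*$ using the telescoped mixability identity with the Dirichlet comparator $\mathrm{Dir}(\mathbf{1}+T\w_*)$:
\begin{itemize}
\item the expected-loss gap is at most $T\ln(1+d/T)\le d$, by the Gamma-reparametrization argument used for term~(a) in Theorem~\ref{thm:main};
\item the KL term to $\mathrm{Dir}(\mathbf{1})$ is at most $(d-1)\ln(1+T)$, by Lemma~\ref{lemma:KL-upper} and the digamma computation;
\item the negative KL term is dropped.
\end{itemize}

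\emph{Your route.} You lower-bound the normalizing constant directly by integrating over the homothetic copy $(1-\lambda)\u^\star+\lambda\Delta_d$. This is the classical Blum--Kalai volumetric argument.

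Both arguments control the same quantity. Indeed, $\ln\sup_{\u}R_{t,s}(\u)=\ln\Gamma(d)+\ln F(\u^\star)-\ln\E_{\mathrm{Dir}(\mathbf{1})}[F]$, which is $\ln\Gamma(d)$ plus the static regret of Cover's Universal Portfolio on $[s,t-1]$. In effect, the paper reproves that regret bound with its Dirichlet-comparator machinery, and you reprove it the classical way.

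\emph{What each buys.} Yours is more elementary: it uses only $\x_\tau\ge 0$ and volume scaling. It needs no digamma bounds, no Lemma~\ref{lemma:KL-upper}, and no Lemma~\ref{lemma:trigamma_function}. It also gives the slightly sharper constant $(d-1)\ln T+\ln((d-1)!)+1$. The paper's route reuses estimates already established for Theorem~\ref{thm:main} and keeps the whole analysis inside one framework.

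Two small points to make explicit:
\begin{itemize}
\item $F(\u^\star)>0$, since each $\x_\tau$ has a positive coordinate and so $F>0$ on $\operatorname{ri}(\Delta_d)$. This also makes every normalizer in the mixture recursion positive.
\item For $n=0$, the component is exactly $\mathrm{Dir}(\mathbf{1})$, with density $(d-1)!$. Treat it directly rather than invoking $\lambda=1$ with the convention $0^0=1$.
\end{itemize}
The rescaling of $\x_\tau$ to unit maximum is harmless but unnecessary.
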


\begin{proof}[Proof of Lemma~\ref{lemma:upper-bound}] According to the EW and fixed-share update steps in Algorithm~\ref{alg:hedge-oco}, for any $\w\in\Delta_d$, we have
    \begin{align*}
      P_{t+1}(\w) = (1-\mu_{t+1}) \tilde{P}_{t+1}(\w) + \mu_{t+1} P_1(\w) = \frac{(1-\mu_{t+1}) e^{-\ell_t(\w)}}{\E_{\u\sim P_t}[e^{-\ell_t(\u)}] }P_t(\w) + \mu_{t+1} P_1(\w).
    \end{align*}
Unrolling this recursion shows that $P_{t+1}$ is a convex combination of posterior distributions initialized at the different restart times. Specifically, define
\begin{align}
  P_{t+1,i}(\w) = \frac{1}{W_{t+1,i}} P_1(\w) e^{-\sum_{s=i}^t \ell_s(\w)} \mbox{ for all } i\leq t+1,\;\label{eq:Pt-i-def}
\end{align}
where $W_{t+1,i} = \E_{\u\sim P_1}[e^{-\sum_{s=i}^t \ell_s(\u)}]$ is the normalization factor and the empty sum is zero when $i=t+1$. An induction on $t$ gives nonnegative weights $\alpha_{t+1,1},\ldots,\alpha_{t+1,t+1}$ that sum to one and satisfy
\begin{align}
  P_{t+1}(\w) = \sum_{i=1}^{t+1} \alpha_{t+1,i} P_{t+1,i}(\w) \leq \max_{i\in[t+1]} P_{t+1,i}(\w). \label{eq:Pt-weighted-sum}
\end{align}

For $i = t+1$, we have $P_{t+1,t+1}(\w) = P_1(\w)$ for all $\w\in\Delta_d$ and $\ln P_{t+1,t+1}(\w) = \ln \Gamma(d)$ since $P_1$ is a uniform distribution on $\Delta_d$. Then, to prove the lemma it is sufficient to upper bound $\ln P_{t+1,i}(\w)$ for all $i\in[t]$ and $\w\in\Delta_d$. According to the definition~\eqref{eq:Pt-i-def}, the distribution $P_{t+1,i}(\w)\propto P_{t,i}(\w)\cdot \exp(-\ell_t(\w))$ for all $i\leq t$. Then, we have
\begin{align}
  \ln P_{t+1,i}(\w) ={}& \ln P_{t,i}(\w) -\ln (\E_{\u\sim P_{t,i}}[e^{-\ell_t(\u)}]) - \ell_t(\w)\notag\\
  ={}&\ln P_1(\w) - \sum_{s=i}^t \ln (\E_{\u\sim P_{s,i}}[e^{-\ell_s(\u)}]) - \sum_{s=i}^t\ell_s(\w)\notag\\
  = {}& \ln P_1(\w) + \sum_{s=i}^t\E_{\u\sim Q}[\ell_s(\u)] - \sum_{s=i}^t\ell_s(\w)\notag\\
  &\quad + \mathrm{KL}\left(Q \,\|\, P_1\right) - \mathrm{KL}\left( Q \,\|\, P_{t+1,i} \right). \label{eq:Pt-i-inequality}
\end{align}
for any distribution $Q$ over $\Delta_d$ whose support is contained in that of $P_1$. The second equality is due to the recursive definition of $P_{t,i}$. Let
\[
  \w_*^{t+1,i} = \argmax_{\w\in\Delta_d} \ln P_{t+1,i}(\w), \qquad Q_*^{t+1,i} = \mbox{Dir}(\mathbf{1}+T \w_*^{t+1,i}).
\]
Then, the same arguments used to upper bound term~(a) in the proof of Theorem~\ref{thm:main} yield
\[
  \sum_{s=i}^t \E_{\u\sim Q_*^{t+1,i}}[\ell_s(\u)] - \sum_{s=i}^t\ell_s(\w_*^{t+1,i}) \leq T \ln \left(1+ \frac{d}{T}\right)\leq d.
\]
Besides, based on Lemma~\ref{lemma:KL-upper}, we have $\mbox{KL}(Q_*^{t+1,i} \,\|\, P_{1}) \leq (d-1) \ln (1+T)$. Plugging the above inequalities into~\eqref{eq:Pt-i-inequality} with the fact that $\ln P_1(\w) = \ln \Gamma(d)$ yields
\begin{align*}
  \ln P_{t+1,i}(\w) \leq{}& \ln \Gamma(d) + d + (d-1)\ln (1+T) \\
  \leq{}& (d-1)\ln (1+T) + d(\ln d +1)
\end{align*}
for any $\w\in\Delta_d$ and $i\in[t]$. Finally,~\eqref{eq:Pt-weighted-sum} gives
\[
  P_{t+1}(\w) \leq \max_{i\in[t+1]} P_{t+1,i}(\w),
\]
which completes the proof.
\end{proof}

\begin{lemma}
\label{lemma:dirichlet-js}
Let $\gamma>0$ and $Q_{\u}=\mathrm{Dir}(\mathbf{1}+\gamma\u)$ for $\u\in\Delta_d$. Then, for any $\u,\mathbf{v}\in\Delta_d$,
\[
  \int_{\Delta_d}\lvert Q_{\u}(\w)-Q_{\mathbf{v}}(\w)\rvert\mathrm{d}\w \leq\min\left\{2,2\sqrt{2\gamma}\,\mathrm{JS}(\u,\mathbf{v})\right\}.
\]
\end{lemma}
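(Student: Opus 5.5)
The bound $2$ is trivial, since the $L_1$ distance between two densities is at most $2$. For the second bound, the plan is to route through the midpoint comparator. Let $\mathbf{m}=(\u+\mathbf{v})/2$ and $Q_{\mathbf{m}}=\mathrm{Dir}(\mathbf{1}+\gamma\mathbf{m})$. By the triangle inequality and Pinsker's inequality,
\[
  \int_{\Delta_d}|Q_{\u}-Q_{\mathbf{v}}|\,\mathrm{d}\w
  \le \sqrt{2\,\mathrm{KL}(Q_{\u}\Vert Q_{\mathbf{m}})}+\sqrt{2\,\mathrm{KL}(Q_{\mathbf{v}}\Vert Q_{\mathbf{m}})}
  \le 2\sqrt{\mathrm{KL}(Q_{\u}\Vert Q_{\mathbf{m}})+\mathrm{KL}(Q_{\mathbf{v}}\Vert Q_{\mathbf{m}})},
\]
where the last step uses concavity of the square root. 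It therefore suffices to prove
\[
  \mathrm{KL}(Q_{\u}\Vert Q_{\mathbf{m}})+\mathrm{KL}(Q_{\mathbf{v}}\Vert Q_{\mathbf{m}})\le 2\gamma\,\mathrm{JS}(\u,\mathbf{v})^2=\gamma\bigl(\mathrm{KL}(\u\Vert\mathbf{m})+\mathrm{KL}(\mathbf{v}\Vert\mathbf{m})\bigr).
\]

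To prove this, I would use the exponential-family structure from Property~\ref{prop:dirichlet-update}. All three parameter vectors have the same total $\alpha_0=d+\gamma$, so the $\ln\Gamma(\alpha_0)$ terms in the KL formula cancel. What remains is a Bregman divergence generated by the separable convex function $\phi(\mathbf{a})=\sum_i \ln\Gamma(a_i)$, restricted to the affine slice $\sum_i a_i=d+\gamma$; on this slice the common $\psi(\alpha_0)$ term drops out because $\sum_i(\alpha_i-\beta_i)=0$. For $\mathbf{a}=\mathbf{1}+\gamma\u$, $\mathbf{b}=\mathbf{1}+\gamma\mathbf{v}$ and midpoint $\mathbf{c}=(\mathbf{a}+\mathbf{b})/2$, the sum of Bregman divergences toward the midpoint is the Jensen gap
\[
  D_\phi(\mathbf{a},\mathbf{c})+D_\phi(\mathbf{b},\mathbf{c})=\phi(\mathbf{a})+\phi(\mathbf{b})-2\phi(\mathbf{c}),
\]
because the linear terms cancel at the midpoint. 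This reduces the claim to a coordinatewise scalar inequality: for $x,y\ge 0$ and $z=(x+y)/2$,
\[
  \ln\Gamma(1+\gamma x)+\ln\Gamma(1+\gamma y)-2\ln\Gamma(1+\gamma z)\le \gamma\bigl(x\ln\tfrac{x}{z}+y\ln\tfrac{y}{z}\bigr).
\]
Summing over coordinates then gives exactly $\gamma(\mathrm{KL}(\u\Vert\mathbf{m})+\mathrm{KL}(\mathbf{v}\Vert\mathbf{m}))$ on the right-hand side.

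I expect the scalar inequality to be the main obstacle. My plan is to write both sides as Jensen gaps of convex functions: the left side uses $f(s)=\ln\Gamma(1+s)$ and the right side uses $g(s)=s\ln s$, evaluated at the points $\gamma x$ and $\gamma y$ (the right side is scale-invariant in this sense, since $\gamma x\ln\frac{\gamma x}{\gamma z}=\gamma x\ln\frac{x}{z}$). It then suffices to compare second derivatives, $f''\le g''$, i.e.\ $\psi'(1+s)\le 1/s$ for $s>0$, with the boundary case $s=0$ handled by continuity. Since the Jensen gap equals $\int_0^1\!\int (\text{second derivative})\times(\text{nonnegative kernel})$, the pointwise comparison transfers to the gaps. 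The trigamma bound follows from $\psi'(1+s)=\sum_{k\ge1}(k+s)^{-2}\le\int_0^\infty (t+s)^{-2}\,\mathrm{d}t=1/s$, which is of the same type as the trigamma facts used in Lemma~\ref{lemma:KL-upper}. Combining this with the Pinsker step yields the bound $2\sqrt{2\gamma}\,\mathrm{JS}(\u,\mathbf{v})$.
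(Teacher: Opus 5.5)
\textbf{The gap is the direction of the KL divergence.} As written, your reduction step fails. For Dirichlets with a common total $\alpha_0$, the formula in Property~\ref{prop:dirichlet-update} gives $\mathrm{KL}(\mathrm{Dir}(\alphab)\Vert\mathrm{Dir}(\boldsymbol{\beta}))=D_\phi(\boldsymbol{\beta},\alphab)$. The base point of the Bregman divergence is therefore the parameter of the \emph{first} argument of the KL. So your Jensen gap $D_\phi(\mathbf{a},\mathbf{c})+D_\phi(\mathbf{b},\mathbf{c})$ equals $\mathrm{KL}(Q_{\mathbf{m}}\Vert Q_{\u})+\mathrm{KL}(Q_{\mathbf{m}}\Vert Q_{\mathbf{v}})$. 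It is not the quantity $\mathrm{KL}(Q_{\u}\Vert Q_{\mathbf{m}})+\mathrm{KL}(Q_{\mathbf{v}}\Vert Q_{\mathbf{m}})=D_\phi(\mathbf{c},\mathbf{a})+D_\phi(\mathbf{c},\mathbf{b})$ that you said suffices. That latter quantity is not a Jensen gap, and your stated intermediate target is false. Your second-derivative comparison applied to $D_\phi(\mathbf{c},\mathbf{a})$ would only give $\gamma\,\mathrm{KL}(\mathbf{m}\Vert\u)$, which is infinite when $\u$ has a zero coordinate where $\mathbf{m}$ does not. Concretely, take $d=2$, $\u=\e_1$, $\mathbf{v}=\e_2$. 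Then
\[
  \mathrm{KL}(Q_{\u}\Vert Q_{\mathbf{m}})=\frac{\gamma}{2}\bigl(\psi(1+\gamma)-\psi(1)\bigr)-\ln\frac{\Gamma(1+\gamma)}{\Gamma(1+\gamma/2)^2},
\]
which grows like $\frac{\gamma}{2}\ln\gamma$, whereas $2\gamma\,\mathrm{JS}(\u,\mathbf{v})^2=2\gamma\ln 2$. Already at $\gamma=8$, each KL term equals $4H_8-\ln 70\approx 6.62$. Their sum, $\approx 13.25$, exceeds $16\ln 2\approx 11.09$.

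\textbf{The fix.} Total variation is symmetric, so apply Pinsker as $\lVert Q_{\u}-Q_{\mathbf{m}}\rVert_1\le\sqrt{2\,\mathrm{KL}(Q_{\mathbf{m}}\Vert Q_{\u})}$, and likewise for $\mathbf{v}$. Your reduction target then becomes
\[
  \mathrm{KL}(Q_{\mathbf{m}}\Vert Q_{\u})+\mathrm{KL}(Q_{\mathbf{m}}\Vert Q_{\mathbf{v}})\le\gamma\bigl(\mathrm{KL}(\u\Vert\mathbf{m})+\mathrm{KL}(\mathbf{v}\Vert\mathbf{m})\bigr),
\]
which is exactly your Jensen-gap inequality. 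The rest of your argument then goes through: the trigamma bound $\psi'(1+s)\le 1/s$, continuity at $s=0$, and concavity of the square root. In fact, the comparison is just nonnegativity of the Jensen gap of the convex function $s\ln s-\ln\Gamma(1+s)$, so you do not need the kernel representation.

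\textbf{Comparison with the paper.} After this correction your proof is essentially the paper's. The paper also passes through $Q_{\mathbf{m}}$ using the KL \emph{from} the midpoint, and identifies $\mathrm{KL}(Q_{\mathbf{m}}\Vert Q_{\u})=B_F(\gamma\u\Vert\gamma\mathbf{m})$. It then bounds each Bregman term separately by $\gamma\,\mathrm{KL}(\u\Vert\mathbf{m})$, using first-order convexity of the same function. Your version applies that convexity fact to the sum of the two terms instead of to each one.
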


\begin{proof}[Proof of Lemma~\ref{lemma:dirichlet-js}]
For $\u,\mathbf{v}\in\Delta_d$, let $\mathbf{m}=(\u+\mathbf{v})/2$. We have
\begin{align}
  \lVert Q_{\u}-Q_{\mathbf{v}}\rVert_1 &\leq\sqrt{2\mathrm{KL}(Q_{\mathbf{m}}\Vert Q_{\u})} +\sqrt{2\mathrm{KL}(Q_{\mathbf{m}}\Vert Q_{\mathbf{v}})}\notag\\
  &=\sqrt{2B_F(\gamma\u\Vert\gamma\mathbf{m})} +\sqrt{2B_F(\gamma\mathbf{v}\Vert\gamma\mathbf{m})}, \label{eq:dirichlet-midpoint-pinsker}
\end{align}
where the first line follows from the triangle and Pinsker inequalities, and the second follows from \citet[Lemma~10]{COLT'18:Hoeven-EW} since Dirichlet distributions form an exponential family. Here $B_F(\mathbf{a}\Vert\mathbf{b})
=F(\mathbf{a})-F(\mathbf{b})-\langle\nabla F(\mathbf{b}),\mathbf{a}-\mathbf{b}\rangle$ is the Bregman divergence induced by the Dirichlet log-partition function

\[
  F(\z)=\sum_{i=1}^d\ln\Gamma(1+z_i)-\ln\Gamma\left(d+\sum_{i=1}^d z_i\right).
\]

We next bound $B_F$ by the KL divergence. Let $\psi$ denote the digamma function.
Since the trigamma function satisfies $\psi'(1+x)\leq 1/x$ for $x>0$, the function
$h(x)=x\ln x-\ln\Gamma(1+x)$ is convex on $(0,\infty)$.
The first-order convexity inequality
$h(a)\geq h(b)+h'(b)(a-b)$ indicates that
\begin{equation}
  \label{eq:scalar-gamma-bound} \ln\frac{\Gamma(1+a)}{\Gamma(1+b)}-(a-b)\psi(1+b) \leq a\ln\frac{a}{b}-a+b
\end{equation}
for any $a,b>0$. With the convention $0\ln0=0$, both sides are continuous in $a$ at zero, so the inequality also holds for $a=0$.
Since $m_i=0$ implies $u_i=v_i=0$, such coordinates contribute zero to $B_F(\gamma\u\Vert\gamma\mathbf{m})$. We therefore only need to consider coordinates with $m_i>0$.
Applying~\eqref{eq:scalar-gamma-bound} coordinatewise, we obtain
\begin{align*}
  B_F(\gamma\u\Vert\gamma\mathbf{m}) &=\sum_{i:m_i>0} \left[ \ln\frac{\Gamma(1+\gamma u_i)}{\Gamma(1+\gamma m_i)} -\gamma(u_i-m_i)\psi(1+\gamma m_i) \right]\\
  &\leq\gamma\sum_{i:m_i>0} \left[u_i\ln\frac{u_i}{m_i}-u_i+m_i\right] =\gamma\mathrm{KL}(\u\Vert\mathbf{m}),
\end{align*}
where the first and last equalities use $\sum_{i=1}^d u_i=\sum_{i=1}^d m_i=1$. The same argument applies to $\mathbf{v}$.

Substituting the above displayed inequality into~\eqref{eq:dirichlet-midpoint-pinsker} and applying the Cauchy-Schwarz inequality yields
\begin{align*}
  \lVert Q_{\u}-Q_{\mathbf{v}}\rVert_1 &\leq \sqrt{2\gamma\mathrm{KL}(\u\Vert\mathbf{m})} +\sqrt{2\gamma\mathrm{KL}(\mathbf{v}\Vert\mathbf{m})}\\
  &\leq 2\sqrt{\gamma\big( \mathrm{KL}(\u\Vert\mathbf{m}) +\mathrm{KL}(\mathbf{v}\Vert\mathbf{m}) \big)}\\
  &= 2\sqrt{2\gamma}\,\mathrm{JS}(\u,\mathbf{v}).
\end{align*}
Finally, we complete the proof by noting that
$\lVert Q_{\u}-Q_{\mathbf{v}}\rVert_1\leq2$
for any $\u,\mathbf{v}\in\Delta_d$.
\end{proof}

\subsection{Proof of Theorem~\ref{thm:main}}
\label{app:jeffreys-upper-bound}

\begin{proof}[Proof of Theorem~\ref{thm:main}]
Let $P_1 = \mbox{Dir}(\mathbf{1})$. According to the definition of the loss function $\ell_t(\w) = -\ln(\w^\top\x_t)$ and the prediction $\w_t = \E_{\w\sim P_t}[\w]$, we have
\begin{align*}
  \ell_t(\w_t) ={}& - \ln \left(\E_{\w\sim P_t}[\exp(-\ell_t(\w))]\right)\\
  = {}& \E_{\w\sim Q_t}[\ell_t(\w)] + \mathrm{KL}\left(Q_t \,\|\, P_t\right) - \mathrm{KL}\left(Q_t \,\|\, \tilde{P}_{t+1}\right)\\
  ={}&\E_{\w\sim Q_t}[\ell_t(\w)] + \mathrm{KL}\left(Q_t \,\|\, P_t \right) -\mathrm{KL}\left(Q_{t} \,\|\, P_{t+1} \right) + \E_{\w\sim Q_t}\left[\ln \frac{\tilde P_{t+1}(\w)}{{P}_{t+1}(\w)}\right]\\
  \leq{}& \E_{\w\sim Q_t}[\ell_t(\w)] + \mathrm{KL}\left(Q_t \,\|\, P_t \right) -\mathrm{KL}\left(Q_{t} \,\|\, P_{t+1} \right) + \ln \frac{1}{1-\mu_{t+1}}
\end{align*}
where $\tilde{P}_{t+1}(\w)\propto P_t(\w) \exp(-\ell_t(\w))$ for all $\w\in\Delta_d$. In the above, the last inequality holds since  $P_{t+1}(\w) = (1-\mu_{t+1}) \tilde{P}_{t+1}(\w) + \mu_{t+1} P_1(\w)$ for all $\w\in\Delta_d$ and $t\geq 1$. We can further upper bound the term $\ln (1/(1-\mu_{t+1}))\leq 1/t$ by the setting $\mu_{t+1} = 1/(1+t)$. Taking the sum from $t=1$ to $T$ rounds and rearranging the terms yields
\begin{align*}
  \sum_{t=1}^T \ell_t(\w_t) \leq{}& \underbrace{\sum_{t=1}^T \E_{\w\sim Q_t}[\ell_t(\w)]}_{\texttt{term~(a)}}\\
  &+ \underbrace{\sum_{t=2}^T \left(\mathrm{KL}\left(Q_t \,\|\, P_t \right) -\mathrm{KL}\left(Q_{t-1} \,\|\, P_{t} \right) \right) }_{\texttt{term~(b)}}\\
  &+ \underbrace{\mathrm{KL}\left(Q_1 \,\|\, P_1\right)}_{\texttt{term~(c)}} + (1+ \ln T).
\end{align*}
The next step is to choose $Q_t$ to make the bound tight. Here, we use the Dirichlet distribution $Q_t = \mbox{Dir}(\mathbf{1}+\gamma \u_t)$, where $\gamma>0$ is a parameter to be tuned later and $\u_t$ is the time-varying comparator sequence. Next, we bound the three terms separately.

\paragraph{Bounding term~(a).} For each round $t \in [T]$, the return vector $\x_t$ has at least one nonzero entry. For simplicity, we assume that the first $\bar{d}_t$ entries of $\x_t$ are nonzero. Besides, for each dimension $i \in [d]$, let $\{Z_{t,i}\}_{i=1}^d$ be independent random variables with $Z_{t,i} \sim \mathrm{Gamma}(1 + \gamma u_{t,i}, 1)$, and define $Z_{t,0} = \sum_{i=1}^d Z_{t,i} \sim \mathrm{Gamma}(d + \gamma, 1)$. It is known that the random vector $(Z_{t,1}/Z_{t,0}, \dots, Z_{t,d}/Z_{t,0})$ follows the Dirichlet distribution $\mathrm{Dir}(\mathbf{1} + \gamma \u_t)$. We have
\begin{align}
  \texttt{term~(a)} ={}& \sum_{t=1}^T \E_{Z_{t,1},\dots,Z_{t,d}}\left[-\ln\left(\sum_{i=1}^d \frac{x_{t,i} Z_{t,i}}{Z_{t,0}} \right)\right]\notag\\
  ={}&\sum_{t=1}^T \E_{Z_{t,0}}[\ln(Z_{t,0})]+ \sum_{t=1}^T \E_{Z_{t,1},\dots,Z_{t,d}}\left[ - \ln\left(\sum_{i=1}^d x_{t,i} Z_{t,i} \right)\right]\notag\\
  ={}& \sum_{t=1}^T \psi(d+ \gamma) + \sum_{t=1}^T \E_{Z_{t,1},\dots,Z_{t,d}}\left[ - \ln\left(\sum_{i=1}^{\bar{d}_t} x_{t,i} Z_{t,i} \right)\right]\notag\\
  \leq{}& \sum_{t=1}^T \psi(d+ \gamma) + \sum_{t=1}^T \E_{Z_{t,1},\dots,Z_{t,d}}\left[ - \sum_{i=1}^{\bar{d}_t} p_{t,i} \ln (x_{t,i}Z_{t,i}/p_{t,i})\right]\notag\\
  ={}& \sum_{t=1}^T \psi(d+ \gamma) -\sum_{t=1}^T \left( \sum_{i=1}^{\bar{d}_t} p_{t,i}\big(\ln x_{t,i}+\psi(1+\gamma u_{t,i})\big) - \sum_{i=1}^{\bar{d}_t} p_{t,i} \ln (p_{t,i}) \right).\label{eq:term-a-1}
\end{align}
In the third line, we use the identity $\E[\ln(Z_{t,0})] = \psi(d+\gamma)$ for $Z_{t,0} \sim \mathrm{Gamma}(d+\gamma,1)$, and note that $\sum_{i=1}^d x_{t,i} Z_{t,i} = \sum_{i=1}^{\bar d_t} x_{t,i} Z_{t,i}$ since the remaining entries are zero. For the fourth line, since $x_{t,i} Z_{t,i} > 0$ for all $i \in [\bar{d}_t]$ and $t \in [T]$, we apply the inequality $\ln\left(\sum_{i=1}^{\bar{d}_t} a_i\right) \ge \sum_{i=1}^{\bar{d}_t} p_i \ln\left(\frac{a_i}{p_i}\right),$ which holds for all $a_i > 0$ and any probability vector $\mathbf p = (p_1,\dots,p_{\bar{d}_t}) \in \mathrm{ri}(\Delta_{\bar{d}_t})$, where $\mathrm{ri}(\Delta_{\bar{d}_t})$ denotes the relative interior of the simplex $\Delta_{\bar{d}_t}$.

Then, we can tune the probability vector $\p_t$ to make the bound~\eqref{eq:term-a-1} tight. The goal is to solve the optimization problem
\[
  V_t^* = \max_{\p_t\in\mathrm{ri}(\Delta_{\bar{d}_t})} \sum_{i=1}^{\bar{d}_t} p_{t,i}\big(\ln x_{t,i}+\psi(1+\gamma u_{t,i})\big) - \sum_{i=1}^{\bar{d}_t} p_{t,i} \ln (p_{t,i}),
\]
which has the closed-form solution by $V_t^* = \ln\left(\sum_{i=1}^{\bar{d}_t} x_{t,i}\exp(\psi(1+\gamma u_{t,i}))\right)$ achieved at $p_{t,i}^* \propto x_{t,i}\exp(\psi(1+\gamma u_{t,i}))$. Plugging the optimal solution back to~\eqref{eq:term-a-1} yields
\begin{align*}
  \texttt{term~(a)} \leq{}& \sum_{t=1}^T \psi(d+ \gamma) - \sum_{t=1}^T \ln\left(\sum_{i=1}^{\bar{d}_t} x_{t,i}\exp(\psi(1+\gamma u_{t,i}))\right)\\
  \leq{}& \sum_{t=1}^T \psi(d+\gamma) - \sum_{t=1}^T \ln \left(\gamma\sum_{i=1}^{\bar{d}_t} x_{t,i} u_{t,i}\right)\\
  ={}&\sum_{t=1}^T \ell_t(\u_t) + T(\psi(d+\gamma) - \ln \gamma)\\
  \leq{}& \sum_{t=1}^T \ell_t(\u_t) + T \ln \left(1+ \frac{d}{\gamma}\right)
\end{align*}
where the second line holds since $\psi(1+x) \geq \ln x$ for all $x>0$. The second inequality also holds for the case $u_{t,i}=0$ since $\exp(\psi(1))>0$. The third line follows from the fact that only the first $\bar d_t$ entries of $\x_t$ are nonzero. The last inequality holds since $\psi(x)\leq  \ln (x)$ for all $x>0$.
~\\
~\\
{\textbf{Bounding term~(b):}} As for term~(b), we have
\begin{align*}
  \texttt{term~(b)} ={}& \sum_{t=2}^T \left(\mathrm{KL}\left(Q_t \,\|\, P_t \right) -\mathrm{KL}\left(Q_{t-1} \,\|\, P_{t} \right) \right)\\
  ={}& \sum_{t=2}^T \E_{\w\sim Q_t}\left[\ln \frac{Q_t(\w)}{P_t(\w)}\right] - \E_{\w\sim Q_{t-1}}\left[\ln \frac{Q_{t-1}(\w)}{P_{t}(\w)}\right]\\
  ={}& \sum_{t=2}^T \left(H(Q_{t-1}) - H(Q_t)\right) + \sum_{t=2}^T \E_{\w\sim Q_{t-1}}\left[\ln P_t(\w)\right] - \E_{\w\sim Q_{t}}\left[\ln P_{t}(\w)\right]\\
  ={}&\underbrace{\sum_{t=2}^T \int_{\w\in\Delta_d} (Q_{t-1}(\w) - Q_t(\w))\ln P_t(\w)\mathrm{d} \w }_{\texttt{term~(b-1)}}+ \underbrace{H(Q_1) - H(Q_T)}_{\texttt{term~(b-2)}}
\end{align*}
We then proceed to bound term~(b-1) and term~(b-2) separately. For notational simplicity, we denote by $C_{T,d} = (d-1)\ln (T+1) + d(\ln d +1)$. We can upper bound term~(b-1) by
\begin{align}
  \texttt{term~(b-1)} \leq{}& \sum_{t=2}^T \int_{\w\in\Delta_d} \vert Q_{t-1}(\w) - Q_t(\w)\vert \cdot \vert \ln P_t(\w)\vert \mathrm{d} \w\notag\\
  {}&\leq C_{T,d}\sum_{t=2}^T \int_{\Delta_d} |Q_{t-1}(\w) - Q_t(\w)|\mathrm{d} \w, \label{eq:term-b-1a}
\end{align}
The last inequality holds because $\lvert\ln P_t(\w)\rvert\leq C_{T,d}$ for all $t\in[T]$ and $\w\in\Delta_d$. Indeed, the upper bound $\ln P_t(\w)\leq C_{T,d}$ follows from Lemma~\ref{lemma:upper-bound}. For the lower bound, the fixed-share update~\eqref{eq:step2-FS} and $P_1(\w)=\Gamma(d)$ give $P_t(\w)\geq\mu_tP_1(\w)=\Gamma(d)/t\geq1/T$, so $\ln P_t(\w)\geq-\ln T\geq-C_{T,d}$.

Then, we can further bound the total variation between $Q_t$ and $Q_{t-1}$ by
\begin{align}
  \sum_{t=2}^T \int_{\Delta_d} |Q_{t-1}(\w)-Q_t(\w)| \mathrm{d}\w \leq{}& 2\sqrt{2\gamma} \sum_{t=2}^T \operatorname{JS}(\u_t,\u_{t-1}) = 2\sqrt{2\gamma}\,P_{T}^{\mathrm{JS}}, \label{eq:term-b1-b}
\end{align}
where the inequality holds by Lemma~\ref{lemma:dirichlet-js}, and the equality follows from the definition of $P_T^{\mathrm{JS}}$.
Combining~\eqref{eq:term-b-1a} and~\eqref{eq:term-b1-b}, we arrive at
\begin{align*}
  \texttt{term~(b-1)} \leq\, 2\sqrt{2}C_{T,d}\sqrt{\gamma}\, P_{T}^{\mathrm{JS}} .
\end{align*}
As for term~(b-2), we have
\begin{align*}
  \texttt{term~(b-2)} ={}& H(Q_1) - H(Q_T) = \mathrm{KL}\left(Q_T \,\|\, P_1 \right) - \mathrm{KL}\left(Q_1 \,\|\, P_1 \right)
\end{align*}
since $P_1 = \mbox{Dir}(\mathbf{1})$ is a uniform distribution over the simplex. Finally, we arrive at
\begin{align*}
  \texttt{term~(b)} \leq{}& 2\sqrt{2}C_{T,d}\sqrt{\gamma}\, P_{T}^{\mathrm{JS}} +\mathrm{KL}\left(Q_T\,\Vert\,P_1\right) -\mathrm{KL}\left(Q_1\,\Vert\,P_1\right).
\end{align*}
~\\
{\textbf{Combining All}.}
Combining the bounds on terms (a) and (b), we get
\begin{align}
  \sum_{t=1}^T \ell_t(\w_t) \leq{}& \sum_{t=1}^T \ell_t(\u_t) + T\ln\left(1+\frac{d}{\gamma}\right)+ 2\sqrt{2}C_{T,d}\sqrt{\gamma}\, P_{T}^{\mathrm{JS}}\notag\\
  &+ \mathrm{KL}\left(Q_T\,\Vert\,P_1\right) +(1+\ln T). \label{eq:final-regret-bound-1}
\end{align}
We can further upper bound the KL divergence term by
\begin{align*}
  \mathrm{KL}\left(Q_T \,\|\, P_1 \right) \leq {}&\mathrm{KL}\left( \mathrm{Dir}(\mathbf{1} + \gamma \mathbf{e}_i)\,\|\, P_1\right) \\
  ={}& \ln \frac{\Gamma(d+\gamma)}{\Gamma(d)\Gamma(1+\gamma)} + \gamma (\psi(1+\gamma) - \psi(d+\gamma)) \\
  \leq{}& (d-1) \ln (1+\gamma).
\end{align*}
where $\e_i$ is a one-hot vector with $1$ at the $i$-th position and $0$ elsewhere. The first inequality is due to Lemma~\ref{lemma:KL-upper}. For the last inequality, we use the identities ${\Gamma(d+\gamma)}/{\Gamma(1+\gamma)}
=\prod_{i=1}^{d-1}(\gamma+i)$ and $\Gamma(d)=\prod_{i=1}^{d-1}i$,
together with $\psi(1+\gamma)-\psi(d+\gamma)\leq0$. Then, the regret bound~\eqref{eq:final-regret-bound-1} becomes
\begin{align*}
  {}& \sum_{t=1}^T \ell_t(\w_t) - \sum_{t=1}^T \ell_t(\u_t) \\
  \leq{}& T\ln\left(1+\frac{d}{\gamma}\right) + 2\sqrt{2}C_{T,d}\sqrt{\gamma}\, P_{T}^{\mathrm{JS}} +(d-1)\ln(1+\gamma) +(1+\ln T),
\end{align*}
where $C_{T,d} = (d-1)\ln (T+1) + d(\ln d +1) = \O\big(d \ln (Td)\big)$. Since $\gamma$ appears only in the analysis, we can choose it to optimize the regret bound. We consider the following two cases, depending on the value of $P_{T}^{\mathrm{JS}}$:
\begin{itemize}
  \item
  \textbf{Case 1
  \big($P_{T}^{\mathrm{JS}}\leq 1/T$
  \big).}
  We set $\gamma=T$, which yields an
  $\mathcal{O}(d\ln(dT))$ regret bound.

  \item
  \textbf{Case 2
  \big($P_{T}^{\mathrm{JS}}>1/T$\big).}
  We choose $\gamma
  =T^{\frac{2}{3}}
      \big(P_{T}^{\mathrm{JS}}\big)^{-\frac{2}{3}}
      (\ln(dT))^{-\frac{2}{3}}.$
  Substituting this choice of $\gamma$ into the bound gives
  \[
    \sum_{t=1}^T \ell_t(\w_t) - \sum_{t=1}^T \ell_t(\u_t) = \mathcal{O}\left( d\Big( T^{\frac{1}{3}} \big(P_{T}^{\mathrm{JS}}\big)^{\frac{2}{3}} (\ln(dT))^{\frac{2}{3}} + \ln(dT) \Big) \right).
  \]
\end{itemize}
We have completed the proof by combining the two cases.
\end{proof}

\subsection{Proof of Theorem~\ref{thm:lower-bound-kl}}
\label{app:jeffreys-lower-bound}

\begin{proof}[Proof of Theorem~\ref{thm:lower-bound-kl}]
The proof follows the same overall argument as in the proof of Theorem~\ref{thm:lower-bound}. The main difference lies in the hard example construction. The lower bound in Theorem~\ref{thm:lower-bound} relies on a hard instance with comparator sequences near the boundary of the simplex. Here, we instead construct a hard instance showing that the $T^{1/3}(P_T^{\mathrm{JS}})^{2/3}$ dependence is optimal up to logarithmic factors even for uniformly interior comparator sequences.

Our goal remains to establish a lower bound on the minimax regret.
\begin{align*}
  \mathcal{W}_T(\U^{\mathrm{JS}}_C) = \inf_{f_{1:T}} \sup_{\x_1,\dots,\x_T\in \R_+^d} \sup_{\u_{1:T}\in\U_C^{\mathrm{JS}}} \left( \sum_{t=1}^T\ell_t(\w_t) - \sum_{t=1}^T\ell_t(\u_t) \right),
\end{align*}
where $f_{1:T}$ denotes the sequence of online prediction rules and
$\w_t=f_t(\x_{1:t-1})$ is the algorithm's prediction based only on past
observations. Here,
\[
  \U^{\mathrm{JS}}_{C} \mathrel{=} \left\{ \u_{1:T}\mathrel{\in}\Delta_d^T \,\middle|\, \sum_{t=2}^T \operatorname{JS}\left(\u_t,\u_{t-1}\right) \mathrel{\leq} C \right\}
\]
is the set of all comparator sequences whose JS-path length
is at most $C$. The same reduction as in the proof of
Theorem~\ref{thm:lower-bound} gives
\begin{align*}
  \mathcal{W}_T(\U_C^{\mathrm{JS}}) \geq \mathcal{V}_T(\U_C^{\mathrm{JS}}) &:={} \inf_{f_{1:T}} \sup_{y_1,\dots,y_T\in\Y} \sup_{\u_{1:T}\in\U_C^{\mathrm{JS}}} \left( \sum_{t=1}^T\ell_{\log}(\w_t,y_t) - \sum_{t=1}^T\ell_{\log}(\u_t,y_t) \right),
\end{align*}
where $\Y=[d]$ is an alphabet of size $d$ and
$\ell_{\log}(\w,y)=-\log[\w]_y$ for any $\w\in\Delta_d$.

\paragraph{Hard Example Construction.} We first restrict
attention to the main regime $C\in
\left[
  \sqrt{\frac{2(d-1)}{T}},
  \frac{T}{4\sqrt{2}(d-1)}
\right]$, which is non-empty by the assumption $T>4(d-1)$.
As in the proof of Theorem~\ref{thm:lower-bound}, we partition the
horizon into $K=\lfloor T/L\rfloor$ blocks, where the first $K-1$
blocks have length
\[
  L=\left\lceil \frac{d-1}{\epsilon^2}\right\rceil \mbox{ with } \epsilon= \left( \frac{(d-1)C}{\sqrt{2}T} \right)^{\frac{1}{3}}.
\]
Under the main regime, $\sqrt{(d-1)/T}\leq\epsilon\leq1/2$, so $L\leq T$.
The final block has length $T-(K-1)L\in[L,2L-1]$. We denote the
$k$-th block by $\mathcal{I}_k=[s_k,e_k]$.

For each block $k\in[K]$, let
$\mathbf{I}_k=[I_{k,1},\dots,I_{k,d-1}]\in\{0,1\}^{d-1}$, where
$I_{k,j}\sim\mathrm{Bern}(1/2)$ independently for every $j\in[d-1]$
and $k\in[K]$. For any $t\in\mathcal{I}_k$, we define
a probability vector $\widetilde{\u}_t$ in the interior of
$\Delta_d$ by
\begin{align*}
  [\widetilde{\u}_t]_j &= \frac{1}{2(d-1)}+\frac{\epsilon}{d-1} \left(I_{k,j}-\frac12\right) \quad\mbox{for all }j\in[d-1],\;t\in\mathcal{I}_k, \\
  [\widetilde{\u}_t]_d &= \frac12- \frac{\epsilon}{d-1} \sum_{j=1}^{d-1} \left(I_{k,j}-\frac12\right) \quad\mbox{for all }t\in\mathcal{I}_k.
\end{align*}
Indeed, since $\epsilon\leq1/2$, we have
$[\widetilde{\u}_t]_j\in\big[\frac{1}{4(d-1)},\frac{3}{4(d-1)}\big]$ for all $j\in[d-1]$ and $[\widetilde{\u}_t]_d\in[1/4,3/4]$.
We then generate $Y_t\sim\operatorname{Cat}(\widetilde{\u}_t)$
independently conditional on $\mathbf{I}_{1:K}$.

We next show that the comparator sequence
$\widetilde{\u}_{1:T}$ has JS-path length at most
$C$. Since $\widetilde{\u}_t$ is constant within each block, it can
change only at the $K-1$ block boundaries.
It therefore suffices to bound the JS distance between
$\widetilde{\u}_{s_k-1}$ and $\widetilde{\u}_{s_k}$, the comparator vectors on blocks
$k-1$ and $k$, respectively.
For any $k\in\{2,\dots,K\}$, we have
\begin{align}
  \operatorname{JS}(\widetilde{\u}_{s_k},\widetilde{\u}_{s_k-1})^2 ={}&\frac12\sum_{i=1}^d\Bigg[ [\widetilde{\u}_{s_k}]_i \ln\frac{2[\widetilde{\u}_{s_k}]_i} {[\widetilde{\u}_{s_k}]_i+[\widetilde{\u}_{s_k-1}]_i}+[\widetilde{\u}_{s_k-1}]_i \ln\frac{2[\widetilde{\u}_{s_k-1}]_i} {[\widetilde{\u}_{s_k}]_i+[\widetilde{\u}_{s_k-1}]_i} \Bigg]\notag\\
  \leq{}&\frac12\sum_{i=1}^d \frac{\big([\widetilde{\u}_{s_k}]_i-[\widetilde{\u}_{s_k-1}]_i\big)^2} {[\widetilde{\u}_{s_k}]_i+[\widetilde{\u}_{s_k-1}]_i}, \label{eq:kl-decomposition-interior}
\end{align}
where the inequality uses $\ln x\leq x-1$.
We bound the contributions of the first $d-1$ coordinates and the last
coordinate separately. For each $j\in[d-1]$, the coordinate
$[\widetilde{\u}_t]_j$ takes one of the two values
$(1-\epsilon)/(2(d-1))$ and $(1+\epsilon)/(2(d-1))$. Hence,
\begin{align}
  {}& \sum_{j=1}^{d-1} \frac{ \left( [\widetilde{\u}_{s_k}]_j - [\widetilde{\u}_{s_k-1}]_j \right)^{2} }{ [\widetilde{\u}_{s_k}]_j \mathbin{+} [\widetilde{\u}_{s_k-1}]_j } \mathrel{\leq} \frac{2\epsilon^2}{d-1} \sum_{j=1}^{d-1} \left| I_{k,j}-I_{k-1,j} \right| \leq 2\epsilon^2, \label{eq:first-coordinate-jeffreys-interior}
\end{align}
where the first inequality uses
$[\widetilde{\u}_{s_k}]_j+[\widetilde{\u}_{s_k-1}]_j\geq1/(2(d-1))$
and
$[\widetilde{\u}_{s_k}]_j-[\widetilde{\u}_{s_k-1}]_j
=\epsilon(I_{k,j}-I_{k-1,j})/(d-1)$.

For the last coordinate, we have
\begin{align}
  {}& \frac{ \left( [\widetilde{\u}_{s_k}]_d - [\widetilde{\u}_{s_k-1}]_d \right)^2 }{ [\widetilde{\u}_{s_k}]_d \mathbin{+} [\widetilde{\u}_{s_k-1}]_d } \leq 2\left( [\widetilde{\u}_{s_k}]_d - [\widetilde{\u}_{s_k-1}]_d \right)^2 \leq 2\epsilon^2. \label{eq:last-coordinate-jeffreys-interior}
\end{align}
The first inequality uses
$[\widetilde{\u}_{s_k}]_d,
[\widetilde{\u}_{s_k-1}]_d\geq1/4$. The last inequality follows from the
definition of $\widetilde{\u}_t$, which gives
\begin{align*}
  {}& \left| [\widetilde{\u}_{s_k}]_d - [\widetilde{\u}_{s_k-1}]_d \right| = \frac{\epsilon}{d-1} \left| \sum_{j=1}^{d-1} \left( I_{k,j}-I_{k-1,j} \right) \right| \leq \frac{\epsilon}{d-1} \sum_{j=1}^{d-1} \left| I_{k,j}-I_{k-1,j} \right| \leq \epsilon.
\end{align*}

Combining~\eqref{eq:first-coordinate-jeffreys-interior}
and~\eqref{eq:last-coordinate-jeffreys-interior} with~\eqref{eq:kl-decomposition-interior}, we obtain
\begin{align}
  P_T^{\mathrm{JS}} =\sum_{k=2}^K\operatorname{JS}\big(\widetilde{\u}_{s_k},\widetilde{\u}_{s_k-1}\big) \leq(K-1)\sqrt{2}\epsilon \mathrel{\leq}\frac{\sqrt{2}T\epsilon^3}{d-1}=C. \label{eq:jeffreys-path-hard-instance-interior}
\end{align}
Here, the second inequality uses
$K-1\leq T/L$ and $L\geq(d-1)/\epsilon^2$, and the last equality
follows from the definition of $\epsilon$. Thus, every realization of
the comparator sequence $\widetilde{\u}_{1:T}$ belongs to
$\U_C^{\mathrm{JS}}$.

\paragraph{Lower bounding the minimax regret.}
For each block $k\in[K]$, let
$Y_{\mathcal{I}_k}\coloneqq(Y_t)_{t\in\mathcal{I}_k}$ denote the random
label sequence on block $k$, and let
$y_{\mathcal{I}_k}\coloneqq(y_t)_{t\in\mathcal{I}_k}
\in\Y^{|\mathcal{I}_k|}$ denote one of its realizations. Conditional on
the environment index $\mathbf{I}_k$, the probability mass function of
$Y_{\mathcal{I}_k}$ is
$\widetilde{\q}_k(y_{\mathcal{I}_k}\mid\mathbf{I}_k)
\coloneqq
\prod_{t\in\mathcal{I}_k}[\widetilde{\u}_t]_{y_t}$, and we use
$\widetilde{\q}_{Y_{\mathcal{I}_k}\mid\mathbf{I}_k}$ to denote the
corresponding conditional distribution. We further define its marginal
probability mass function by
$\bar{\q}_k(y_{\mathcal{I}_k})
\coloneqq
\E_{\mathbf{I}_k}
[\widetilde{\q}_k(y_{\mathcal{I}_k}\mid\mathbf{I}_k)]$ and use
$\bar{\q}_{Y_{\mathcal{I}_k}}$ to denote the corresponding marginal
distribution. The same argument used to derive~\eqref{eq:lower-bound-V}
in the proof of Theorem~\ref{thm:lower-bound} then shows that
\begin{align}
  \mathcal{V}_T(\U_C^{\mathrm{JS}}) \geq \sum_{k=1}^K \E_{\mathbf{I}_k} \left[ \KL\left( \widetilde{\q}_{Y_{\mathcal{I}_k}\mid\mathbf{I}_k} \Vert \bar{\q}_{Y_{\mathcal{I}_k}} \right) \right]. \label{eq:blockwise-information-kl-interior}
\end{align}

We next lower bound the information contributed by each block. Recall that $\mathbf{I}_k=(I_{k,1},\dots,I_{k,d-1})$ is the binary environment index of block $k$, where $I_{k,j}\in\{0,1\}$ determines whether the $j$-th coordinate of $\widetilde{\u}_{s_k}$ is $(1-\epsilon)/(2(d-1))$ or $(1+\epsilon)/(2(d-1))$. For every $j\in[d-1]$, we define
\[
  Z_{k,j} =\sum_{t=s_k}^{s_k+L-1}\indicator\{Y_t=j\},
\]
which counts the number of occurrences of label $j$ in the first $L$ rounds of block $k$. We further define $\Zb_k=(Z_{k,1},\dots,Z_{k,d-1})$. Given a realization $I_{k,j}=b\mathrel{\in}\{0,1\}$, label $j$
is observed independently at each round with probability
$\pi_b=\frac{1}{2(d-1)}+\frac{\epsilon}{d-1}(b-\frac12)$.
Hence, conditional on $I_{k,j}=b$, the random variable $Z_{k,j}$ follows the binomial distribution
$P_b=\operatorname{Bin}(L,\pi_b)$, whose probability mass function is
\[
  P_b(z)=\Pr(Z_{k,j}=z\mid I_{k,j}=b) =\binom{L}{z}\pi_b^z(1-\pi_b)^{L-z}, \qquad z\in\{0,\dots,L\}.
\]
Since $\Zb_k$ is a deterministic function of
$Y_{\mathcal{I}_k}$, the same argument used
to derive~\eqref{eq:lower-bound-mutual} in the proof of
Theorem~\ref{thm:lower-bound} gives
\begin{align}
  {}&\E_{\mathbf{I}_k} \left[ \KL\left(\widetilde{\q}_{Y_{\mathcal{I}_k}\mid\mathbf{I}_k} \Vert\bar{\q}_{Y_{\mathcal{I}_k}} \right) \right]\geq \sum_{j=1}^{d-1} \mathrm{I}\left(I_{k,j};Z_{k,j} \right) \geq\frac{d-1}{2}\operatorname{TV}(P_0,P_1)^2. \label{eq:coordinate-information-kl-interior}
\end{align}
Here, $\mathrm{I}(X;Y)$ denotes the mutual information between the
random variables $X$ and $Y$, and
$\operatorname{TV}(P,Q)=\frac12\sum_z|P(z)-Q(z)|$
denotes the total variation distance.
The first inequality follows from the independence of the environment indices and the data-processing inequality, as in~\eqref{eq:lower-bound-mutual}.
The last inequality follows from
Pinsker's inequality since
$I_{k,j}\sim\operatorname{Bern}(1/2)$ and the marginal distribution of
$Z_{k,j}$ is the equally weighted mixture
$M=(P_0+P_1)/2$ of its two conditional distributions. In particular,
\[
  \mathrm{I}(I_{k,j};Z_{k,j}) =\frac12\KL(P_0\Vert M)+\frac12\KL(P_1\Vert M) \geq\frac12\operatorname{TV}(P_0,P_1)^2.
\]

We next lower bound the total variation distance between the two conditional distributions.
Let $\rho=\sqrt{\pi_0\pi_1}+\sqrt{(1-\pi_0)(1-\pi_1)}$.
We have
\begin{align*}
  \operatorname{TV}(P_0,P_1)=1-\sum_{z=0}^L\min\{P_0(z),P_1(z)\}\geq1-\sum_{z=0}^L\sqrt{P_0(z)P_1(z)}.
\end{align*}
The equality follows from
$|a-b|=a+b-2\min\{a,b\}$
and the fact that each probability mass function sums to one.
The inequality uses
$\min\{a,b\}\leq\sqrt{ab}$ for $a,b\geq0$.
Using the probability mass functions of the two binomial distributions, we obtain
\begin{align*}
  \sum_{z=0}^L\sqrt{P_0(z)P_1(z)} &=\sum_{z=0}^L\binom{L}{z} \big(\sqrt{\pi_0\pi_1}\big)^z \big(\sqrt{(1-\pi_0)(1-\pi_1)}\big)^{L-z}\\
  &=\left(\sqrt{\pi_0\pi_1}+\sqrt{(1-\pi_0)(1-\pi_1)}\right)^L =\rho^L.
\end{align*}
The second equality follows by expanding the
$L$-th power of the sum. To bound $\rho$, we note that
\begin{align}
  1-\rho &=\frac12\left[ (\sqrt{\pi_1}-\sqrt{\pi_0})^2 +(\sqrt{1-\pi_1}-\sqrt{1-\pi_0})^2 \right]\notag\\
  &\geq\frac12\frac{(\pi_1-\pi_0)^2}{(\sqrt{\pi_1}+\sqrt{\pi_0})^2} \geq\frac{\epsilon^2}{4(d-1)}, \label{eq:interior-affinity}
\end{align}
where the last inequality uses
$\pi_1-\pi_0=\epsilon/(d-1)$ and
$(\sqrt{\pi_1}+\sqrt{\pi_0})^2\leq2(\pi_0+\pi_1)=2/(d-1)$.
Substituting these bounds into the total variation inequality above gives
\begin{align}
  \operatorname{TV}(P_0,P_1) &\mathrel{\geq}1-\rho^L \geq1-\exp\left(-\frac{L\epsilon^2}{4(d-1)}\right) \geq1-e^{-1/4}, \label{eq:interior-total-variation}
\end{align}
where the second inequality uses $1-x\leq e^{-x}$
with $x=1-\rho$, and the last inequality follows from
$L\epsilon^2/(d-1)\geq1$. Substituting~\eqref{eq:interior-total-variation} into
\eqref{eq:coordinate-information-kl-interior}, we obtain
\begin{align}
  \E_{\mathbf{I}_k} \left[ \KL\left( \widetilde{\q}_{Y_{\mathcal{I}_k}\mid\mathbf{I}_k} \Vert \bar{\q}_{Y_{\mathcal{I}_k}} \right) \right] \geq \frac{d-1}{2} \left( 1-e^{-1/4} \right)^2. \label{eq:block-information-kl-interior}
\end{align}
Combining~\eqref{eq:blockwise-information-kl-interior}
and~\eqref{eq:block-information-kl-interior} yields
\begin{align}
  \mathcal{V}_T(\U_C^{\mathrm{JS}}) &\geq\frac{d-1}{2}(1-e^{-1/4})^2K \geq\frac{(1-e^{-1/4})^2}{8}T\epsilon^2 =\Omega\left((d-1)^{\frac23}T^{\frac13}C^{\frac23}\right). \label{eq:dynamic-main-regime-kl-interior}
\end{align}
Here, the second inequality uses
$K=\lfloor T/L\rfloor\geq T/(2L)\geq T\epsilon^2/(4(d-1))$, since
$T/L\geq1$ and
$L=\lceil(d-1)/\epsilon^2\rceil\leq2(d-1)/\epsilon^2$
in the main regime. The last equality follows from the definition
$\epsilon=((d-1)C/(\sqrt{2}T))^{1/3}$.

\paragraph{Handling Corner Cases.} We next consider the two regimes outside the main regime. First, suppose
that $C<\sqrt{2(d-1)/T}$. We use the same construction with $\epsilon=\epsilon_0\coloneqq\sqrt{(d-1)/T}$. In this case, $L=T$ and $K=1$, so the comparator
sequence is constant and has zero JS-path length.
Applying the one-block estimate in~\eqref{eq:block-information-kl-interior} gives
\[
  \mathcal{V}_T(\U_C^{\mathrm{JS}}) \geq \frac{d-1}{2} \left( 1-e^{-1/4} \right)^{2} = \Omega(d).
\]
Moreover, the condition on $C$ implies $(d-1)^{\frac{2}{3}}
T^{\frac{1}{3}}
C^{\frac{2}{3}}
<
2^{\frac{1}{3}}(d-1)
=
\O(d).$ Thus, the one-block lower bound already dominates the desired
dynamic term in this regime.

Next, suppose that
$C>T/(4\sqrt{2}(d-1))$. We apply the construction from the main regime with the
smaller path length budget
$
C_0
=
{T}/({4\sqrt{2}(d-1)}).
$
Since
$\U_{C_0}^{\mathrm{JS}}\subseteq\U_C^{\mathrm{JS}}$,
the monotonicity of the comparator classes and
\eqref{eq:dynamic-main-regime-kl-interior} give
\[
  \mathcal{V}_T(\U_C^{\mathrm{JS}}) \geq \mathcal{V}_T(\U_{C_0}^{\mathrm{JS}}) = \Omega\left( (d-1)^{\frac{2}{3}} T^{\frac{1}{3}} C_0^{\frac{2}{3}} \right) = \Omega(T).
\]

Combining the main regime with the two boundary regimes, we conclude that
\begin{align}
  \mathcal{V}_T(\U_C^{\mathrm{JS}}) \geq \Omega\left( \min\left\{ T, d^{\frac{2}{3}}T^{\frac{1}{3}}C^{\frac{2}{3}} \right\} \right). \label{eq:dynamic-lower-bound-kl-interior}
\end{align}

Combining this bound with the classical static lower bound
$\Omega(d\log(1+T/d))$ for $C=0$
completes the proof.
\end{proof}

\subsection{Proof of Corollary~\ref{Cor:Drichlet-minimax}}
\label{app:jeffreys-minimax-corollary}

\begin{proof}[Proof of Corollary~\ref{Cor:Drichlet-minimax}]
Fix any comparator sequence $\u_1,\dots,\u_T\in\Delta_d$ and let $\beta\in(0,1)$ be a certain parameter for mixing the comparator. We define the interior counterpart of $\u_t$ by $\tilde{\u}_t = (1-\beta)\u_t + \frac{\beta}{d}\mathbf{1}$ for all $t\in[T]$. Clearly, we have $\min_{i\in[d]}\tilde u_{t,i}\ge \beta/d$. Besides, the gap between $\tilde{\u}_t$ and $\u_t$ can be bounded by
\begin{align}
  \sum_{t=1}^T \ell_t(\tilde{\u}_t) - \sum_{t=1}^T \ell_t({\u}_t) ={}& \sum_{t=1}^T \ln\left(\frac{\u_t^\top \x_t}{(1-\beta) \u_t^\top \x_t + \frac{\beta}{d} \mathbf{1}^\top \x_t}\right)\leq T\ln \left(\frac{1}{1-\beta}\right). \label{eq:comparator-smoothing-gap}
\end{align}
We first relate the JS-path length of the smoothed sequence directly to the $L_1$-path length of the original sequence. For any $\p,\q\mathrel{\in}\Delta_d$ with $p_i,q_i\geq\beta/d$, let $\mathbf{m}=(\p+\q)/2$. Applying $\ln x\leq x-1$ coordinate-wise gives
\begin{align*}
  \operatorname{JS}(\p,\q)^2 &=\frac12\sum_{i=1}^d\left(p_i\ln\frac{p_i}{m_i}+q_i\ln\frac{q_i}{m_i}\right)\\
  &\leq\frac12\sum_{i=1}^d\left[p_i\left(\frac{p_i}{m_i}-1\right)+q_i\left(\frac{q_i}{m_i}-1\right)\right]\\
  &\mathrel{=}\frac12\sum_{i=1}^d\frac{(p_i-q_i)^2}{p_i+q_i} \mathrel{\leq}\frac{d}{4\beta}\norm{\p-\q}_2^2.
\end{align*}
Since $\tilde{\u}_t-\tilde{\u}_{t-1}=(1-\beta)(\u_t-\u_{t-1})$, it follows that
\begin{align}
  P_{T}^{\mathrm{JS}}(\tilde{\u}_{1:T}) &\leq \frac{1-\beta}{2}\sqrt{\frac{d}{\beta}} \sum_{t=2}^T\norm{\u_t-\u_{t-1}}_2 \leq \frac{1-\beta}{2}\sqrt{\frac{d}{\beta}}P_T. \label{eq:smoothed-jeffreys-path}
\end{align}

Then, we can upper bound the dynamic regret with respect to any comparator $\u_t\in\Delta_d$ by
\begin{align*}
  \DReg_T(\{\u_t\}_{t=1}^T) ={}& \sum_{t=1}^T \ell_t(\w_t) - \sum_{t=1}^T \ell_t(\tilde{\u}_t) + \sum_{t=1}^T \ell_t(\tilde{\u}_t) -\sum_{t=1}^T \ell_t(\u_t)\\
  \leq{}& \sum_{t=1}^T \ell_t(\w_t) - \sum_{t=1}^T \ell_t(\tilde{\u}_t) \;+\; T\ln \left(\frac{1}{1-\beta}\right)\\
  \leq{}& \O\left( d^{\frac43}B\beta^{-\frac13}(1-\beta)^{\frac23} T^{\frac13}P_T^{\frac23} +d\ln(dT) +T\ln \left(\frac{1}{1-\beta}\right) \right),
\end{align*}
where the first inequality follows from~\eqref{eq:comparator-smoothing-gap}, and the last inequality follows from~\eqref{eq:smoothed-jeffreys-path}
and the regret guarantee assumption in Corollary~\ref{Cor:Drichlet-minimax}.
We now choose $\beta$ to make the bound tight:
\begin{itemize}
\item \textbf{Case 1 ($P_T=0$).} We choose
  $\beta=1/T$, which yields
  \[
    \DReg_T(\{\u_t\}_{t=1}^T) \leq\O\left(d\ln(dT)\right).
  \]

\item \textbf{Case 2 ($P_T\mathrel{>}0$).}
  To balance the dynamic regret of $\tilde{\u}_{1:T}$ and smoothing terms, write
  $a=\beta/(1-\beta)>0$.
  Using $(1+a)^{-1/3}\leq1$ and
  $\ln(1+a)\leq a$, we obtain
  \[
    \DReg_T(\{\u_t\}_{t=1}^T) \leq\O\left( d^{\frac43}B T^{\frac13}P_T^{\frac23} a^{-\frac13}+Ta+d\ln(dT) \right).
  \]
  Balancing the first two terms gives $a=dB^{3/4}\sqrt{P_T/T}$.
  With this choice, both terms equal
  $dB^{3/4}\sqrt{TP_T}$, yielding
  \[
    \DReg_T(\{\u_t\}_{t=1}^T) \mathrel{\leq}\O\left(dB^{\frac34}\sqrt{TP_T}+d\ln(dT)\right).
  \]
\end{itemize}
Combining the two cases proves the claim.
\end{proof}

\subsection{Proof of Proposition~\ref{thm:interval-regret}}
\label{app:interval-regret}

\begin{proof}[Proof of Proposition~\ref{thm:interval-regret}]
Fix an interval $\mathcal I=[r,s]$ and a comparator $\u\in\Delta_d$. If $\ell_t(\u)=+\infty$ for some $t\in\mathcal I$, the interval-regret claim is immediate, so suppose that its loss is finite throughout $\mathcal I$.
Following the same steps as in the proof of Theorem~\ref{thm:main}, for any distribution $Q_t$ over $\Delta_d$, we have
\begin{align*}
  \ell_t(\w_t) \leq{}& \E_{\w\sim Q_t}[\ell_t(\w)] + \mathrm{KL}(Q_t\Vert P_t) - \mathrm{KL}(Q_t\Vert P_{t+1}) + \ln\frac{1}{1-\mu_{t+1}}.
\end{align*}
For each round $t\in\mathcal I=[r,s]$, we take the fixed distribution
$Q_t=Q_{\mathcal I}=\mathrm{Dir}(\mathbf{1}+\gamma\u)$.
Summing the above inequality over $\mathcal I$ with $\mu_t=1/t$ yields
\begin{align*}
  \sum_{t\in\mathcal I}\ell_t(\w_t) \leq{}& \underbrace{ \sum_{t\in\mathcal I} \E_{\w\sim Q_{\mathcal I}}[\ell_t(\w)] }_{\texttt{term~(a)}} + \underbrace{ \mathrm{KL}(Q_{\mathcal I}\Vert P_r) }_{\texttt{term~(b)}} - \mathrm{KL}(Q_{\mathcal I}\Vert P_{s+1}) + \sum_{t\in\mathcal I}\ln\left(1+\frac1t\right).
\end{align*}

For term~(a), the same argument as in the proof of Theorem~\ref{thm:main} gives
\begin{align*}
  \texttt{term~(a)} \leq \sum_{t\in\mathcal I}\ell_t(\u) + |\mathcal I|\ln\left(1+\frac d\gamma\right).
\end{align*}
For term~(b), the fixed-share update gives
$P_r(\w)\geq\mu_rP_1(\w)$, and hence
\begin{align*}
  \texttt{term~(b)} &= \mathrm{KL}(Q_{\mathcal I}\Vert P_1) + \E_{\w\sim Q_{\mathcal I}} \left[\ln\frac{P_1(\w)}{P_r(\w)}\right] \\
  &\leq \mathrm{KL}(Q_{\mathcal I}\Vert P_1) + \ln\frac1{\mu_r} \\
  &\leq (d-1)\ln(1+\gamma) + \ln T,
\end{align*}
where the last inequality follows from Lemma~\ref{lemma:KL-upper} and the calculation in the proof of Theorem~\ref{thm:main}.
Taking $\gamma=T$, dropping the nonpositive KL term, and using
\begin{align*}
  |\mathcal I|\ln\left(1+\frac dT\right) \leq d ~~~ \mbox{and}~~ \sum_{t=r}^s\ln\left(1+\frac1t\right) = \ln\frac{s+1}{r} \leq \ln(T+1)
\end{align*}
complete the proof for the interval regret.

For the switching guarantee, partition $[T]$ into the $\mathsf{S}_T+1$ maximal intervals on which the comparator is constant. Applying the interval-regret bound on each interval and summing gives
\begin{align*}
  \DReg_T(\{\u_t\}_{t=1}^T) \leq \O\bigl(d(\mathsf{S}_T+1)\ln(dT)\bigr),
\end{align*}
which completes the proof.
\end{proof}

\section{Omitted Proofs for Section~\ref{subsec:q-path}}
\label{app:q-path-proofs}
\subsection{Proof of Theorem~\ref{thm:q-path-bound}}
\label{app:q-path-bound}

\begin{proof}[Proof of Theorem~\ref{thm:q-path-bound}]
Fix any $q\in[0,1]$ and any comparator sequence $\u_1,\dots,\u_T\in\Delta_d$. We use the same mixability-based decomposition as in the proof of Theorem~\ref{thm:main}, with the virtual comparator distribution $Q_t=\mathrm{Dir}(\mathbf{1}+\gamma\u_t)$. The bounds on the expected-loss, endpoint, and fixed-share terms remain unchanged. The only modification concerns the total variation between $Q_t$ and $Q_{t-1}$ in term~\texttt{(b-1)}. For every $t\geq2$, Lemma~\ref{lemma:dirichlet-js} gives
\begin{align}
  \int_{\Delta_d} |Q_t(\w)-Q_{t-1}(\w)|\,\mathrm d\w &\leq \min\left\{2,2\sqrt{2\gamma}\,\operatorname{JS}(\u_t,\u_{t-1})\right\}. \label{eq:q-order-tv-js}
\end{align}
Therefore, under the endpoint convention above, for every $q\in[0,1]$,
\begin{align*}
  \int_{\Delta_d} |Q_t(\w)-Q_{t-1}(\w)|\,\mathrm d\w &\leq \min\{2,2\sqrt{2\gamma}\operatorname{JS}(\u_t,\u_{t-1})\}\\
  &\leq 2(2\gamma)^{q/2}\operatorname{JS}(\u_t,\u_{t-1})^q.
\end{align*}
Summing over time and using $2^{q/2}\leq\sqrt{2}$ yields
\begin{align}
  \sum_{t=2}^T \int_{\Delta_d} |Q_t(\w)-Q_{t-1}(\w)|\,\mathrm d\w \leq 2\sqrt{2}\gamma^{q/2}P_{T,q}^{\mathrm{JS}}. \label{eq:q-order-total-variation}
\end{align}

Substituting~\eqref{eq:q-order-total-variation} into~\eqref{eq:term-b-1a} and retaining the other bounds from the proof of Theorem~\ref{thm:main}, we obtain
\begin{align}
  \DReg_T(\{\u_t\}_{t=1}^T) \leq{}& T\ln\left(1+\frac d\gamma\right) + 2\sqrt{2}C_{T,d}\gamma^{q/2}P_{T,q}^{\mathrm{JS}}\notag\\
  &+(d-1)\ln(1+\gamma) + (1+\ln T), \label{eq:q-order-final-regret}
\end{align}
where $C_{T,d}=(d-1)\ln(T+1)+d(\ln d+1)=\O(d\ln(dT))$. If $P_{T,q}^{\mathrm{JS}}\leq T^{-q/2}$, choose $\gamma=T$. Equation~\eqref{eq:q-order-final-regret} then gives $\O(d\ln(dT))$. Otherwise, choose
\begin{align*}
  \gamma = T^{\frac{2}{q+2}} \bigl(P_{T,q}^{\mathrm{JS}}\bigr)^{-\frac{2}{q+2}} \bigl(\ln(dT)\bigr)^{-\frac{2}{q+2}}.
\end{align*}
The condition of this case ensures $\gamma\leq T$. Using $\ln(1+x)\leq x$ in~\eqref{eq:q-order-final-regret}, the first two terms are both bounded by
\begin{align*}
  \O\left( d T^{\frac{q}{q+2}} \bigl(P_{T,q}^{\mathrm{JS}}\bigr)^{\frac{2}{q+2}} \bigl(\ln(dT)\bigr)^{\frac{2}{q+2}} \right),
\end{align*}
while the remaining terms are $\O(d\ln(dT))$. Combining the two cases proves the theorem.
\end{proof}

\subsection{Calculations for the Rising Concave Path}
\label{app:rising-concave-calculation}

For the rising concave path, write $p_t=[\u_t]_1=3/4-1/(4t)$ and $J_t=\operatorname{JS}(\u_t,\u_{t-1})$. For every $t\geq2$, let $\delta_t\coloneqq p_t-p_{t-1}=1/(4t(t-1))$. Let $m_t=(p_t+p_{t-1})/2$ and $\bar{\u}_t=(\u_t+\u_{t-1})/2$. The JS distance between the consecutive Bernoulli distributions satisfies
\begin{align}
  J_{t}^{2} &\mathrel{=} \frac12\mathrm{KL}(\u_{t}\Vert\bar{\u}_t) \mathbin{+}\frac12\mathrm{KL}(\u_{t-1}\Vert\bar{\u}_t). \label{eq:rising-concave-js}
\end{align}
For either $\theta\mathrel{=}p_t$ or $\theta\mathrel{=}p_{t-1}$, the inequality $\ln x\leq x-1$ gives
\[
  \mathrm{KL}\bigl(\operatorname{Bern}(\theta)\Vert\operatorname{Bern}(m_t)\bigr) \mathrel{\leq}\frac{(\theta-m_t)^2}{m_t(1\mathbin{-}m_{t})} \mathrel{=}\frac{\delta_t^2}{4m_{t}(1\mathbin{-}m_{t})}.
\]
Since $m_t\in[1/2,3/4]$, each of the two KL divergences is at most $4\delta_t^2/3$. On the other hand, Pinsker's inequality bounds each of them below by $\delta_t^2/2$, since
\[
  \lVert\u_t-\bar{\u}_t\rVert_1=\lVert\u_{t-1}-\bar{\u}_t\rVert_1=\delta_t.
\]
Therefore,
\begin{align}
  \frac{\delta_t}{\sqrt{2}} \leq J_t \leq \frac{2\delta_t}{\sqrt{3}}. \label{eq:rising-concave-js-bounds}
\end{align}
Consequently, $J_t=\Theta([t(t-1)]^{-1})$. Because every transition is nonzero, $P_{T,0}^{\mathrm{JS}}=T-1$. For each fixed $q\in(0,1]$, summing~\eqref{eq:rising-concave-js-bounds} gives
\begin{align}
  P_{T,q}^{\mathrm{JS}} &= \Theta\left( \sum_{t=2}^T[t(t-1)]^{-q} \right) = \begin{cases} \Theta\bigl(T^{1-2q}\bigr), &0<q<\frac12,\\
  \Theta(\ln T), &q=\frac12,\\
  \Theta(1), &\frac12<q\leq1. \end{cases} \label{eq:rising-concave-q-path}
\end{align}
Suppressing logarithmic factors, substitution into Theorem~\ref{thm:q-path-bound} gives a regret bound of order $dT^{\alpha(q)}$, where
\begin{align}
  \alpha(q) = \begin{cases} \displaystyle\frac{2-3q}{q+2}, &0\leq q\leq\frac12,\\[2mm]
  \displaystyle\frac{q}{q+2}, &\frac12\leq q\leq1. \end{cases} \label{eq:rising-concave-regret-exponent}
\end{align}
The first branch is strictly decreasing and the second is strictly increasing, so $\alpha(q)$ is uniquely minimized at $q=1/2$, where $\alpha(1/2)=1/5$. In comparison, $\alpha(0)=1$ and $\alpha(1)=1/3$, yielding the three rates stated in Section~\ref{subsec:q-path}.

\section{Omitted Proofs for Section~\ref{sec:results-bounded-gradient}}
\label{app:general-oxo-proofs}
\label{appendix:proof-main-OCO}
\begin{proof}[Proof of Theorem~\ref{thm:main-OCO}]
We begin with a similar mixability-based regret decomposition as~\citet{ICML'25:Zhang-mixability}. Let $\tilde{m}_t(P_t) = -\frac{1}{\eta} \ln\left(\E_{\u\sim P_t}[e^{-\eta \tilde{\ell}_t(\u)}]\right)$ be the mix loss. The dynamic regret can be decomposed by
\begin{align*}
  {}&\sum_{t=1}^T \ell_t(\w_t) - \sum_{t=1}^T \ell_t(\u_t)\leq \sum_{t=1}^T \tilde{\ell_t}(\w_t) - \sum_{t=1}^T \tilde{\ell}_t(\u_t)\\
  = {}& \underbrace{\sum_{t=1}^T \tilde{\ell_t}(\w_t) -\sum_{t=1}^T \tilde{m}_t(P_t)}_{\texttt{term~(a)}} + \underbrace{\sum_{t=1}^T \tilde{m}_t(P_t)-\sum_{t=1}^T \E_{\u\sim Q_t}[\tilde{\ell}_t(\u)]}_{\texttt{term~(b)}}\\
  {}&+\underbrace{\sum_{t=1}^T \E_{\u\sim Q_t}[\tilde{\ell}_t(\u)] - \sum_{t=1}^T \tilde{\ell}_t(\u_t)}_{\texttt{term~(c)}},
\end{align*}
where the first line is due to~\citet[Lemma 4.2]{book'16:Hazan-OCO} under the step size setting $\eta = \frac{1}{5}\min\left\{\frac{1}{2G},\kappa\right\}$. In the analysis we choose $Q_t = \mathcal{N}(\u_t,\sigma^2 I_d)$, where $\sigma>0$ is a parameter that can be virtually tuned to make the bound tight.

One can handle terms~(a) and~(c) using arguments similar to those in~\citet{ICML'25:Zhang-mixability}. However, the most challenging part is the analysis of term~(b), where the domain constraint is enforced via an intractable I-projection of the distribution $P_t$ onto a set of infinite Gaussian mixtures. In our algorithm, instead of performing this intractable projection, we identify that it is sufficient to project each component of the mixture distribution $P_t$ individually rather than projecting the mixture as a whole. The latter would require a more in-depth analysis that leverages the two-layer structure of Algorithm~\ref{alg:flh-ew}. This leads to an efficient method. In what follows, we first analyze terms~(a) and~(c) using arguments similar to those in~\citep{ICML'25:Zhang-mixability}, and then turn to the most challenging term~(b).

\paragraph{Bounding term~(a).} Since $\tilde{\ell}_t(\w)$ is a quadratic function and the initial distribution of each base base-leaner $\mathcal{B}_i$ is a Gaussian, according to~\citet[Theorem 5]{COLT'18:Hoeven-EW} shows that the distribution $P_{t+1,i} = \mathcal{N}(\w_{t+1,i},H^{-1}_{t+1,i})$ for any base-learner $B_i$ updated by~\eqref{eq:update-rule} is also a Gaussian distribution. More precisely, the mean and covariance matrix can be updated by
\begin{equation}
  \begin{cases} & H_{t+1,i} = H_{t,i} + 2\eta^2 \g_t \g_t^\top\\
  & \w'_{t+1,i} = \w_{t,i} - \eta H_{t+1,i}^{-1}(1-2\eta\g_t^\top(\w_t-\w_{t,i})) \g_t\\
  & \w_{t+1,i} = \argmin_{\u\in\W} \norm{\u-\w'_{t+1,i}}_{H_{t+1,i}} \end{cases} \label{eq:update-ONS-EW}
\end{equation}
The above essentially follows the update procedure of online Newton step~\citep{MLJ'07:ONS}. The design matrix is also symmetric positive definite and $H_{t+1,i} = I_d+ 2\eta^2\sum_{s=i}^{t}\g_s\g_s^\top \leq (1+ \frac{dt}{2})I_d \leq dT I_d$ for any $\mathcal{B}_{i}\in H_t$ and $\w_{t,i}\in\W$ due to the projection step.

Our goal is to show $\texttt{term~(a)}\leq 0$. To show this, it is sufficient to have $\E_{\u\sim P_t}\big[\exp(-\eta \tilde{\ell}_t(\u))\big] \leq \exp\big(-\eta\tilde{\ell}_t(\w_t)\big) = 1$ for each iteration. This can be achieved by the following arguments
\begin{align*}
  \E_{\u\sim P_t}\left[\exp(-\eta \tilde{\ell}_t(\u))\right] ={}& \sum_{\mathcal{B}_i\in\mathcal{H}_t} p_{t,i} \E_{\u\sim P_{t,i}}[\exp(-\eta \tilde{\ell}_t(\u))]\\
  ={}&\sum_{\mathcal{B}_i\in\mathcal{H}_t} p_{t,i} \E_{\u\sim P_{t,i}}\left[\exp\Big(\eta \g_t^\top(\w_t - \u) - \eta^2 \norm{\u-\w_t}^2_{\g_t\g_t^\top}\Big)\right]\\
  \leq {}& \sum_{\mathcal{B}_i\in\mathcal{H}_t}p_{t,i} \exp\Big(\eta \g_t^\top (\w_t - \w_{t,i}) - \eta^2 \norm{\w_t-\w_{t,i}}^2_{\g_t\g_t^\top}\Big)\\
  \leq {}&\sum_{\mathcal{B}_i\in\mathcal{H}_t } p_{t,i} \left(1+\eta\g_t^\top(\w_t-\w_{t,i})\right) = 1,
\end{align*}
where the first inequality is due to~\citep[Lemma 10]{NIPS'16:MetaGrad} under the condition $\eta \leq 1/(10G)$. The second inequality holds because $e^{z-z^2}\leq 1+z$ for any $z\geq -\frac{2}{3}$. Then, we can have
\begin{align}
  \mbox{term~(a)}\leq 0 \label{eq:final-termA-OCO}.
\end{align}

\paragraph{Bounding term~(c).} A direct calculation according to the definition of $\tilde{\ell}_t$ shows that
\begin{align}
  \texttt{term~(c)} ={}& \sum_{t=1}^T \E_{\u\sim Q_t}[\tilde{\ell}_t(\u)] - \sum_{t=1}^T \tilde{\ell}_t(\u_t)\notag\\
  ={}& \eta \sum_{t=1}^T \E_{\u\sim Q_t}[(\g_t^\top(\u-\u_t))^2]\notag\\
  ={}& \eta \sigma^2\sum_{t=1}^T \norm{\g_t}_2^2\leq \eta dG^2T\sigma^2, \label{eq:term-C-final}
\end{align}
\pagebreak
where the last inequality is due to $\norm{\g_t}_2 \leq \sqrt{d}\norm{\g_t}_\infty \leq \sqrt{d} G$.

\paragraph{Bounding term~(b).}
 As for term~(b), different from the previous work~\citep{ICML'25:Zhang-mixability}, we decompose the mix loss by exploiting the two-layer structure. Denote by
 \[
   \tilde{m}_t(P_{t,i}) = -\frac{1}{\eta}\ln\left(\E_{P_{t,i}}[\exp(-\eta \tilde{\ell}_t(\u))]\right)
 \]
 the mix loss for the individual distribution $P_{t,i}$. We can rewrite the mix loss for the aggregated distribution $P_t$ as
 \begin{align}
   \tilde{m}_t(P_t) ={}& -\frac{1}{\eta} \ln\left(\sum_{\mathcal{B}_i\in\mathcal{H}_t }p_{t,i}\cdot \E_{\u\sim P_{t,i}}[\exp(-\eta \tilde{\ell}_t(\u))]\right)\notag\\
   ={}& -\frac{1}{\eta} \ln\left(\sum_{\mathcal{B}_i\in\mathcal{H}_t }p_{t,i}\cdot \exp(-\eta \tilde{m}_t(P_{t,i}))\right)\notag\\
   ={}& \sum_{\mathcal{B}_i\in\mathcal{H}_t} q_{t,i}\cdot \tilde{m}_t(P_{t,i}) + \frac{1}{\eta}\left(\mathrm{KL}\left(\q_t \,\|\, \p_t\right) - \mathrm{KL}\left(\q_t \,\|\, \tilde{\p}_{t+1}\right)\right), \label{eq:mix-meta}
 \end{align}
 where $\p_t\in\Delta_{\vert \mathcal{H}_t\vert}$ denotes the probability vector over the base-learner pool with the $i$-th entry $p_{t,i}$ and $\tilde{p}_{t+1,i} \propto p_{t,i} \cdot \exp(-\eta \tilde{m}_t(P_{t,i})) = p_{t,i}\cdot \E_{P_{t,i}}[\exp(-\eta\tilde{\ell}_t(\u))]$ is the same as the one defined in Algorithm~\ref{alg:flh-ew}. The last line holds for any $\q_t\in\Delta_{\vert \mathcal{H}_t\vert}$ that assigns weight to each elements in $\mathcal{H}_t$ and $q_{t,i}$ is the $i$-th entry of $\q_t$. Furthermore, we can also rewrite the mix loss for each base learner $\mathcal{B}_i$ as
 \begin{align}
   \tilde{m}_t(P_{t,i}) ={}& \E_{\u\sim Q_t}[\tilde{\ell}_t(\u)] + \frac{1}{\eta} \left(\mathrm{KL}\left(Q_t \,\|\,P_{t,i} \right) - \mathrm{KL}\left(Q_t \,\|\, P'_{t+1,i}\right) \right)\notag\\
   \leq{}&\E_{\u\sim Q_t}[\tilde{\ell}_t(\u)] + \frac{1}{\eta} \left(\mathrm{KL}\left(Q_t \,\|\,P_{t,i} \right) - \mathrm{KL}\left(Q_t \,\|\, P_{t+1,i}\right) \right),~\label{eq:mix-base}
 \end{align}
 where $P'_{t+1,i} \propto P_{t,i} \exp(-\eta \tilde{\ell}_t(\u))$ is the same as~\eqref{eq:update-rule} and the last line is to the Pythagorean theorem for KL divergence since $P_{t+1,i} = \argmin_{P'\in\mathscr{W}}\mathrm{KL}\left(P' \,\|\,P'_{t+1,i} \right)$ and $\mathscr{W}$ is a convex set. Then, plugging~\eqref{eq:mix-base} back into~\eqref{eq:mix-meta}, we arrive
 \begin{align}
   \tilde{m}_t(P_t) \leq \E_{\u\sim Q_t}[\tilde{\ell}_t(\u)] {}&+\underbrace{ \frac{1}{\eta}\left( \sum_{\mathcal{B}_i\in\mathcal{H}_t} q_{t,i} \mathrm{KL}\left(Q_t \,\|\,P_{t,i} \right)+\mathrm{KL}\left(\q_t \,\|\, \p_t\right) \right)}_{\texttt{term~(b-1)}} \notag \\
   {}&\underbrace{- \frac{1}{\eta}\left(\sum_{\mathcal{B}_i\in\mathcal{H}_t} q_{t,i} \mathrm{KL}\left(Q_t \,\|\, P_{t+1,i}\right)+ \mathrm{KL}\left(\q_t \,\|\, \tilde{\p}_{t+1}\right)\right)}_{\texttt{term~(b-2)}}\label{eq:decomposition-efficient}
 \end{align}
 holds for any $\q_t\in\Delta_{\vert \mathcal{H}_t\vert}$ and $Q_t$. Here, we specify $\q_t$ as the minimizer of the optimization problem
 \begin{align*}
   \q_t = \argmin_{\q\in\Delta_{\vert \mathcal{H}_t\vert}} \sum_{\mathcal{B}_i\in\mathcal{H}_t} q_{i} \mathrm{KL}\left(Q_t \,\|\,P_{t,i} \right)+\mathrm{KL}\left(\q \,\|\, \p_t\right),
 \end{align*}
 whose optimal value has the close form formulation as
 \[
   V_t(Q_t) = -\ln\left(\sum_{\mathcal{B}_i\in\mathcal{H}_t}p_{t,i}\cdot\exp(- \mathrm{KL}\left(Q_t \,\|\,P_{t,i} \right)) \right)\geq 0.
 \]
 The value $V_t(Q_t)$ is always greater than 0 since the objective function of the above optimization problem is non-negative. Then, we have
 \[
   \texttt{term~(b-1)}\leq \frac{1}{\eta} V_t(Q_t).
 \]
As for term~(b-2), we can similarly define
\[
  \tilde{\q}_{t+1} = \argmin_{\q\in\Delta_{\vert \mathcal{H}_t\vert}}\sum_{\mathcal{B}_i\in\mathcal{H}_t} q_{i} \mathrm{KL}\left(Q_t \,\|\, P_{t+1,i}\right)+ \mathrm{KL}\left(\q \,\|\, \tilde{\p}_{t+1}\right).
\]
We also have $\tilde{V}_{t+1}(Q_t) = -\ln\Big(\sum_{\mathcal{B}_i\in\mathcal{H}_t}\tilde{p}_{t+1,i}\cdot\exp(- \mathrm{KL}\left(Q_t \,\|\,P_{t+1,i} \right)) \Big)$ as the optimal value of the above optimization problem. Clearly, we have
\begin{align*}
  \texttt{term~(b-2)} =- \frac{1}{\eta}\left(\sum_{\mathcal{B}_i\in\mathcal{H}_t} q_{t,i} \mathrm{KL}\left(Q_t \,\|\, P_{t+1,i}\right)+ \mathrm{KL}\left(\q_t \,\|\, \tilde{\p}_{t+1}\right)\right) \leq -\frac{1}{\eta}\tilde{V}_{t+1}(Q_t),
\end{align*}
since $\q_{t,i}$ is not the minimizer of the objective function in term~(b-2). Plugging the upper bound of term~(b-1) and term~(b-2) into~\eqref{eq:decomposition-efficient}, we have
\begin{align}
  \tilde{m}_t(P_t) \leq \E_{\u\sim Q_t}[\tilde{\ell}_t(\u)] + \frac{1}{\eta}\left(V_t(Q_t) - \tilde{V}_{t+1}(Q_t)\right).\label{eq:termb-intemediate}
\end{align}
Then, we related $\tilde{V}_{t+1}(Q_t)$ to $V_{t+1}(Q_t)$ by
\begin{align}
  V_{t+1}(Q_t) ={}& -\ln\left(\sum_{\mathcal{B}_i\in\mathcal{H}_{t+1}}p_{t+1,i}\cdot\exp(- \mathrm{KL}\left(Q_t \,\|\,P_{t+1,i} \right)) \right)\notag\\
  ={}&-\ln\Bigl((1-\mu_{t+1})\sum_{\mathcal{B}_i\in\mathcal{H}_{t}}\tilde{p}_{t+1,i}\cdot\exp(- \mathrm{KL}\left(Q_t \,\|\,P_{t+1,i} \right))\notag\\
  {}&\qquad + \mu_{t+1} \exp(-\mathrm{KL}\left(Q_t \,\|\, N_0\right) ) \Bigr)\notag\\
  \leq{}& - \ln \left(\sum_{\mathcal{B}_i\in \mathcal{H}_t}\tilde{p}_{t+1,i}\cdot \exp(-\mathrm{KL}\left(Q_t \,\|\,P_{t+1,i} \right) )\right) +\ln \left(\frac{1}{1-\mu_{t+1}}\right)\notag\\
  ={}& \tilde{V}_{t+1}(Q_t) + \log\left(\frac{t+1}{t}\right), \label{eq:fixed-share-anlaysis}
\end{align}
where the second line is due to the fixed-share update~\eqref{eq:fixed-share-FLH} with $N_0 = \mathcal{N}(\u_0,I_d)$ and the last equality is due to the parameter setting $\mu_{t+1} = 1/(t+1)$. Plugging~\eqref{eq:fixed-share-anlaysis} back into~\eqref{eq:termb-intemediate} and taking a summation over $T$ rounds, we obtain
\begin{align}
  \sum_{t=1}^T\tilde{m}_t(P_t) \leq{}& \sum_{t=1}^T \E_{\u\sim Q_t}[\tilde{\ell}_t(\u)] + \frac{1}{\eta}\sum_{t=1}^T\left(V_t(Q_t) - V_{t+1}(Q_t)\right) + \sum_{t=1}^T\frac{1}{\eta}\log\left(\frac{t+1}{t}\right)\notag\\
  \leq {}&\sum_{t=1}^T \E_{\u\sim Q_t}[\tilde{\ell}_t(\u)] + \frac{1}{\eta}\sum_{t=2}^T\left(V_t(Q_t) - V_{t}(Q_{t-1})\right)+ \frac{1}{\eta}V_1(Q_1) + \frac{\log(T+1)}{\eta}.\label{eq:termb-intermediate-2}
\end{align}
where the second inequality holds because $V_t(Q)$ is always non-negative for any $Q$.

It remains to handle the variation term $V_t(Q_t) - V_t(Q_{t-1})$. Denote by
\begin{equation}
  h_{t,i}(\u) = \frac{1}{2}\left(\log \vert H_{t,i}^{-1}\vert + \sigma^2 \mbox{Tr}(H_{t,i}) + \norm{\u-\w_{t,i}}^2_{H_{t,i}}\right), \label{eq:define-h}
\end{equation}
where $\mbox{Tr}(A)$ indicates the trace of a matrix $A$. Then, the variation term can be expressed as
\begin{align}
  V_t(Q_t) - V_{t}(Q_{t-1}) ={}& \ln\left(\frac{\sum_{\mathcal{B}_i\in\mathcal{H}_t}p_{t,i}\cdot\exp(- \mathrm{KL}\left(Q_{t-1} \,\|\,P_{t,i} \right))}{\sum_{\mathcal{B}_i\in\mathcal{H}_t}p_{t,i}\cdot\exp(- \mathrm{KL}\left(Q_t \,\|\,P_{t,i} \right))}\right)\notag \\
  ={}& \ln\left(\frac{\sum_{\mathcal{B}_i\in\mathcal{H}_t}p_{t,i}\cdot\exp(-h_{t,i}(\u_{t-1}))}{\sum_{\mathcal{B}_i\in\mathcal{H}_t}p_{t,i}\cdot\exp(-h_{t,i}(\u_t))}\right) \notag\\
  = {}& J_t(\u_t) - J_t(\u_{t-1})\notag\\
  \leq{}& \sup_{\u\in\W} \norm{\nabla J_t(\u)}_2\cdot \norm{\u_t-\u_{t-1}}_2\notag\\
  \leq{}& \sup_{\u\in\W} \norm{\nabla J_t(\u)}_2\cdot \norm{\u_t-\u_{t-1}}_1 \label{eq:path-length-termB}
\end{align}
where the second line is due to the definition of KL divergence for Gaussian distributions and we define $J_t(\u) = -\ln \left(\sum_{\mathcal{B}_i\in\mathcal{H}_t }p_{t,i}\cdot \exp(-h_{t,i}(\u))\right)$ in the last line. The next step is to control the gradient of the function $J_t(\u)$, which can be calculated as
\begin{align*}
  \nabla J_t(\u) = \sum_{\mathcal{B}_i\in\mathcal{H}_t} \beta_{t,i}(\u) H_{t,i} (\u - \w_{t,i})\quad \forall \u\in\W,
\end{align*}
where $\beta_{t,i}(\u)\in\Delta_{\vert \mathcal{H}_t\vert}$ and $ \beta_{t,i}(\u) \propto p_{t,i} \exp(-h_{t,i}(\u))$. We then  bound the norm of $\nabla J_t(\u)$ by
\begin{align}
  \norm{\nabla J_t(\u)}_2 \leq{}& \sum_{\mathcal{B}_i\in\mathcal{H}_t} \beta_{t,i}(\u) \cdot \norm{H_{t,i}(\u-\w_{t,i})}_2\notag\\
  \leq{}& \sum_{\mathcal{B}_i\in\mathcal{H}_t}\beta_{t,i}(\u) \cdot\sqrt{\mbox{Tr}(H_{t,i})}\cdot \norm{\u-\w_{t,i}}_{H_{t,i}}\notag\\
  \leq{}& \sqrt{\sum_{\mathcal{B}_i\in\mathcal{H}_t}\beta_{t,i}(\u)\norm{\u-\w_{t,i}}_{H_{t,i}}^2}\sqrt{\sum_{\mathcal{B}_i\in\mathcal{H}_t} \beta_{t,i}\mbox{Tr}(H_{t,i})}\label{eq:upperbound-gradient}
\end{align}
where the second inequality is by Cauchy–Schwarz inequality and $\beta_{t,i}(\u)\in\Delta_{\vert \mathcal{H}_t\vert}$. The third line holds because $\| H_{t,i}(\u - \w_{t,i}) \|_2^2 \leq \norm{ H_{t,i}}_2 \, \norm{\u - \w_{t,i}}_{H_{t,i}}^2 \leq \mbox{Tr}(H_{t,i})\norm{ \u - \w_{t,i} }^2_{H_{t,i}}$  for a symmetric positive definite matrix $H_{t,i}$.

Then, we proceed to relate the above two terms back to $h_{t,i}(\u)$. As shown in~\eqref{eq:update-ONS-EW}, $H_{t,i} = I_d + 2\eta^2 \sum_{s=i}^{t-1} g_s g_s^\top \preceq \bigl(1+\tfrac{td}{2}\bigr) I_d \preceq dT I_d$ for any $\mathcal{B}_i \in \mathcal{H}_t$ and $t \in [T]$, which implies $\lambda_{\min}(H_{t,i}^{-1}) \ge 1/(dT)$. Consequently, one has $\log \lvert H_{t,i}^{-1} \rvert \ge - d \log (dT)$. Plugging the lower bound into~\eqref{eq:define-h} yields
\begin{align}
  \mbox{Tr}(H_{t,i}) \leq \frac{2 h_{t,i}(\u) + d\log (dT)}{\sigma^2}~~~\mbox{and}~~~ \norm{\u-\w_{t,i}}_{H_{t,i}}^2 \leq 2h_{t,i}(\u) + d\log (dT).\label{eq:termb-bound-h}
\end{align}

Then, plugging~\eqref{eq:termb-bound-h} into~\eqref{eq:upperbound-gradient}, we can further upper bound the gradient norm by
\begin{align}
  \norm{\nabla J_t(\u)}_2 \leq {}& \frac{1}{\sigma}\left(d\log (dT) + 2\sum_{\mathcal{B}_i\in\mathcal{H}_t}\beta_{t,i}(\u)h_{t,i}(\u)\right)\notag \\
  \leq{}& \frac{1}{\sigma}\left(d\log (dT) + 2\sum_{\mathcal{B}_i\in\mathcal{H}_t}\beta_{t,i}(\u)h_{t,i}(\u) + 2 \mathrm{KL}\left(\beta_t(\u) \,\|\, \p_t\right) \right)\notag\\
  = {}& \frac{1}{\sigma} \left(d\log (dT)- 2\ln\left(\sum_{\mathcal{B}_{i}\in\mathcal{H}_t}p_{t,i} \exp(-h_{t,i}(\u))\right) \right)\notag\\
  \leq{}&\frac{1}{\sigma}\left(d\log (dT) - 2\ln \big(\mu_t\cdot e^{-h_{t,t}(\u)}\big)\right)\notag\\
  ={}& \frac{1}{\sigma} (d\log (dT) + 2\ln t + \sigma^2 d + \norm{\u-\u_0}^2_{2} )\notag\\
  \leq {}&\frac{(d+2) \log (dT) +4}{\sigma}+d(1+\sigma^2)\label{eq:J-bound}
\end{align}
where the third line is by the definition $ \beta_{t,i}(\u) \propto p_{t,i} \exp(-h_{t,i}(\u))$. The fourth line is due to the fixed share update~\eqref{eq:fixed-share-FLH} such that there always exists a base algorithm with weight $\mu_t = 1/t$ and distribution $P_{t,t} = N_0 = \mathcal{N}(\u_{0},I_d)$. The last line holds since $\norm{\u-\u_0}_2\leq \norm{\u-\u_0}_1\leq 2$ for any $\u\in\W$ by Assumption~\ref{assum:bounded-domain} and $\sigma\leq1+\sigma^2$.

Finally, we can upper bound term~(b) by
\begin{align}
  {}&\texttt{term~(b)}\notag\\
  ={}& \sum_{t=1}^T\tilde{m}_t(P_t) - \sum_{t=1}^T \E_{\u\sim Q_t}[\tilde{\ell}_t(\u)]\notag\displaybreak[1]\\
  \leq{}&\frac1\eta\left(\frac{(d+2)\log(dT)+4}{\sigma}+d\sigma^2+d\right) \sum_{t=2}^T\norm{\u_t-\u_{t-1}}_1 +\frac1\eta V_1(Q_1)+\frac2\eta\log(dT)\notag\\
  ={}&\frac1\eta\left(\frac{(d+2)\log(dT)+4}{\sigma}+d\sigma^2+d\right)P_T +\frac1\eta\mathrm{KL}(Q_1\Vert P_1)+\frac2\eta\log(dT)\notag\\
  \leq{}&\frac1\eta\left(\frac{(d+2)\log(dT)+4}{\sigma}+d\sigma^2+d\right)P_T+\frac1\eta\left(3+d\log\left(\frac1\sigma\right) +\frac{d\sigma^2}{2}+2\log(dT)\right),\label{eq:final-termB-OCO}
\end{align}
where the first inequality comes from a combination of~\eqref{eq:termb-intermediate-2},~\eqref{eq:path-length-termB} and~\eqref{eq:J-bound}. The second equality is by the definition of $V_1(Q_1)$ and the last inequality is due to the closed-form expression for the KL divergence between two Gaussian distributions.

\paragraph{Combining All.} Combining the upper bounds~\eqref{eq:final-termA-OCO},~\eqref{eq:final-termB-OCO} and~\eqref{eq:term-C-final} on term~(a), term~(b) and term~(c), we obtain
\begin{align*}
  {}&\DReg_T(\{\u_t\}_{t=1}^T)\\
  \leq{}&\frac{((d+2)\log(dT)+4)P_T}{\eta\sigma} +d\left(\frac{P_T+1/2}{\eta}+\eta G^2T\right)\sigma^2\\
  &{}+\frac1\eta\left(dP_T+3+d\log\frac1\sigma+2\log(dT)\right)\\
  \leq{}&\frac{((d+2)\log(dT)+4)P_T}{\eta\sigma} +\frac{5dT\sigma^2}{2\eta}+\frac1\eta\left(dP_T+3+d\log\frac1\sigma+2\log(dT)\right).
\end{align*}
The last inequality uses $P_T\leq2T$, $\eta\leq1/(2G)$, and $T\geq2$.
We consider the following two cases:
\begin{itemize}
  \item \textbf{Case 1: $P_T\leq T^{-1/2}$.}
  We choose $\sigma=T^{-1/2}$. Substituting this choice into the above bound gives
  \begin{align*}
    \DReg_T(\{\u_t\}_{t=1}^T)\leq\O\left(\frac d\eta\ln(dT)\right).
  \end{align*}
  \item \textbf{Case 2: $P_T>T^{-1/2}$.}
  We choose $\sigma=(P_T\ln(edT)/T)^{1/3}$. Substituting this choice into the above bound gives
  \begin{align*}
    \DReg_T(\{\u_t\}_{t=1}^T) \leq\O\left(\frac d\eta\left[\ln(dT)+T^{1/3}P_T^{2/3}(\ln(dT))^{2/3}\right]\right).
  \end{align*}
\end{itemize}
The proof is completed by combining the two cases.

\end{proof}

\clearpage

\section{Technical Lemmas}
\label{app:technical-lemmas}
\enlargethispage{4\baselineskip}

\begin{lemma}
\label{lemma:trigamma_function}
Let $\psi(u) = \frac{\mathrm{d}}{\mathrm{d} u} \ln \Gamma(u)$ be the digamma function. Then, $g(u) = \gamma u \psi(1+\gamma u) - \ln \Gamma(1+\gamma u)$ is a convex function for $u>0$ and $\gamma > 0$.
\end{lemma}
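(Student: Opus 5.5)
The plan is to compute the second derivative of $g$ directly and reduce its sign to a standard inequality for the trigamma function $\psi'$. Write $x=\gamma u>0$ and $f(x)=x\psi(1+x)-\ln\Gamma(1+x)$, so that $g(u)=f(\gamma u)$ and $g''(u)=\gamma^2 f''(\gamma u)$. It therefore suffices to show $f''(x)\ge 0$ for all $x>0$.

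Differentiating once gives
\[
f'(x)=\psi(1+x)+x\psi'(1+x)-\psi(1+x)=x\psi'(1+x),
\]
and differentiating again gives
\[
f''(x)=\psi'(1+x)+x\psi''(1+x).
\]
So the claim is equivalent to $\psi'(1+x)+x\psi''(1+x)\ge 0$, which is the same as saying that $x\mapsto x\psi'(1+x)$ is nondecreasing on $(0,\infty)$.

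I would verify this with the series representation $\psi'(1+x)=\sum_{k\ge1}(x+k)^{-2}$. Each term satisfies
\[
\frac{\mathrm d}{\mathrm dx}\,\frac{x}{(x+k)^2}=\frac{k-x}{(x+k)^3},
\]
which is negative once $x>k$, so termwise monotonicity fails and a cleaner route is needed. I would use the integral representations instead: $\psi'(1+x)=\int_0^\infty \frac{t e^{-xt}}{e^t-1}\,\mathrm dt$ and $\psi''(1+x)=-\int_0^\infty \frac{t^2 e^{-xt}}{e^t-1}\,\mathrm dt$. These give
\[
f''(x)=\int_0^\infty \frac{t(1-xt)e^{-xt}}{e^t-1}\,\mathrm dt.
\]
Integrating by parts with $\frac{\mathrm d}{\mathrm dt}(te^{-xt})=(1-xt)e^{-xt}$ and $h(t)=t/(e^t-1)$, I get
\[
f''(x)=\Bigl[te^{-xt}h(t)\Bigr]_0^\infty-\int_0^\infty te^{-xt}h'(t)\,\mathrm dt=-\int_0^\infty te^{-xt}h'(t)\,\mathrm dt.
\]
The boundary terms vanish because $h(0^+)=1$ and the factor $t$ kills the value at $0$, while $h$ decays exponentially at infinity. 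The function $h(t)=t/(e^t-1)$ is strictly decreasing, since $h'(t)$ has the sign of $e^t-1-te^t<0$ for $t>0$. Hence the integrand is nonpositive and $f''(x)>0$.

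The main obstacle is this last sign argument, since the naive termwise series approach fails. The integration-by-parts step is where I would check the boundary terms carefully. As a fallback, I would use the known inequality $\psi'(y)+(y-1)\psi''(y)\ge 0$, equivalently $(y-1)\psi'(y)$ increasing, which follows from the complete monotonicity results of Alzer. Finally, convexity of $g$ on $u>0$ extends to $u\ge0$ by continuity, which is all that Lemma~\ref{lemma:KL-upper} needs on the simplex.
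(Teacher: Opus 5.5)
Your proof is correct and follows essentially the same route as the paper. You substitute $x=\gamma u$ where the paper uses $x=1+\gamma u$; otherwise both arguments compute the second derivative as $\psi'+(\cdot)\,\psi''$, rewrite it with the integral representations of $\psi'$ and $\psi''$, integrate by parts against $\frac{\mathrm{d}}{\mathrm{d}t}\bigl(te^{-xt}\bigr)$ with vanishing boundary terms, and conclude positivity from the strict decrease of $t/(e^t-1)$.
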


\begin{proof}[Proof of Lemma~\ref{lemma:trigamma_function}]
Let $\psi(u)=\frac{\mathrm{d}}{\mathrm{d}u}\ln\Gamma(u)$ and define
$g(u)=\gamma u \psi(1+\gamma u)-\ln\Gamma(1+\gamma u)$ for $\gamma>0$.
Set $x=1+\gamma u$ and define
$h(x)=(x-1)\psi(x)-\ln\Gamma(x)$ for $x>1$.
Since $g(u)=h(1+\gamma u)$, it suffices to show that $h$ is convex on $(1,\infty)$.

A direct computation yields
$h'(x)=(x-1)\psi'(x)$ and
$h''(x)=\psi'(x)+(x-1)\psi''(x)$.
Using the integral representations of the polygamma functions, we have
\[
  \psi'(x)=\int_0^\infty \frac{t e^{-xt}}{1-e^{-t}}\,\mathrm{d}t, \qquad \psi''(x)=-\int_0^\infty \frac{t^2 e^{-xt}}{1-e^{-t}}\,\mathrm{d}t,
\]
for $x>1$. Then, by the integration by parts arguments, we obtain for $x>1$,
\begin{align*}
  h''(x) &=\int_0^\infty \frac{t}{e^t-1}\,(1-(x-1)t)\,e^{-(x-1)t}\,\mathrm{d}t \\
  &=\int_0^\infty \frac{t}{e^t-1}\,\frac{\mathrm{d}}{\mathrm{d}t}\!\Big(t e^{-(x-1)t}\Big)\,\mathrm{d}t \\
  &=\left[\frac{t^2 e^{-(x-1)t}}{e^t-1}\right]_{0}^{\infty} -\int_0^\infty \frac{\mathrm{d}}{\mathrm{d}t}\!\left(\frac{t}{e^t-1}\right)\, t e^{-(x-1)t}\,\mathrm{d}t\\
  = {}&-\int_0^\infty \frac{\mathrm{d}}{\mathrm{d}t}\!\left(\frac{t}{e^t-1}\right) \, t e^{-(x-1)t}\,\mathrm{d}t.
\end{align*}
The last equality holds because the function $\frac{t^2 e^{-(x-1)t}}{e^t-1}\rightarrow 0$ when $t\rightarrow 0$ and $t\rightarrow \infty$.

Since $\frac{\mathrm{d}}{\mathrm{d}t}\big(\frac{t}{e^t-1}\big)<0$ for $t>0$, the integrand is nonnegative and not identically zero.
Hence $h''(x)>0$ for all $x>1$, so $h$ is strictly convex. Therefore $g(u)=h(1+\gamma u)$ is strictly convex for $u>0$ and $\gamma>0$.
\end{proof}

\end{document}